\documentclass{article}

\usepackage[preprint]{neurips_2026}

\usepackage[utf8]{inputenc} 
\usepackage[T1]{fontenc}    
\usepackage{hyperref}       
\usepackage{url}            
\usepackage{booktabs}       
\usepackage{amsfonts}       
\usepackage{nicefrac}       
\usepackage{amsmath,amssymb,amsthm,mathtools,bm}
\usepackage{microtype}      
\usepackage{lipsum}
\usepackage{graphicx}
\usepackage{wrapfig}
\usepackage{cleveref}
\graphicspath{ {./images/} }

\usepackage[compact]{titlesec} 
\titlespacing{\section}{0pt}{0.5ex}{0ex} 
\titlespacing{\subsection}{0pt}{0.5ex}{0ex} 
\titlespacing{\subsubsection}{0pt}{0.5ex}{0ex}

\usepackage{thmtools}
\usepackage[framemethod=TikZ]{mdframed}

\mdfdefinestyle{thmstyle}{
  linecolor=blue!40!black,
  backgroundcolor=blue!3,
  linewidth=1.2pt,
  innertopmargin=6pt,
  innerbottommargin=6pt,
  innerleftmargin=8pt,
  innerrightmargin=8pt,
  skipabove=8pt,
  skipbelow=4pt,
  roundcorner=2pt,
}
\mdfdefinestyle{defstyle}{
  linecolor=teal!60!black,
  backgroundcolor=teal!4,
  linewidth=1pt,
  innertopmargin=6pt,innerbottommargin=6pt,
  innerleftmargin=8pt,innerrightmargin=8pt,
  skipabove=8pt,skipbelow=4pt,roundcorner=2pt,
}
\mdfdefinestyle{propstyle}{
  linecolor=orange!70!black,
  backgroundcolor=orange!4,
  linewidth=1pt,
  innertopmargin=6pt,innerbottommargin=6pt,
  innerleftmargin=8pt,innerrightmargin=8pt,
  skipabove=8pt,skipbelow=4pt,roundcorner=2pt,
}
\mdfdefinestyle{corstyle}{
  linecolor=purple!60!black,
  backgroundcolor=purple!4,
  linewidth=1pt,
  innertopmargin=6pt,innerbottommargin=6pt,
  innerleftmargin=8pt,innerrightmargin=8pt,
  skipabove=8pt,skipbelow=4pt,roundcorner=2pt,
}
\mdfdefinestyle{algostyle}{
  linecolor=gray!60,
  backgroundcolor=gray!6,
  linewidth=0.8pt,
  innertopmargin=6pt,innerbottommargin=6pt,
  innerleftmargin=8pt,innerrightmargin=8pt,
  skipabove=8pt,skipbelow=4pt,
}

\newtheorem{definition}{Definition}
\newtheorem{proposition}{Proposition}
\newtheorem{theorem}{Theorem}
\newtheorem{corollary}{Corollary}

\declaretheorem[style=remark,name=Constraint,numberwithin=section]{constraint}

\newcommand{\R}{\mathbb{R}}

\newcommand{\calK}{\mathcal{K}}
\newcommand{\calG}{\mathcal{G}}

\newcommand{\calV}{\mathcal{V}}
\newcommand{\calE}{\mathcal{E}}
\newcommand{\Thetaset}{\Theta}

\newcommand{\GP}{\mathcal{GP}}
\newcommand{\E}{\mathbb{E}}
\newcommand{\Prob}{\mathbb{P}}
\newcommand{\TV}{d_{\mathrm{TV}}}
\newcommand{\bbm}{\mathbf{1}}
\newcommand{\argmax}{\operatorname*{arg\,max}}
\newcommand{\argmin}{\operatorname*{arg\,min}}

\newcommand{\Hb}{H_{\mathrm{b}}}
\newcommand{\etabar}{\bar{\eta}}
\newcommand{\zetaT}{\zeta_T}
\newcommand{\regret}{\mathcal{R}}
\newcommand{\sigmaR}{\sigma_{\mathcal{R}}}

\usepackage{algorithm}
\usepackage{algpseudocode}
\algnewcommand{\LineComment}[1]{\State \(\triangleright\) \textit{#1}}

\usepackage{booktabs}
\usepackage{array}
\usepackage{multirow}
\usepackage{xcolor}
\usepackage{colortbl}

\usepackage{graphicx}
\usepackage{subcaption}

\usepackage{enumitem}
\setlist[itemize]{leftmargin=1.5em,itemsep=2pt,topsep=3pt}
\setlist[enumerate]{leftmargin=1.5em,itemsep=2pt,topsep=3pt}

\title{ADKO: Agentic Decentralized Knowledge Optimization\thanks{Code: \url{https://github.com/lucasrillo/adko}}}

\author{
Lucas Nerone Rillo$^{1}$ \\
\texttt{lucasnr@iastate.edu}
\And
Zhanhong Jiang$^{1}$ \\
\texttt{zhjiang@iastate.edu}
\And
Nastaran Saadati$^{1}$ \\
\texttt{nsaadati@iastate.edu}
\And
Aditya Balu$^{1}$ \\
\texttt{baditya@iastate.edu}
\And
Baskar Ganapathysubramanian$^{1}$ \\
\texttt{baskarg@iastate.edu}
\And
Chinmay Hegde$^{2}$ \\
\texttt{chinmay.h@nyu.edu}
\And
Soumik Sarkar$^{1}$ \\
\texttt{soumiks@iastate.edu}
\\[1em]
$^{1}$ Iowa State University
\qquad
$^{2}$ New York University
}

\begin{document}
\maketitle
\begin{abstract}

We present \textbf{Agentic Decentralized Knowledge Optimization (ADKO)}, a framework for collaborative black-box optimization across autonomous agents that achieves sample efficiency, privacy preservation, heterogeneous-objective handling, and communication efficiency. Each agent maintains a private Gaussian Process (GP) surrogate trained on local data and communicates only through \emph{knowledge tokens}—compact, lossy summaries containing directional signals, advantage scores, and optional language-model (LM) insights—without sharing raw data or model parameters. ADKO unifies GP-Upper Confidence Bound (GP-UCB), parallel Bayesian optimization, decentralized learning, and LM-guided discovery. We provide the first formal analysis of dual information loss: token compression, quantified via mutual-information–based fidelity, and LM approximation error, decomposed into bias and stochastic noise. Our main result shows cumulative regret decomposes into GP error, LM bias, LM noise, and compression loss, with necessary and sufficient conditions for sublinear regret. We also propose fidelity-aware token pruning to preserve high-information tokens under memory budget. Experiments on neural architecture search and scientific discovery validate the theory and show consistent improvements over strong baselines.
\end{abstract}


\begin{figure}[htbp]
    \centering
    \includegraphics[width=1\linewidth]{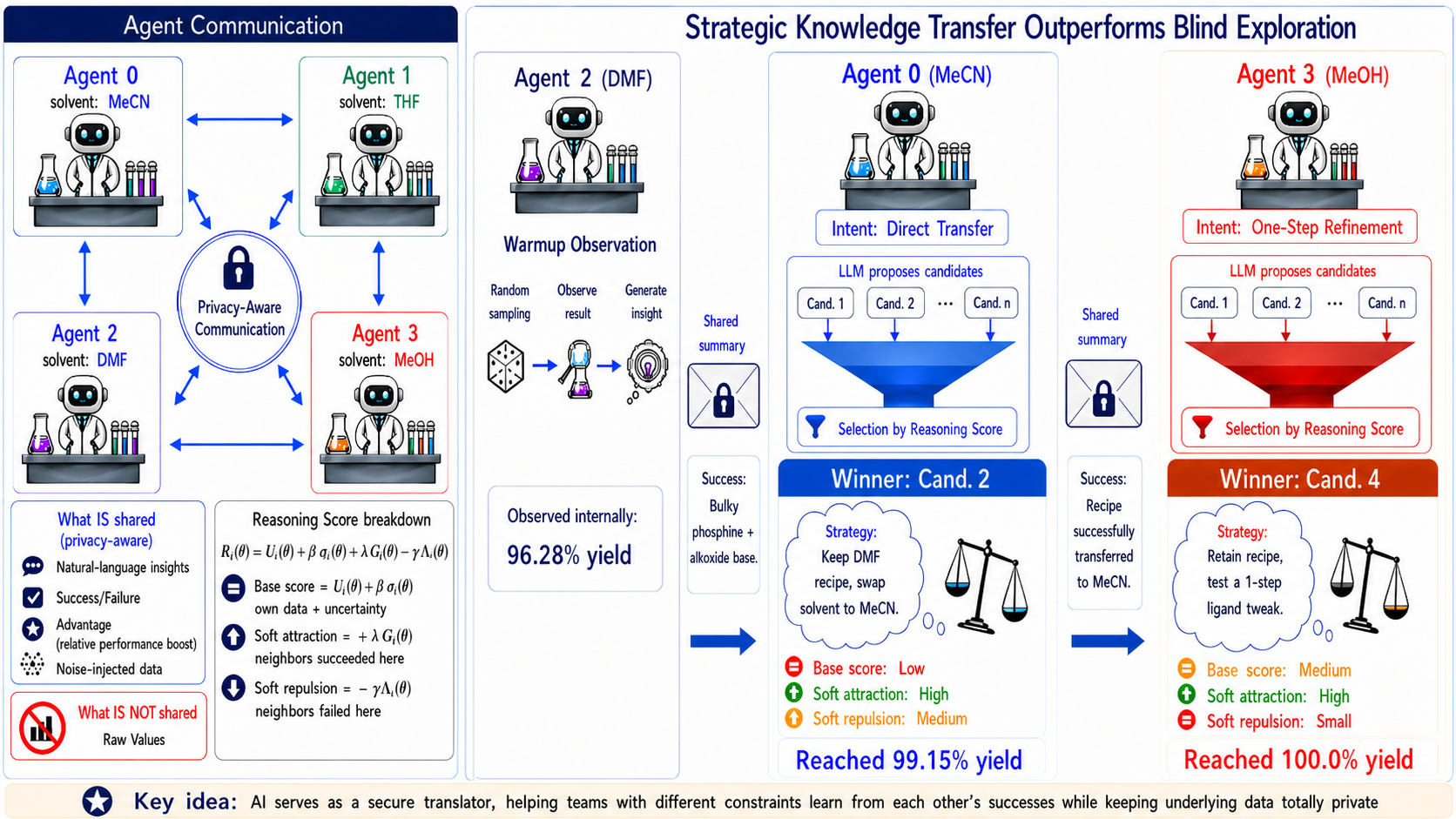}
    \caption{Illustrative example of decentralized knowledge transfer in ADKO for heterogeneous chemical optimization. Agents operating under different solvent constraints exchange only privacy-aware knowledge tokens rather than raw experimental data. The example shows how a high-yield reaction discovered by one agent is semantically transferred and refined by neighboring agents through LM-guided reasoning and token-based communication, enabling strategic collaboration that outperforms blind exploration while preserving data privacy.}
    \label{fig:lab_protocol}
\end{figure}

\section{Introduction}
Optimization of expensive black-box functions is a central challenge across science and engineering, traditionally studied in the single-agent setting via Bayesian Optimization (BO)~\cite{movckus1974bayesian,jones1998efficient,srinivas2009gaussian}.
However, many real-world problems are inherently distributed: multiple organizations conduct parallel experiments on shared design spaces~\cite{dow2011prototyping}, robotic systems explore partially overlapping environments~\cite{solanas2004coordinated}, hospitals optimize treatments under data-sovereignty constraints~\cite{hamiti2024data}, and ML teams search architectures while keeping datasets local~\cite{sparks2015automating}. In these settings, agents hold private data that cannot be shared, yet could benefit significantly from coordination without a central authority. This raises a fundamental question: \textit{what is the minimal information agents should exchange to approach the performance of a fully centralized counterpart?} We answer this with the concept of a \textbf{knowledge token}—a compact, privacy-preserving, and information-theoretically grounded summary of experimental outcomes—and the \textbf{Agentic Decentralized Knowledge Optimization (ADKO)} framework for reasoning over such tokens. Leveraging pre-trained language models (LMs) as prior knowledge bases, ADKO enables agents to share semantically rich insights (e.g., mechanistic explanations or empirical heuristics) that others can interpret and exploit, achieving effective collaboration without exposing raw data.

Despite significant recent progress on individual components, such as distributed
BO~\cite{gonzalez2016batch}, privacy-preserving BO~\cite{kusner2015differentially},
federated learning~\cite{mcmahan2017communication}, and language model (LM)-guided
optimization~\cite{yang2023large,liu2024large}, few existing frameworks \textit{simultaneously}
handle: (1) expensive black-box evaluations with per-agent budget constraints;
(2) hard privacy preventing any raw-data sharing; (3) heterogeneous private
objectives from different data distributions; and (4) natural-language semantic
communication via a language model.
More importantly, no existing theoretical analysis addresses the \emph{joint
effect} of token compression and LM approximation error on regret, which are the two
information losses that are both unavoidable and coupled in the ADKO setting.
The information-loss problem is twofold and inherently coupled. First, compressing observations into binary tokens (e.g., \textsc{SUCCESS}/\textsc{FAIL}) irreversibly discards magnitude information, with fidelity depending on distance from a pre-defined contextual baseline: near-boundary outcomes carry little information, while extreme outcomes are nearly lossless. Second, when an LM reasons over these tokens, it introduces approximation error by relying on a prior that may misalign with the true landscape. These losses compound: high-fidelity tokens are ineffective under a miscalibrated LM, and accurate LMs cannot compensate for low-fidelity inputs. Our theory explicitly characterizes both sources and their interaction.


\textbf{Summary of Contributions.}
We present ADKO, a modular framework for collaborative black-box optimization, as shown in Figure~\ref{fig:lab_protocol}. ADKO enables autonomous agents to optimize heterogeneous objectives through a private-surrogate architecture that replaces raw data sharing with privacy-preserving, high-fidelity knowledge tokens. Our work bridges the gap between decentralized Bayesian Optimization and Large Language Model (LM) reasoning, providing a rigorous theoretical foundation for how information loss at the edge impacts global convergence. Specifically the contributions can be summarized as:
\textbf{Framework.} We introduce ADKO, combining private Gaussian Process (GP) surrogates, graph-structured token communication, and an optional LM reasoning module into a unified loop.
The framework is explicitly modular: any component can be ablated, with clear
theoretical predictions about the resulting performance degradation. Table~\ref{tab:comparison} in Section~\ref{sec:methodological_comparison} summarizes the comparison. \textbf{Information-loss theory.} We provide the first formal treatment of token compression loss in
decentralized BO, introducing token fidelity $\eta_k \in [0,1]$ (Definition~\ref{def:fidelity}) and decomposing LM approximation error into systematic bias bounded by the total variation distance and stochastic noise that decays with token accumulation (Proposition~\ref{prop:lmerror}).
\textbf{Sublinear regret \& fidelity-aware pruning.} We prove a cumulative regret bound (Theorem~\ref{thm:main}) that additively separates four sources of regret: GP error, LM bias, LM noise, and token compression, showing sublinear regret
(Corollary~\ref{cor:sublinear}). In addition, we prove that fidelity-aware pruning maintains average fidelity approaching one under any fixed communication budget (Proposition~\ref{prop:pruning}). \textbf{Empirical validation.}
Experiments on neural architecture search and scientific discovery validate the theory and demonstrate practical advantages, while selectively disabling the LM module in some settings isolates the contribution of token-based collaboration apart from semantic reasoning.

\section{Related Work}
\label{sec:related_work}

\textbf{Bayesian Optimization.}
BO~\cite{movckus1974bayesian,brochu2010tutorial} utilizes Gaussian Process (GP) surrogates and acquisition functions like UCB~\cite{auer2002using,srinivas2009gaussian} achieving $O(\sqrt{T\zetaT})$ ($\zetaT$ is the
\emph{maximal information gain}) regret~\cite{srinivas2009gaussian,pmlr-v130-vakili21a}, yet standard extensions for multi-fidelity BO~\cite{kandasamy2017multi}, high-dimensional BO~\cite{wang2016bayesian,eriksson2019scalable}, and parallel and distributed BO~\cite{gonzalez2016batch,ginsbourger2010kriging,NIPS2012_05311655, azimi2012hybrid} largely ignore the privacy and communication constraints of decentralized systems. While asynchronous and consensus-based methods~\cite{desautels2014parallelizing,yue2025collaborative} offer partial decentralization, they lack mechanisms for communication-induced information loss. ADKO overcomes these limitations by replacing raw data sharing with token-based communication, explicitly modeling compression loss in regret analysis. By encoding outcomes into ordinal tokens relative to contextual baselines, ADKO provides a semantic privacy framework, which is distinct from noise-based differential privacy \cite{kusner2015differentially} and aligns with federated quantization~\cite{alistarh2017qsgd} through a robust, interpretable mutual-information lens.


\textbf{Distributed Learning.}
Federated learning (FL)~\cite{mcmahan2017communication,li2020federated} preserves data locality but struggles with non-IID distributions~\cite{karimireddy2020scaffold} a challenge analogous to the heterogeneous objectives addressed by ADKO. However, ADKO diverges from FL in three fundamental ways: the "model" is a non-shareable physical or computational process, agents pursue distinct objectives making gradient aggregation ineffective, and coordination is achieved via single binary tokens rather than high-frequency gradient exchange. While active FL~\cite{goetz2019active,10197242,10833800} focuses on label selection, ADKO’s acquisition function prioritizes experimental design in expensive black-box settings. Furthermore, unlike decentralized gradient methods~\cite{nedic2009distributed,jiang2017collaborative,10542323,10251949} that rely on sharing parameters or gradients for consensus, or multi-agent RL~\cite{lowe2017multi,NEURIPS2022_9c1535a0} which is too sample-intensive for costly evaluations, ADKO occupies a unique niche by providing provable convergence guarantees under strict privacy constraints without gradient access.


\textbf{Language Models as Optimizers and Reasoners.} 
Emerging research utilizes LMs as optimizers~\cite{yang2023large,chen2023evoprompting,romera2024mathematical} or surrogates~\cite{liu2024large,BRAHMACHARY2025129272,muller2021transformers} yet these methods often discard uncertainty quantification and lack theoretical regret analysis. ADKO offers a complementary approach: it retains Gaussian Processes for principled uncertainty on private data while leveraging LMs for domain-informed proposals and token encoding. Unlike LLM-guided scientific agents~\cite{boiko2023emergent,bran2023chemcrow,ren2026scientificintelligencesurveyllmbased,Li_Li_Wu_Liao_HAO_Shao_Xu_2026,hu2026oscagentacceleratingdiscoveryorganic,swanson2025virtual,jin2025gridmind} that lack convergence proofs, ADKO provides a multi-agent framework with formal guarantees.

\section{Problem Formulation and Preliminaries}
\label{sec:problem}
\textbf{General Setting.}
Consider $N$ autonomous agents indexed by $i \in [N] = \{1,\ldots,N\}$
optimizing over a shared compact design space $\Thetaset \subset \R^d$.
Agents are connected by an undirected communication graph
$\calG = (\calV, \calE)$ with $\calV = [N]$ and $\calE \subseteq \calV \times \calV$.
The neighborhood of agent $i$ is
$\mathcal{N}_i = \{j : (i,j) \in \calE\} \cup \{i\}$.
The graph $\calG$ is assumed to be connected throughout the rest of the paper; its algebraic connectivity
(Fiedler value) $\lambda_2(L(\calG)) > 0$ is the key topological quantity, where $L(\cdot)$ is the graph Laplacian. We next define the local objective and the relevant observations only known to a specific agent $i$.

\begin{definition}[Private Objectives and Observations]
\label{def:objectives}
Each agent $i$ has a private objective $f_i : \Thetaset \to \R$ drawn from a
shared Gaussian Process prior $\GP(0, g)$ with Mat\'ern-$5/2$ kernel $g$.
Agent $i$ observes noisy evaluations through all time $t$:
$
  y_i^t = f_i(\theta_i^t) + \varepsilon_i^t,
  \varepsilon_i^t \sim \mathcal{N}(0, \sigma^2),
  \text{i.i.d.\ over } t,
$
where $\sigma^2$ is variance. It maintains private dataset $D_i^t = \{(\theta_i^s, y_i^s)\}_{s=1}^t$.
No agent ever accesses $D_j^t$ for $j \neq i$.
\end{definition}

The objectives $\{f_i\}$ are (possibly) \emph{heterogeneous}: while drawn from the same
GP prior (sharing structural regularity), they need not be equal. With this in hand, we are now ready to formally define the global objective, which is more generalized than those in existing works~\cite{nedic2009distributed,jiang2017collaborative}.

\begin{definition}[Global Objective]
\label{def:global}
The global objective $F : \Thetaset \to \R$ aggregates individual objectives:
$
  F(\theta) = \Phi\bigl(f_1(\theta), \ldots, f_N(\theta)\bigr),
$
where $\Phi : \R^N \to \R$ is a known aggregation operator.
Standard choices include:
\textit{mean} $F(\theta) = \frac{1}{N}\sum_i f_i(\theta)$;
\textit{worst-case} $F(\theta) = \min_i f_i(\theta)$ (robustness);
\textit{weighted sum} $F(\theta) = \sum_i w_i f_i(\theta)$, $w_i>0$ and $\sum_iw_i=1$. 
The global optimum is $\theta^* = \argmax_{\theta \in \Thetaset} F(\theta)$.
\end{definition}

Many real-world optimization tasks such as distributed sensing, collaborative model tuning, and privacy-sensitive scientific discovery are inherently decentralized, with agents unable to share raw data and operating under strict communication limits. ADKO is designed for this setting: each agent maintains a private GP surrogate to protect local data, while graph-structured knowledge tokens enable efficient, bounded information exchange. This design ensures both privacy preservation and communication efficiency, motivating the formal constraints introduced next.
\begin{constraint}[Hard Privacy]
\label{con:privacy}
Agent $i$ may never transmit $D_i^t$, any element $(\theta^t_i, y_i^t)$, any
GP parameter inferred from $D_i^t$, or any sufficient statistic of $D_i^t$,
to any other agent. The only permitted transmission is a knowledge token
$k_i^t$ defined in Section~\ref{sec:algorithm}.
\end{constraint}

\begin{constraint}[Communication Budget]
\label{con:budget}
At each round $t$, agent $i$ may transmit at most one token $k_i^t$
of bounded size ($\leq B$ bits) to each neighbour
$j \in \mathcal{N}_i$. Token memory per agent is bounded:
$|\calK_i^t| \leq B$ at all times, where $\mathcal{K}^t_i$ is the token memory buffer specific to agent $i$ at $t$.
\end{constraint}

\textbf{Gaussian Process Background.}
Given prior $\GP(0,g)$ and $n$ observations
$D = \{(\theta_j, y_j)\}_{j=1}^n$, the posterior is Gaussian with:
\begin{align*}
  \mu_n(\theta) = g_n(\theta)^\top (\mathfrak{G}_n + \sigma^2 I)^{-1} y_n,
  \sigma_n^2(\theta) = g(\theta,\theta) - g_n(\theta)^\top
    (\mathfrak{G}_n + \sigma^2 I)^{-1} g_n(\theta),
\end{align*}
where $g_n(\theta) = [g(\theta_1,\theta),\ldots,g(\theta_n,\theta)]^\top$ are the kernel functions,
and $(\mathfrak{G}_n)_{ij} = g(\theta_i,\theta_j)$ with $\mathfrak{G}_n$ being the covariance matrix.
The \emph{maximal information gain} after $T$ evaluations is
$\zetaT = \max_{A \subseteq \Thetaset,\,|A|=T} I(f;\, y_A)$,
which satisfies $\zetaT = O\bigl(T^{d/(2\nu+d)}(\log T)^{d+1}\bigr)$
for Mat\'ern-$\nu$ kernels~\cite{srinivas2009gaussian}. $I(\cdot,\cdot)$ is the mutual information.
For Mat\'ern-$5/2$ ($\nu=5/2$), this is $o(T)$ for all $d \geq 1$.

\textbf{Regret Criterion.}
The per-round simple regret of agent $i$ at round $t$ is:
$
  r_i^t = F(\theta^*) - F(\theta_i^t),
  \theta_i^t = \argmax_{\theta \in \Theta} \tilde{R}_i(\theta),$ where $\tilde{R}_i$ is the degraded reasoning score defined in Section~\ref{sec:degraded_reasoning_score}.
The cumulative $N$-agent regret is
$\regret_N^T = \sum_{i=1}^N \sum_{t=1}^T r_i^t$.
Sublinear regret $\regret_N^T = o(T)$ implies convergence:
$\bar{r} = \regret_N^T/T \to 0$.

\section{The ADKO Algorithm}\label{sec:algorithm}
In this section, we detail the ADKO algorithm with definitions of knowledge token and reasoning score. We also discuss how ADKO enables diverse agents to collaborate and learn from each other.

\textbf{Knowledge Token.}
After executing experiment $\theta_i^t$ and observing $y_i^t$,
agent $i$'s LM encodes the outcome into a structured token:
$
  k_i^t = \{s_i^t,\; c_i^t,\; z_i^t,\;\varphi(\theta_i^t)\},
$
where:
$s_i^t \in \{\textsc{success},\textsc{fail}\}$:
\textit{directional signal}, $s_i^t = \textsc{success}$ iff $y_i^t \geq b^i_t$, where $b^i_t>0$ is a contextual baseline.
This binary quantization is the primary source of compression loss.
$c_i^t \in [0,1]$: \textit{advantage score}, $c_i^t = |y_i^t - b^i_t| / \|y - b^i_t\|_{\max}$.
It acts as an evidence of strength, measuring how decisively an observation supports a success or failure signal relative to a contextual baseline.
High advantage score means the observation is far from the baseline $b^i_t$---a nearly lossless encoding.
$z_i^t$: optional LM-generated natural-language insight (e.g., ``high temperature destabilizes this material class'').
$\varphi(\theta_i^t)$: an embedding of
the design point with $\varphi(\cdot)$ an embedding function (e.g., differential privacy, quantization).

The knowledge token $k^t_i$ is the core abstraction enabling ADKO to achieve decentralized collaboration under strict privacy and communication constraints. By compressing each observation into a binary outcome with an advantage score and optional semantic insight, it acts as a minimal, privacy-preserving surrogate that prevents raw data reconstruction while still enabling effective coordination. The inclusion of language-based insights further elevates tokens from numeric summaries to semantic carriers, allowing agents to leverage domain knowledge without sharing models or data. Knowledge tokens turn decentralized optimization from parameters exchange into structured knowledge sharing—a minimal yet sufficient interface for provable, communication‑efficient collaboration.

\textbf{Reasoning Score.}
The reasoning score is the engine of each agent's autonomous
decision-making. Inspired by~\cite{srinivas2009gaussian,kennedy1995particle}, it synthesizes four distinct streams of evidence into a
single number that answers the question: \emph{``of all the candidate
points I could experiment on next, which one is worth the most?''} Formally, it is defined as follows:
\begin{equation}
  \label{eq:reasoning}
  R_i(\theta)
  = \underbrace{U_i(\theta)}_{\substack{\text{What I expect}\\\text{from my own data}}}
  + \underbrace{\beta\,\sigma_i(\theta)}_{\substack{\text{How uncertain}\\\text{I am personally}}}
  + \underbrace{\lambda\, G_i(\theta)}_{\substack{\text{Do my neighbours}\\\text{succeed here?}}}
  - \underbrace{\gamma\,\Lambda_i(\theta)}_{\substack{\text{Do my neighbours}\\\text{fail here?}}}
\end{equation}

Each term in the reasoning score plays a distinct epistemic role, combining private learning with social influence. The first two terms, $U_i(\theta)=\mu_i(\theta)$ and $\beta\,\sigma_i(\theta)$, correspond to standard GP-UCB~\cite{srinivas2009gaussian}: the posterior mean drives private exploitation based on local data, while the variance promotes private exploration in uncertain regions. The remaining terms introduce collaboration. The success term $\lambda G_i(\theta)$, where $G_i(\theta)
  = \sum_{j \in \mathcal{N}_i} \pi_{ij}
    \sum_{k \in \calK_j} c_k\, S(\theta, \theta_k)\,
    \bbm[s_k = \textsc{success}]$, aggregates high-advantage neighbor successes, \emph{softly attracting} the agent toward promising regions without sharing raw data. $S(\theta,\theta_k) =
\exp\!\bigl(-\|\varphi(\theta)-\varphi(\theta_k)\|^2/\sigma_s^2\bigr)$
is a similarity kernel and $\pi_{ij}$ are graph-normalized weights, determined by the connected graph itself. $\pi_{ij}\in\Pi\in\mathbb{R}^{N\times N}$ where $\Pi$ is the mixing matrix governing the convergence speed. Please see Section~\ref{sec:connection_fiedler_spectral_gap} for the connection between Fiedler value $\lambda_2(\mathcal{G})$ and the spectral gap $\rho(\Pi)$ associated with $\Pi$. Conversely, the failure term $\Lambda_i(\theta)$ mirrors $G_i$ penalizes regions where neighbors have failed, creating a shared "\emph{avoidance map}" that prevents redundant evaluations. $\Lambda_i(\theta)
  = \sum_{j \in \mathcal{N}_i} \pi_{ij}
    \sum_{k \in \calK_j} c_k\, S(\theta, \theta_k)\,
    \bbm[s_k = \textsc{fail}]$. Together, these terms enable agents to leverage collective experience while retaining autonomy and privacy. The weights $\beta, \lambda, \gamma$ control the balance between exploration, social attraction, and caution, while the kernel bandwidth $\sigma_s$ determines how far peer influence propagates in the design space. We refer interested readers to Section~\ref{analysis_reasoning} for more nuanced analysis and discussion on the reasoning score.
\begin{wrapfigure}{r}{0.5\linewidth}
    \centering
    \includegraphics[width=1\linewidth]{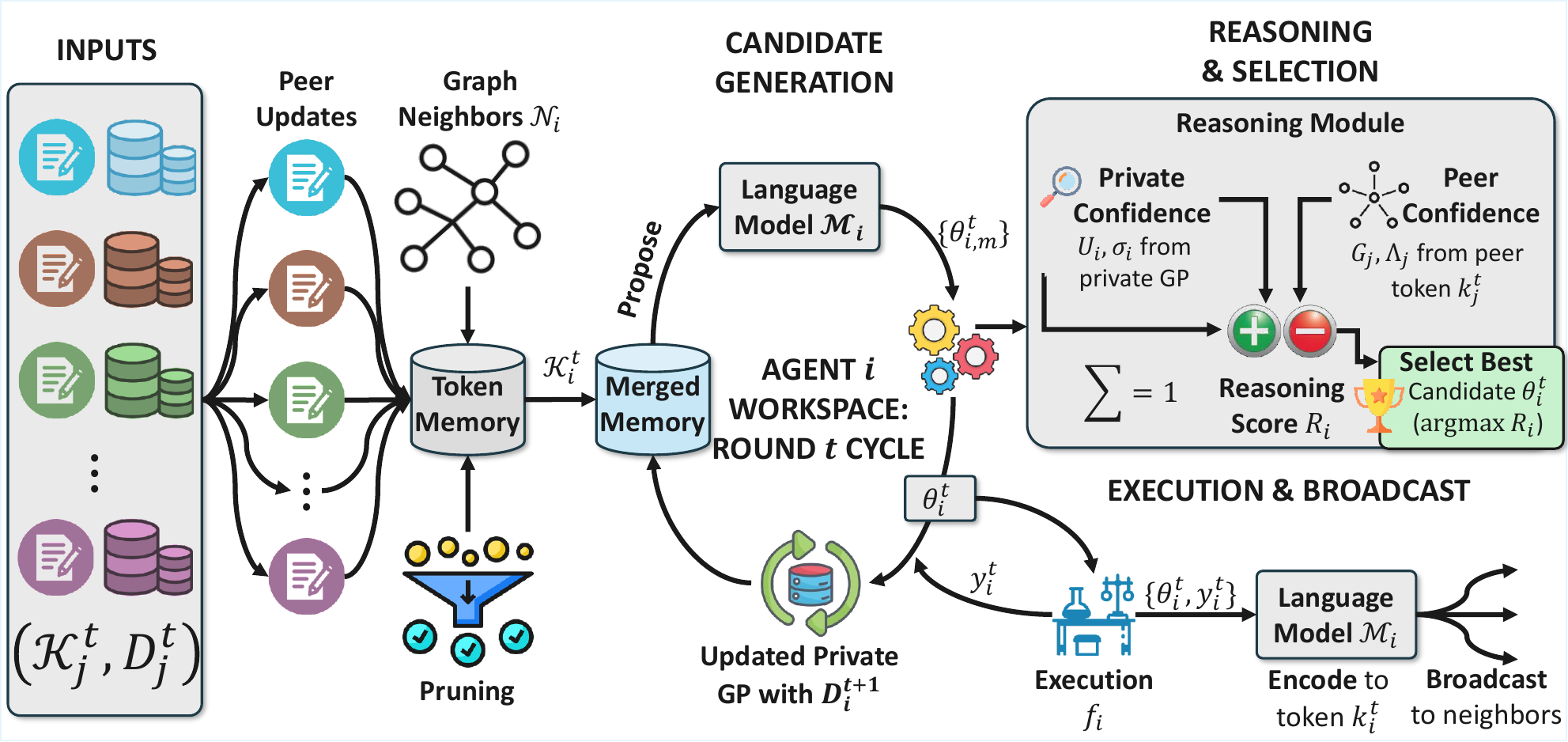}
    \caption{ADKO single agent round: Each agent iteratively aggregates neighbor tokens into a local buffer involving its own memory token, prunes to budget, proposes candidates via LM, and evaluates them using private advantage score and peer info. The best candidate is executed, with its outcome encoded into a compact privacy token and broadcast. The agent updates its private model, repeating the cycle for decentralized, communication-efficient collaboration without raw data sharing.}
    \label{fig:adko_diagram}
    \vspace{-7pt}
\end{wrapfigure}
\textbf{Full Algorithm.}
Algorithm~\ref{alg:adko} is not just an optimization routine but a protocol for decentralized collective intelligence under privacy and communication constraints, as shown in Figure~\ref{fig:adko_diagram}. Each agent encodes its experimental outcome $(\theta^t_i,y^t_i)$ into a knowledge token $k^t_i$ (Step 6)—a compact representation of success/failure ($s_i^t$), advantage score ($c^t_i$), and optional semantic insight—which is broadcast to neighbors and propagates across the network (Step 7). Through aggregation and fidelity-aware pruning (in Algorithm~\ref{alg:pruning} in Section~\ref{sec:token_pruning}), agents maintain a curated shared memory of high-quality information. This memory shapes decision-making by biasing candidate generation and modulating the acquisition function via success-attraction and failure-avoidance, enabling knowledge transfer without sharing raw data (Steps 2-4). Meanwhile, each agent updates its private GP with new observations, reinforcing useful signals or correcting misleading ones, ensuring robustness to heterogeneity (Step 8). From an individual perspective, the process remains \emph{fully autonomous}: the agent simply follows its own acquisition scores, subtly influenced by peer experience. Thus, collaboration emerges naturally from the token protocol, rather than explicit coordination, making ADKO a genuinely agentic and scalable approach to distributed optimization. Please check Section~\ref{sec:token_pruning} for more detail about token pruning.

\begin{algorithm}[htbp]
\small
\caption{ADKO --- Single Agent $i$, Round $t$}
\label{alg:adko}
\begin{algorithmic}[1]
  \Require Token memory $\calK_i^t$, GP dataset $D_i^t$,
           graph neighbors $\mathcal{N}_i$, language model $\mathcal{M}_i$
  \State \textbf{[Token Aggregation]}
    $\calK_i^t \leftarrow \textsc{merge}(\calK_i^{t-1},\;\{k_j^{t-1}\}_{j \in \mathcal{N}_i})$
  \State \hspace{1.2em} Apply fidelity-aware pruning (Algorithm~\ref{alg:pruning}) if $|\calK_i^t| > B$
  \State \textbf{[Candidate Generation via LM]}
    $\{\theta_{i,m}^t\}_{m=1}^M \leftarrow \mathcal{M}_i.\textsc{Propose}(\calK_i^t, \Thetaset)$
  \State \hspace{1.2em} Append local exploitation perturbations around current best
  \State \textbf{[Reasoning Evaluation]}
    For each $m$: compute
    $R_i(\theta_{i,m}^t)$ using \cref{eq:reasoning}
  \LineComment{$U_i$, $\sigma_i$ from GP posterior on $D_i^t$ (private)}
  \LineComment{$G_i$, $\Lambda_i$ from token memory $\calK_i^t$ (peer-sourced)}
  \State \textbf{[Self-Driven Selection]}
    $\theta_i^t \leftarrow \argmax_m R_i(\theta_{i,m}^t)$
  \State \textbf{[Execution]}
    $y_i^t \leftarrow f_i(\theta_i^t) + \varepsilon_i^t$
    \hfill $\triangleright$ physical/computational experiment
  \State \textbf{[Token Encoding via LM]}
    $k_i^t \leftarrow \mathcal{M}_i.\textsc{Encode}(\theta_i^t, y_i^t, \tau)$
  \State \textbf{[Broadcasting]}
    Send $k_i^t$ to all $j \in \mathcal{N}_i$
  \State \textbf{[Private GP Update]}
    $D_i^{t+1} \leftarrow D_i^t \cup \{(\theta_i^t, y_i^t)\}$; refit GP
  \Ensure Updated $\calK_i^{t+1}$, $D_i^{t+1}$; best $\theta_i^*$ found so far
\end{algorithmic}
\end{algorithm}

\section{Theoretical Analysis}
\label{sec:theory}
\subsection{Information Loss}\label{sec:infoloss}
We analyze information loss in ADKO from two sources: token compression and LM approximation. A key contribution is the formal quantification of this loss via token fidelity $\eta_k$ and a decomposition of LM error into bias and noise, providing the first rigorous link between communication compression and optimization performance. Unlike prior work on LLM compression—focused on prompt compression, weight quantization, or pretraining as lossy data encoding~\cite{nagle2024fundamental,young2025radio,conklinlearning}—ADKO studies compression within a decentralized Bayesian optimization loop. Here, tokens are generated and transmitted online, the LM acts as a reasoning module, and information loss directly impacts regret. This perspective shifts the goal from compression efficiency to principled control of performance degradation, enabling trade-offs between communication constraints and optimization accuracy.

\textbf{Token Compression: The First Loss.} Token fidelity $\eta_k$ as in Definition~\ref{def:fidelity} formalizes how much useful information survives compression from a real-valued observation into a token.
\begin{definition}[Token Fidelity]
\label{def:fidelity}
The token fidelity of token $k$ is the fraction of mutual information about
the true outcome that survives binary quantization:
$
  \eta_k
  = I\bigl(f_j(\theta_k);\; k\bigr)/H\bigl(f_j(\theta_k)\bigr)
  \;\in\; [0,1],
$
where $I(\cdot;\cdot)$ denotes mutual information and $H(\cdot)$ denotes
differential entropy under the GP posterior.
\end{definition}
Its key role is to turn communication into a quantifiable resource: high-fidelity tokens (e.g., confident outcomes far from the contextual baseline) preserve most of the decision-relevant signal, while low-fidelity tokens (near-contextual baseline, ambiguous cases) contribute little. This creates a principled notion of information efficiency—not all tokens are equally valuable, and effective collaboration depends on maintaining a high average fidelity $\bar{\eta}$. Importantly, fidelity directly controls the compression term in regret, making communication quality, not just quantity, central to convergence. This justifies mechanisms like fidelity-aware pruning as theoretically necessary, not merely heuristic. A formal result is stated in Section~\ref{fidelity_bound} to bound the token fidelity.

\textbf{LM Approximation: The Second Loss.}
LM reasoning introduces a second, orthogonal source of error: approximation of the ideal acquisition function. Decomposing this into bias (systematic misalignment with the true landscape) and noise (stochastic variability) is crucial. Bias captures persistent errors from imperfect priors or domain mismatch, while noise diminishes as more tokens accumulate, reflecting improved context. Their interaction reveals a key insight: even perfect communication (high $\eta_k$ cannot guarantee good decisions under high bias, and conversely, a well-calibrated LM cannot recover information lost to low-fidelity tokens. This separation enables targeted improvements, i.e., better prompts or fine-tuning reduce bias, while richer token memory reduces noise, making the framework both diagnosable and optimizable. We present a result for the decomposition of LM error in the following and defer proof in Section~\ref{lmerror_decomp}.
\begin{proposition}[LM Error Decomposition]
\label{prop:lmerror}
Under the assumption that the LM induces an implicit distribution
$p_{\mathrm{LM}}(y \mid \theta, \calK)$ over outcomes given context,
the LM scoring error decomposes as:
$
  R_{\mathrm{LM}}(\calK_i^t, \theta)
  = R^*(\theta) + \varepsilon_{\mathrm{bias}}(\theta) + \xi(\theta, \calK_i^t),
$
where: \emph{Systematic bias:}
    $|\varepsilon_{\mathrm{bias}}(\theta)| \leq B_R \cdot
    \TV(p_{\mathrm{LM}}(\cdot\mid\theta),\, p_{\mathrm{true}}(\cdot\mid\theta))$,
    where $B_R$ is the Lipschitz constant of $R^*$ and $\TV(\cdot,\cdot)$ is the total variation distance between the true and approximated distributions. \emph{Stochastic noise:}
    $\xi(\theta, \calK_i^t) \sim \mathcal{N}(0,\, \sigmaR^2(\calK_i^t))$ with
    $
      \sigmaR^2(\calK_i^t)
      = \frac{\sigma_0^2}{1 + \alpha_\sigma \sum_{k \in \calK_i^t} c_k \eta_k},
    $
    where $\sigma_0^2 > 0$ is the base LM variance and $\alpha_\sigma > 0$
    is an evidence-accumulation rate.
\end{proposition}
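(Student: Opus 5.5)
The plan is to make the decomposition hold by construction and then bound each piece. Write the LM score as a functional of the LM's implicit predictive distribution, $R_{\mathrm{LM}}(\calK_i^t,\theta) = \mathcal{R}[\hat p_{\calK_i^t}(\cdot\mid\theta)]$, where $\hat p_{\calK_i^t}$ is the distribution the LM actually realizes from a finite context. The ideal score is $R^*(\theta)=\mathcal{R}[p_{\mathrm{true}}(\cdot\mid\theta)]$. Let $p_{\mathrm{LM}}(\cdot\mid\theta)$ denote the infinite-context limit of $\hat p_{\calK}$, i.e.\ the LM's prior-driven belief once context noise is averaged out. Adding and subtracting $\mathcal{R}[p_{\mathrm{LM}}]$ gives the identity
\[
R_{\mathrm{LM}} = R^* + \underbrace{\bigl(\mathcal{R}[p_{\mathrm{LM}}]-\mathcal{R}[p_{\mathrm{true}}]\bigr)}_{\varepsilon_{\mathrm{bias}}(\theta)} + \underbrace{\bigl(\mathcal{R}[\hat p_{\calK_i^t}]-\mathcal{R}[p_{\mathrm{LM}}]\bigr)}_{\xi(\theta,\calK_i^t)} .
\]
So the content of the proposition is the two bounds, not the decomposition itself.

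\textbf{Bias bound.} I will interpret ``$B_R$ is the Lipschitz constant of $R^*$'' as a Lipschitz condition on the score functional with respect to total variation. The natural case is a score that is an expectation $\mathcal{R}[p]=\E_{y\sim p}[h(y)]$ with $h$ bounded, $\|h\|_\infty\le B_R/2$ (outcomes are bounded on compact $\Thetaset$, or one truncates). Then the variational characterization $|\E_p h-\E_q h|\le 2\|h\|_\infty\TV(p,q)$ gives $|\varepsilon_{\mathrm{bias}}(\theta)|\le B_R\,\TV(p_{\mathrm{LM}},p_{\mathrm{true}})$ immediately. For a UCB-type functional involving mean and standard deviation, I would state this as an assumption, since boundedness of the support is what makes moments TV-Lipschitz.

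\textbf{Noise term.} This is the main obstacle. The Gaussian form and the exact variance formula cannot be derived from first principles about an LM, so they must come from an explicit modeling assumption. I would use a conjugate evidence-accumulation model, treating the LM's estimate as a Bayesian posterior.
\begin{enumerate}
\item The LM starts from a Gaussian prior around $\mathcal{R}[p_{\mathrm{LM}}]$ with variance $\sigma_0^2$.
\item Each token $k$ acts as an independent Gaussian pseudo-observation with precision $\alpha_\sigma c_k\eta_k/\sigma_0^2$. This precision is proportional to its advantage score (evidence strength) and to its fidelity (the fraction of mutual information surviving quantization, Definition~\ref{def:fidelity}). A binary token with $\eta_k=0$ carries no information, and one with $\eta_k=1$ carries full weight.
\item Gaussian conjugacy makes precisions add, giving posterior precision $\sigma_0^{-2}\bigl(1+\alpha_\sigma\sum_{k\in\calK_i^t}c_k\eta_k\bigr)$.
\item The fluctuation $\xi$ is therefore $\mathcal{N}(0,\sigmaR^2(\calK_i^t))$ with exactly the stated $\sigmaR^2$. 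It is centered because the pseudo-observations are unbiased around the LM's own belief, so all systematic error is already in $\varepsilon_{\mathrm{bias}}$.
\end{enumerate}
Independence of tokens is the delicate point, since tokens propagate through the graph and may be duplicated. I would handle this by noting that $\textsc{merge}$ deduplicates, so each token in $\calK_i^t$ is counted once, and by assuming the encoding noises are conditionally independent across distinct experiments.

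Finally, I would remark that the bias term is independent of $\calK_i^t$ while the noise variance is monotonically decreasing in $\sum_k c_k\eta_k$. This is the separation used later in the regret bound.
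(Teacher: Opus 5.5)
Your proposal is correct and takes essentially the same route as the paper: the bias bound comes from the variational characterization of total variation, and the noise term is imposed as a Gaussian evidence-accumulation modeling assumption (the paper's ``Gaussian channel whose signal-to-noise ratio improves proportionally to $\sum_k c_k\eta_k$''). Your version is more careful than the paper's on two points. You replace the Lipschitz constant with the sup-norm bound $\|h\|_\infty\le B_R/2$, which is what the inequality $|\E_p h-\E_q h|\le 2\|h\|_\infty\TV$ actually needs, including its factor of 2. You also derive the exact form of $\sigmaR^2$ by adding precisions, whereas the paper only asserts it.
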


The denominator of $\sigmaR^2$ is key: it is a sum of
\emph{fidelity-weighted} advantage scores.
This means low-fidelity tokens contribute little to reducing LM noise,
directly motivating fidelity-aware pruning.



\subsection{Regret Analysis}
\label{sec:regret}
This section is dedicated to establishing theoretical analysis for ADKO, focusing primarily on the cumulative regret and how LM compression affects the regret bound. Before that, we present several auxiliary technical assumptions to characterize the analysis.

\textbf{Assumptions.}
To enable the theoretical guarantees of ADKO, the following assumptions are imposed:
\textbf{A1:} The design space $\Thetaset \subseteq \R^d$ is compact.
    $F$ is Lipschitz continuous on $\Thetaset$.
\textbf{A2:} Each $f_i$ is drawn from $\GP(0,g)$ with Mat\'ern-$5/2$ kernel.
    The Reproducing Kernel Hilbert Space (RKHS) norm satisfies $\|f_i\|_k <\infty$
    with probability $\geq 1 - \delta/(2N)$, $0<\delta<1$.
\textbf{A3:} LM error satisfies Proposition~\ref{prop:lmerror} with constants
    $B_R$, $\sigma_0$, $\alpha_\sigma$.
\textbf{A4:} The communication graph $\calG$ is connected with Fiedler value
    $\lambda_2 > 0$.
\textbf{A5:} Fidelity-aware pruning (Algorithm~\ref{alg:pruning} in Section~\ref{sec:token_pruning}) is applied at every round. Please see Section~\ref{sec:assumption_justification} for the justification of all assumptions.

With these assumptions, we are now ready to state the main result for the ADKO cumulative regret, which reveals the impact of different error sources on the algorithmic performance.
\begin{theorem}[ADKO Cumulative Regret]
\label{thm:main}
Under Assumptions \emph{(A1)--(A5)}, with probability $\geq 1 - \delta$,
the cumulative $N$-agent regret of ADKO satisfies:
\begin{equation}
\label{eq:mainbound}
  \regret_N^T
  \;\leq\;
  \underbrace{C_1 N \sqrt{T\zetaT \log(1/\delta)}}_{\text{GP term}}
  +\; \underbrace{C_2 N T B_R \TV}_{\text{LM bias}}
  +\; \underbrace{C_3 N \sqrt{T\log(1/\delta)}\;\sigma_0}_{\text{LM noise}}
  +\; \underbrace{C_4 N T (1-\etabar)}_{\text{compression}},
\end{equation}
where $C_1 = O(\beta^{1/2})$, $C_2 = O(\lambda+\gamma)$,
$C_3 = O(\lambda+\gamma)$, and
$C_4 = O\bigl((\lambda+\gamma)C_S/\lambda_2(L(\calG))\bigr)$.
\end{theorem}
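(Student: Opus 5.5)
The plan is to follow the standard GP-UCB template of \cite{srinivas2009gaussian}, treating the degraded reasoning score $\tilde{R}_i$ as a perturbed UCB index. The proof then isolates each perturbation as an additive per-round error that is summed over agents and rounds. For each agent $i$ and round $t$, I would write
\[
\tilde{R}_i(\theta) = \mu_i^{t}(\theta) + \beta\sigma_i^{t}(\theta) + E_i^t(\theta),
\]
where $E_i^t$ collects the peer terms $\lambda G_i - \gamma\Lambda_i$ as actually computed from the pruned, compressed, LM-processed tokens.

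The first step is to fix a high-probability event. By A2 and the RKHS confidence bound, $|f_i(\theta)-\mu_i^t(\theta)|\le \beta^{1/2}\sigma_i^t(\theta)$ holds for all $i$, $t$ and $\theta$ with probability $\ge 1-\delta/2$, via a union bound over the $N$ agents at level $\delta/(2N)$. Separately, a Gaussian concentration bound on the noises $\xi(\theta,\calK_i^t)$ of Proposition~\ref{prop:lmerror} holds with probability $\ge 1-\delta/2$. On this event, the optimality of $\theta_i^t$ for $\tilde{R}_i$ gives the usual chain
\[
F(\theta^*)-F(\theta_i^t) \lesssim 2\beta^{1/2}\sigma_i^{t-1}(\theta_i^t) + 2\sup_\theta|E_i^t(\theta)-E_i^{t,*}(\theta)|,
\]
where $E_i^{t,*}$ is the ideal uncompressed, unbiased peer signal. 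I would take this ideal signal to be consistent with $F$, so that it does not add regret. The passage from the $f_i$ to $F$ uses the Lipschitz property of $\Phi$ implicit in A1.

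The second step sums the GP part. Cauchy--Schwarz together with $\sum_t\sigma_i^{t-1}(\theta_i^t)^2 \le C\zetaT$ gives $C_1N\sqrt{T\zetaT\log(1/\delta)}$, with $C_1=O(\beta^{1/2})$.

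The third step splits $E_i^t-E_i^{t,*}$ into three pieces, using Proposition~\ref{prop:lmerror} (A3):
\begin{enumerate}
\item Bias. Since the peer terms carry weight at most $\lambda+\gamma$, the bias piece is at most $(\lambda+\gamma)B_R\,\TV$ per round, which sums to $C_2NTB_R\TV$.
\item Noise. The noise piece has variance $\sigmaR^2(\calK_i^t)\le\sigma_0^2$. I would treat it as a martingale difference sequence and apply Azuma--Hoeffding or Freedman, giving $O((\lambda+\gamma)\sigma_0\sqrt{T\log(1/\delta)})$ per agent. This yields $C_3$.
\item Compression. This piece is the difference between peer terms built from tokens and terms built from the true outcomes.
\end{enumerate}

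The compression term is the main obstacle. Per token, I would bound the loss in $c_kS(\theta,\theta_k)$-weighted evidence by $C_S(1-\eta_k)$, arguing that the missing mutual-information fraction controls the expected reconstruction error of the outcome sign and magnitude. Propagating this through the mixing weights $\pi_{ij}$ over the graph, I would use the spectral-gap relation between $\rho(\Pi)$ and $\lambda_2(L(\calG))$ referenced in Section~\ref{sec:connection_fiedler_spectral_gap}. The geometric sum of consensus errors is $\sum_s\rho^s \le 1/(1-\rho)\lesssim 1/\lambda_2$. Together with A5, which guarantees the retained tokens have average fidelity $\etabar$, this gives a per-round error $(\lambda+\gamma)C_S(1-\etabar)/\lambda_2$ and hence the linear term $C_4NT(1-\etabar)$.

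I expect the step relating fidelity, which is an information-theoretic ratio, to a pointwise score error to be the weakest link. If it resists a clean argument, I would first try to establish it as an inequality in expectation over the GP posterior and then apply Markov's inequality, absorbing the resulting constant into $C_S$. A final union bound over the two events gives the overall probability $\ge1-\delta$.
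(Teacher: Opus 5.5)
Your outline follows the paper's proof step for step: calibration lemma, information gain with Cauchy--Schwarz, a linear bias sum, Azuma for $\xi$, pruning for $\etabar$, and a spectral factor for $\lambda_2$. It does not establish the statement, however, and the paper's proof leaves the same holes open.

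\textbf{The ideal peer signal and the passage from $f_i$ to $F$.} The central gap is the step you set aside. Taking the ideal peer signal to be ``consistent with $F$'' is a new hypothesis, not a consequence of (A1)--(A5).
\begin{itemize}
\item Agent $i$'s calibration bound controls only $f_i$, while regret is measured against $F=\Phi(f_1,\dots,f_N)$ at $\theta^*=\argmax F$. (A1) is Lipschitz continuity of $F$ in $\theta$ and says nothing about $\Phi$. Even a Lipschitz $\Phi$ would need confidence bounds on every $f_j$ at $\theta^*$ and $\theta_i^t$, which agent $i$ does not have.
\item $E_i^{t,*}=\lambda G_i^*-\gamma\Lambda_i^*$ is a kernel-smoothed sum of thresholded indicators of size up to $(\lambda+\gamma)C_SB$, and it does not shrink with $t$. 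So your chain actually produces the extra term $\sum_t[E_i^{t,*}(\theta_i^t)-E_i^{t,*}(\theta^*)]$, which nothing in the assumptions makes sublinear.
\item In the heterogeneous setting the statement itself fails. Take $\lambda=\gamma=0$ (the paper's own Parallel GP-UCB reduction), $\TV=0$ and small $\sigma_0$. Each agent is then essentially GP-UCB on its own $f_i$. Whenever $F(\argmax f_i)<F(\theta^*)$, which is the generic case for distinct GP draws, the regret against $F$ is linear while the right-hand side is sublinear.
\end{itemize}
You need an extra hypothesis such as $f_i\equiv f$, plus a separate argument that the peer contribution above is summable. The paper's Steps~2--3 hide this by bounding $|\tilde R_i-R_i^*|$ in \eqref{eq:acqbound} and then writing $F$ where $R_i^*$ should be.

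\textbf{The noise step.}
\begin{itemize}
\item The noise part of $2\sup_\theta|E_i^t-E_i^{t,*}|$ is $\sup_\theta|\xi(\theta,\calK_i^t)|$. This is a nonnegative increment with conditional mean at least $\sqrt{2/\pi}\,\sigmaR(\calK_i^t)$.
\item Since $|\calK_i^t|\le B$ and $c_k,\eta_k\in[0,1]$, Proposition~\ref{prop:lmerror} gives $\sigmaR^2(\calK_i^t)\ge\sigma_0^2/(1+\alpha_\sigma B)$. The sum therefore has mean at least $\sqrt{2/\pi}\,T\sigma_0/\sqrt{1+\alpha_\sigma B}$, and no martingale inequality brings it down to $\sqrt T$.
\item Even the signed remainder $\xi(\theta_i^t,\calK_i^t)-\xi(\theta^*,\calK_i^t)$ of a sharper split is not a martingale difference. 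The point $\theta_i^t$ maximizes a score that contains $\xi$, so the noise at the selected point is biased upward. The paper's Step~6 has the same flaw: it sums signed $\xi_t$ right after \eqref{eq:acqbound} introduced $|\xi|$.
\item In \eqref{eq:degraded} the LM terms enter $\tilde R_i$ with unit weight. So your $(\lambda+\gamma)$ prefactors on $C_2$ and $C_3$ do not follow from Proposition~\ref{prop:lmerror}.
\end{itemize}

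\textbf{The graph factor in $C_4$.}
\begin{itemize}
\item ADKO runs no averaging recursion: $G_i$ and $\Lambda_i$ are weighted sums over token buffers. There is no contraction $\rho^s$ to sum.
\item The one-hop bound $(\lambda+\gamma)C_S\sum_j\pi_{ij}\sum_{k\in\calK_j}(1-\eta_k)$ holds on every graph and contains no $\lambda_2$. Note also that $\sum_{k\in\calK_j}(1-\eta_k)=|\calK_j|(1-\etabar_j)$, a factor up to $B$ that the averaged form drops.
\item Your relation $1/(1-\rho)\lesssim 1/\lambda_2$ is false in general. The identity $1-\mu_2=\lambda_2/e$ holds only on $e$-regular graphs. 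On the complete graph $1/(1-\rho)\ge 1$ while $\lambda_2=N$, and on bipartite graphs $\rho=1$.
\item Dividing an upper bound by a possibly large $\lambda_2$ strengthens it. This needs a genuine propagation argument, which the paper's Step~8 only asserts.
\end{itemize}

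\textbf{Two smaller points.}
\begin{itemize}
\item The paper never derives your ``weakest link''. It simply postulates the multiplicative $\eta_k$ in $\tilde G_i$ (Section~\ref{sec:degraded_reasoning_score}).
\item The fixed score coefficient $\beta$ cannot serve as a calibration width uniform in $t$. That width must grow with $t$ and $\log(1/\delta)$, and, for the RKHS bound you cite, with $\zetaT$ and a known norm bound that (A2) does not supply. With width $\beta^{1/2}$ but score coefficient $\beta$, your chain gives $C_1=O(\beta)$ rather than $O(\beta^{1/2})$.
\end{itemize}
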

The regret bound in Theorem~\ref{thm:main} additively decomposes the cumulative 
$N$-agent regret into four distinct sources, each governed by the assumptions and algorithm design. The GP term matches the standard decentralized BO rate up to the number of agents $N$ and the maximal information gain $\zetaT$. It reflects the irreducible error from GP modeling and exploration–exploitation trade-offs, and is sublinear whenever $\zetaT=o(T)$ (e.g., $\zetaT=O(\text{log}T)$for Mat\'ern kernels). The LM bias term is linear-in-$T$ term, arising from from systematic approximation errors in the LM reasoning module. To avoid linear regret, this term must vanish as $T$ grows, i.e, $\TV\to 0$, which is enforced by fidelity-aware pruning and token accumulation. LM noise term captures the stochastic component of LM approximation, with $\sigma_0$ and $\alpha_\sigma$ controlling the decay of noise as tokens accumulate (Proposition~\ref{prop:lmerror}). The $\sqrt{T}$ scaling is sublinear, so this term does not impede asymptotic convergence. Compression term is linear to quantifies the cost of token compression, where $\bar{\eta}$ is the average token fidelity. It reflects the loss of information due to quantization or pruning during communication. Sublinear regret requires $\bar{\eta}\to 1$, which Proposition~\ref{prop:pruning} guarantees under any fixed communication budget $B$ when fidelity‑aware pruning is applied. Constant $C_4$ explicitly involves $\lambda_2$, reflecting that slower mixing amplifies compression errors.
Overall, the bound shows that \textit{sublinear regret} is attainable if and only if the LM bias and compression loss are driven to zero at appropriate rates—conditions that are both necessary (by matching lower bounds in Proposition~\ref{prop:lower}) and sufficient under our modular framework, which are formally stated in Corollary~\ref{cor:sublinear} in Section~\ref{main_result}.

\section{Numerical Experiments}
\label{sec:numerical_exp}

We evaluate ADKO on two benchmarks chosen to answer two distinct questions. First, when communication is reduced by orders of magnitude, how much of the collaborative benefit does ADKO retain relative to no-communication and full-communication alternatives? Second, in realistic privacy-constrained settings where agents cannot share raw data and pursue heterogeneous local objectives, do ADKO's tokens carry the right kind of information for cross-agent transfer? We address the first with a neural architecture search (NAS) task on CIFAR-10, where weight-averaging methods are well studied and strong, and the second with a chemical reaction optimization task, where data-sharing constraints and asymmetric objectives are intrinsic to the domain. Although ADKO supports LM-guided reasoning, we use it only in the scientific discovery study: standard hyperparameter optimization tasks contain limited semantic structure for language priors to exploit, so we isolate other core contributions from ADKO in the NAS setting: decentralized token-based collaboration and information-efficient optimization.

\subsection{Case Study 1: Neural Architecture Search}\label{sec:nas}

\textbf{Experimental Setup.} We evaluate ADKO on a decentralized CIFAR-10 hyperparameter optimization task with $N=5$ agents over a $d=7$ design space combining architectural and training hyperparameters. To ensure the GP kernel correctly perceives structural distances, discrete dimensions are mapped to $K$ equally spaced grid points in $[0,1]$, so that doublings in power-of-2 parameters represent uniform coordinate steps. Following standard proxy-task evaluation protocols for resource-constrained NAS \cite{yang2023neural}, each evaluation trains a CIFAR-adapted ResNet for 30 epochs and returns test accuracy. We focus on the IID regime in the main paper, where each shard is a uniform random split across all 10 classes, and defer the non-IID scenario to
Section~\ref{sec:additional_results_nas}. 
We adopt a fully connected communication graph by default and ablate the topology choice. The baseline methods for comparison include:
Independent BO (each agent runs GP-UCB locally without communication), FedAvg BO (full parameter sharing via model-averaging Bayesian optimization), and Centralized BO (a privileged single-agent that pools raw observations across all agents). We additionally include two methods: Na\"ive Sharing (raw information shared without failure-aware filtering) and ADKO-FIFO (tokens pruned by recency rather than fidelity once memory saturates). Additional setup details appear in Section~\ref{sec:additional_nas_setup}.

\begin{figure}[htbp]
     \centering
     \begin{subfigure}[b]{0.32\textwidth}
         \centering
         \includegraphics[width=\textwidth]{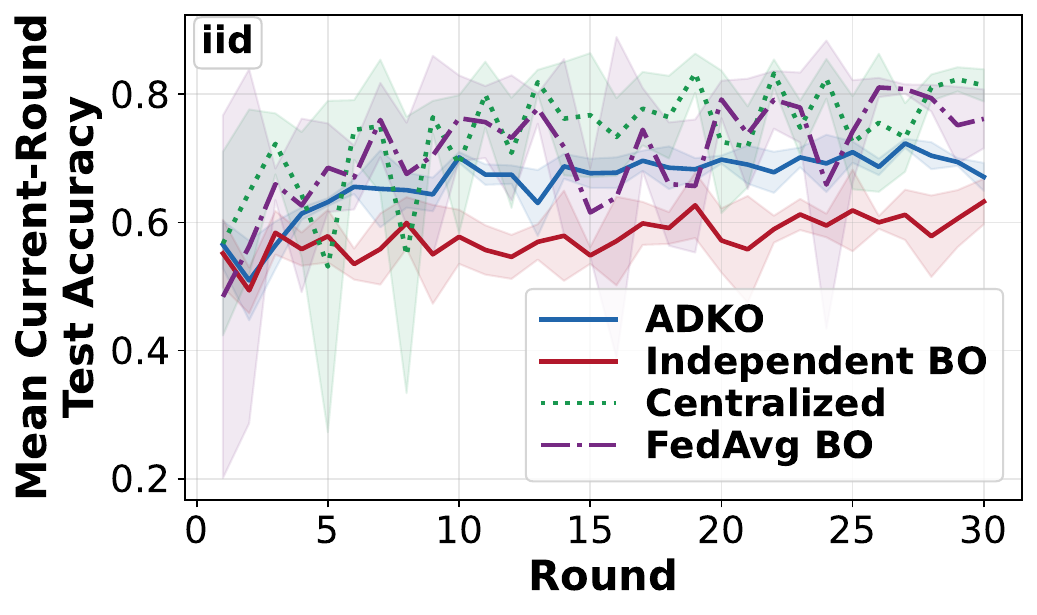}
         \caption{\textbf{Current-round mean test accuracy} comparison between ADKO and other frameworks.}
         \label{fig:nas_convergence}
     \end{subfigure}
     \hfill
     \begin{subfigure}[b]{0.32\textwidth}
         \centering
         \includegraphics[width=\textwidth]{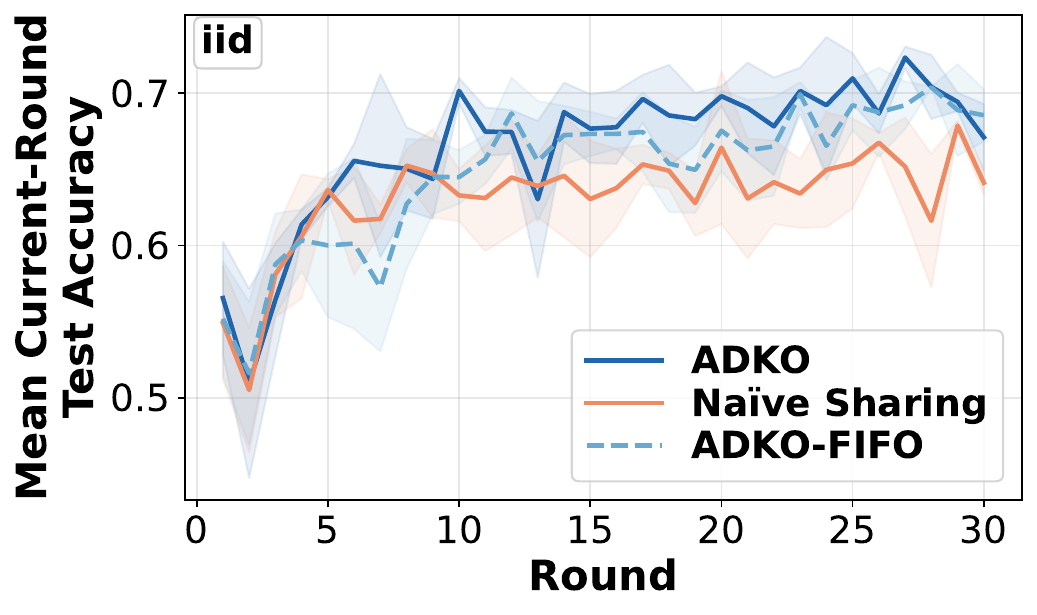}
         \caption{\textbf{Current-round mean test accuracy} comparison between ADKO and its variants.}
         \label{fig:nas_variants}
     \end{subfigure}
     \hfill
     \begin{subfigure}[b]{0.32\textwidth}
         \centering
         \includegraphics[width=\textwidth]{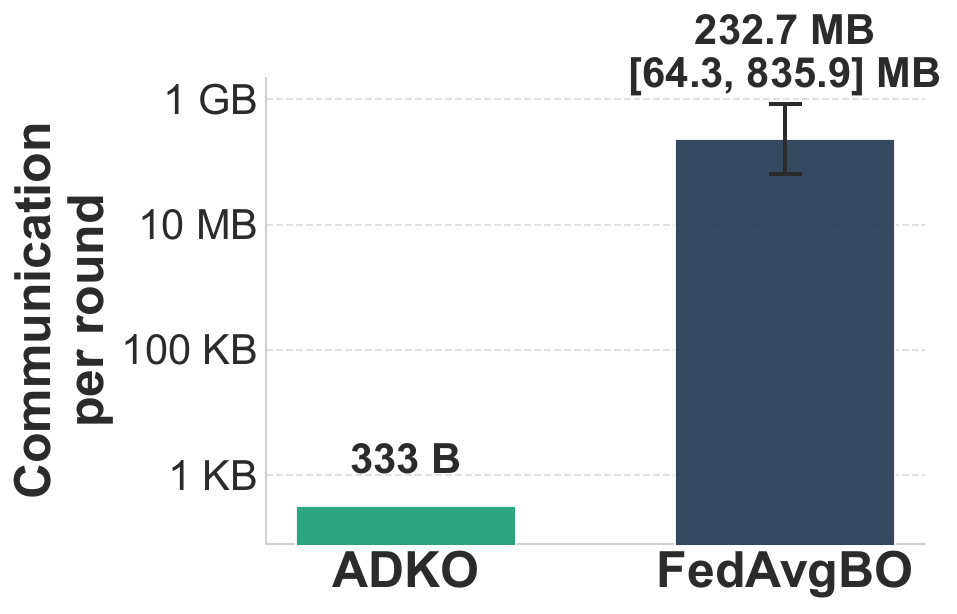}
         \caption{\textbf{Communication cost }comparison between ADKO and FedAvgBO.}
         \label{fig:nas_communication}
     \end{subfigure}
    \vspace{0.5ex}
     \caption{Results of Neural Architecture Search on CIFAR-10 with IID data.}
     \label{fig:nas}
\end{figure}

\textbf{Result Discussion.} 
Figure~\ref{fig:nas_convergence} shows mean best test accuracy over BO rounds. On CIFAR-10, ADKO consistently improves over Independent BO, indicating that compact knowledge-token exchange can provide useful cross-agent signal even without full information sharing. At the same time, ADKO remains below both the Centralized and FedAvg BO, so compact communication does not recover the full benefit of stronger coordination in this setting. FedAvg BO performs competitively with Centralized, but does so at substantially higher communication cost through full parameter sharing, as seen in Figure~\ref{fig:nas_communication}. Relative to its internal communication ablations (Figure~\ref{fig:nas_variants}), ADKO outperforms both Na\"ive Sharing and FIFO-based memory management, which supports the value of the fidelity-aware token selection. Overall, these results position ADKO as a communication-efficient alternative to heavier collaborative baselines rather than a replacement for them.

\subsection{Case Study 2: Scientific Discovery}
\label{sec:experiments}

We turn next to a setting where the assumptions ADKO targets (strict data privacy, expensive evaluations, and heterogeneous local objectives) are intrinsic to the domain rather than imposed by the experimental design. Consider four agents working on the same class of chemistry but operating under different solvent restrictions, owing to equipment, regulatory, or safety constraints. Each agent seeks the best reaction conditions \emph{within its own solvent}; there is no shared global objective. Agents cannot share raw data for IP/competitive reasons, but their structural knowledge, e.g., productive ligands, failing boron sources, robust base/coupling pairs, often transfers across solvents. 
We focus on such a non-IID scenario that is quite realistic for a real consortium of this kind. 
We defer a plausible IID scenario and additional significant results to Section~\ref{sec:additional_results_sci_dis}.

\textbf{Experimental Setup.} We use the \texttt{suzuki\_edbo} dataset from Olympus~\cite{hickman2023olympus}, which catalogs Suzuki--Miyaura cross-coupling yields across categorical reaction conditions (ligands, solvents, bases, and coupling partners). We simulate $N=4$ autonomous laboratories communicating over a fully connected graph; each agent is restricted to a different solvent and aims to maximize yield within its own solvent slice. 
Additional details on baselines, hyperparameters, and LLM integration appear in Section~\ref{sec:additional_sci_dis_setup}.
We evaluate ADKO in two configurations that test whether its reasoning score, $\mu + \beta\sigma + \lambda G - \gamma\Lambda$, is sufficient on its own. The score is privacy-aware but chemistry-blind: it can rank candidates from compact distributional statistics, but it cannot represent that an iodide/BPin/alkoxide motif productive in DMF should plausibly transfer to MeCN. \emph{ADKO} runs the reasoning over the entire $\sim$3{,}696-condition grid each round, isolating the performance of the score itself.
\emph{ADKO‑LLM} uses just 10 LLM‑proposed candidates per round ($<$1\% of the space), leveraging chemical intuition that a working solvent condition transfers across solvents.

\textbf{Baselines.} We compare against Centralized BO (a single global GP trained on pooled raw observations, with candidate assignments restricted to each agent's solvent slice; the privacy-violating ceiling), Independent BO (each agent runs GP-UCB on its private history with no inter-agent communication), Federated Thompson Sampling (FTS, parameter-sharing via shared random Fourier feature weights, adapted to the discrete Suzuki space), and PoE GP-UCB (model-sharing via aggregation of dense predictive posteriors across the candidate grid). All baselines evaluate the full candidate space each round; only ADKO-LLM operates under the 10-candidate restriction.
\begin{figure}[htbp]
    \centering
    
    \begin{minipage}[t]{0.55\linewidth}
        \centering
        \includegraphics[width=\linewidth]{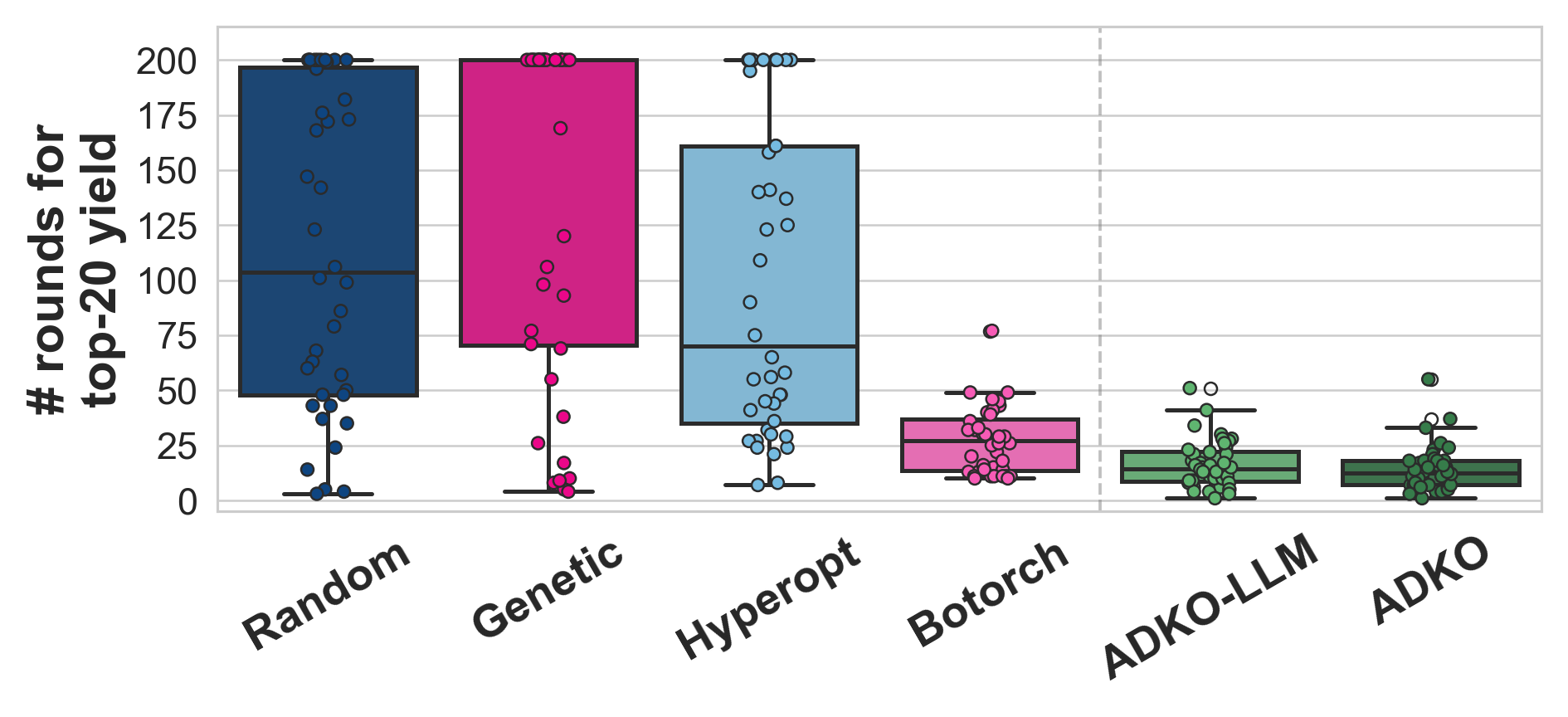}
        
        \vspace{0.5ex}
        {\small (a) Comparison to Olympus baselines}
        \label{fig:olympus_suzuki}
    \end{minipage}
    \hfill
    \begin{minipage}[t]{0.42\linewidth}
        \centering
        \includegraphics[width=\linewidth]{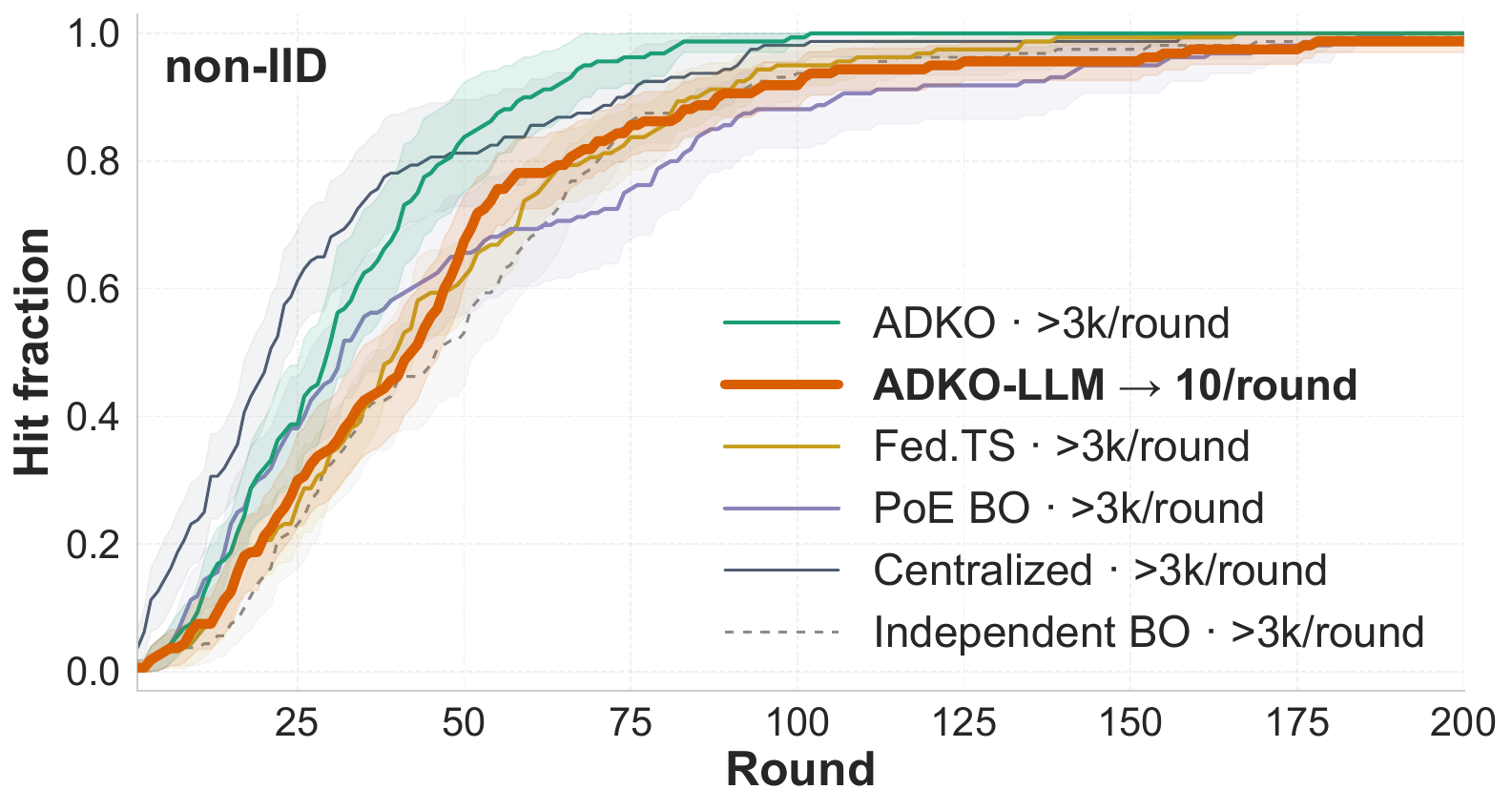}
        \vspace{0.5ex}
        {\small (b) Hit fraction under non-IID solvent restrictions}
        \label{fig:sci_dis_main}
    \end{minipage}

    \caption{
    Suzuki benchmark results. 
    (a) ADKO compared to Olympus baselines~\cite{hickman2023olympus}. 
    (b) Hit fraction over evaluations under non-IID solvent restrictions. 
    All methods evaluate the full $\sim$3{,}696$-$candidate space per round except ADKO-LLM, which evaluates 10 LLM-proposed candidates per round (under 1\% of the search space).
    }
    \label{fig:suzuki_convergence}
\end{figure}

\textbf{Metric.} We report \emph{hit fraction}: the fraction of agents, averaged across random seeds, that have identified one of their top-3 conditions within their own solvent by a given evaluation budget. Success in this setting is defined locally---an agent restricted to MeOH cannot benefit from a discovery made only in DMF---so aggregate metrics like average yield or time-to-best-yield would let agents operating in easier solvents dominate, obscuring whether the harder-constrained agents are succeeding at all. Hit fraction treats every local objective equitably. 

\textbf{Main Results.} Figure~\ref{fig:suzuki_convergence}(b) shows that, ADKO improves over every distributed baseline and trails only the Centralized baseline, supporting the claim that compact tokens carry decision-relevant information about chemical motifs and failure patterns even when local objectives are misaligned across labs. ADKO-LLM also matches or exceeds the best distributed baselines.
Notably, ADKO-LLM keeps pace with methods that scan the full grid because the LLM filters out points the score would have ranked highly but that fail for reasons outside the score's representational scope, while admitting cross-solvent transfers the score cannot propose.
A DMF‑restricted agent finds an iodide/BPin/LiO
LiO\textit{i}-Pr/phosphine motif (96.28\%). A MeCN agent transfers it successfully (99.15\% by round 5), and a MeOH agent refines it to 100\% by round 10, all without raw data, using only compact tokens. In IID settings, this prior becomes counterproductive: the reasoning score is already well calibrated, and gating it through 10 LLM picks substitutes a coarse semantic prior for a reliable quantitative one (See Section~\ref{sec:additional_results_sci_dis}).
Figure~\ref{fig:suzuki_convergence}(a) reports evaluations to a top-20 yield against the standard Olympus benchmark suite (Random, Genetic, Hyperopt, Botorch). The Olympus baselines are single-agent optimizers, and Botorch corresponds to a single Independent BO agent in our setup. 
ADKO with or without LLM
reaches a top-20 yield in substantially fewer evaluations than any single-agent method, validating the effectiveness and efficiency of our proposed methods.

\begin{wrapfigure}{r}{0.55\linewidth}
    \centering
    \includegraphics[width=1\linewidth]{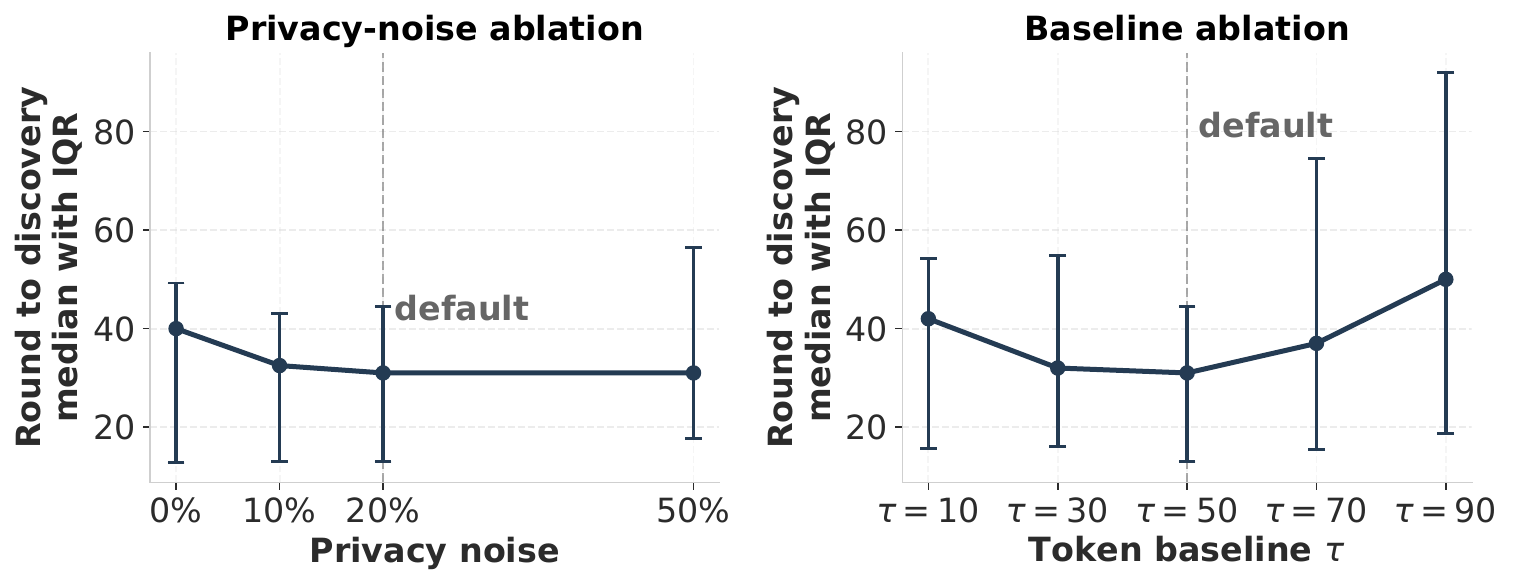}
    \caption{Noise and contextual-baseline sweep on the \texttt{suzuki\_edbo} task. Lower is better.}
    \label{fig:sci_dis_ablations}
    \vspace{-7pt}
\end{wrapfigure}

\textbf{Ablations.} The ablations in Figure~\ref{fig:sci_dis_ablations} suggest that 
ADKO maintains a decent privacy-utility tradeoff as 
adding moderate privacy noise to the 5-dimensional token embedding does not hurt performance. 
Intermediate noise levels outperform the noiseless setting, consistent with slight corruption acting as a decentralized exploration regularizer that reduces premature lock-in to locally attractive regions. The ablation on the contextual baseline $\tau$ justifies our empirical design in this work, showing $\tau=50$ leads to the rapid discovery of top-3 yield since it retains enough \texttt{SUCCESS} structure for fine-grained optimization.


\textbf{Limitations.}
First, for highly novel discovery tasks, LLM priors may be systematically wrong, in which case ADKO-LLM can underperform Independent BO.
Second, the GP has $O(n^3)$ training complexity per agent. Sparse GP approximations~\cite{pmlr-v5-titsias09a} could address this at the cost of weakening (A2).
Third, our privacy guarantee (Constraint~\ref{con:privacy}) is informal. A formal differentially private guarantee would require adding calibrated noise to token encoding, reducing $\eta_k$ and worsening the regret bound; a precise privacy--regret tradeoff is an open problem.
Also, ADKO assumes honest token encoding. Adversarial agents injecting misleading tokens could corrupt $G_i$ and $\Lambda_i$; robust extensions are needed before deployment in untrusted settings.

\section{Conclusion and Broader Impact}
\label{sec:conclusion}
We present ADKO, a unified framework for decentralized knowledge optimization that integrates private GP surrogates, knowledge-token communication, and optional LM reasoning. Our key theoretical result is a four-term regret decomposition separating GP error, LM bias, LM noise, and token compression, each tied to a clear design lever. We also introduce fidelity-aware pruning, a principled mechanism for maintaining high-quality token memory under constraints. Experiments on neural architecture search and scientific discovery validate both the theory and practical gains. More broadly, ADKO provides a blueprint for combining probabilistic modeling, foundation models, and distributed coordination into a scalable, privacy-preserving system. It applies to domains such as autonomous labs, distributed hyperparameter tuning, and multi-institution optimization under data constraints, with a built-in fairness objective across agents. 

\clearpage
\bibliographystyle{unsrt}  
\bibliography{references}  

@inproceedings{movckus1974bayesian,
  title={On Bayesian methods for seeking the extremum},
  author={Mo{\v{c}}kus, Jonas},
  booktitle={IFIP Technical Conference on Optimization Techniques},
  pages={400--404},
  year={1974},
  organization={Springer}
}

@article{jones1998efficient,
  title={Efficient global optimization of expensive black-box functions},
  author={Jones, Donald R and Schonlau, Matthias and Welch, William J},
  journal={Journal of Global optimization},
  volume={13},
  number={4},
  pages={455--492},
  year={1998},
  publisher={Springer}
}

@article{srinivas2009gaussian,
  title={Gaussian process optimization in the bandit setting: No regret and experimental design},
  author={Srinivas, Niranjan and Krause, Andreas and Kakade, Sham M and Seeger, Matthias},
  journal={arXiv preprint arXiv:0912.3995},
  year={2009}
}

@inproceedings{gonzalez2016batch,
  title={Batch Bayesian optimization via local penalization},
  author={Gonz{\'a}lez, Javier and Dai, Zhenwen and Hennig, Philipp and Lawrence, Neil},
  booktitle={Artificial intelligence and statistics},
  pages={648--657},
  year={2016},
  organization={PMLR}
}

@inproceedings{kusner2015differentially,
  title={Differentially private Bayesian optimization},
  author={Kusner, Matt and Gardner, Jacob and Garnett, Roman and Weinberger, Kilian},
  booktitle={International conference on machine learning},
  pages={918--927},
  year={2015},
  organization={PMLR}
}

@inproceedings{mcmahan2017communication,
  title={Communication-efficient learning of deep networks from decentralized data},
  author={McMahan, Brendan and Moore, Eider and Ramage, Daniel and Hampson, Seth and y Arcas, Blaise Aguera},
  booktitle={Artificial intelligence and statistics},
  pages={1273--1282},
  year={2017},
  organization={Pmlr}
}

@inproceedings{yang2023large,
  title={Large language models as optimizers},
  author={Yang, Chengrun and Wang, Xuezhi and Lu, Yifeng and Liu, Hanxiao and Le, Quoc V and Zhou, Denny and Chen, Xinyun},
  booktitle={The Twelfth International Conference on Learning Representations},
  year={2023}
}

@article{brochu2010tutorial,
  title={A tutorial on Bayesian optimization of expensive cost functions, with application to active user modeling and hierarchical reinforcement learning},
  author={Brochu, Eric and Cora, Vlad M and De Freitas, Nando},
  journal={arXiv preprint arXiv:1012.2599},
  year={2010}
}

@article{alistarh2017qsgd,
  title={QSGD: Communication-efficient SGD via gradient quantization and encoding},
  author={Alistarh, Dan and Grubic, Demjan and Li, Jerry and Tomioka, Ryota and Vojnovic, Milan},
  journal={Advances in neural information processing systems},
  volume={30},
  year={2017}
}

@article{auer2002using,
  title={Using confidence bounds for exploitation-exploration trade-offs},
  author={Auer, Peter},
  journal={Journal of machine learning research},
  volume={3},
  number={Nov},
  pages={397--422},
  year={2002}
}

@article{azimi2012hybrid,
  title={Hybrid batch Bayesian optimization},
  author={Azimi, Javad and Jalali, Ali and Fern, Xiaoli},
  journal={arXiv preprint arXiv:1202.5597},
  year={2012}
}

@article{boiko2023emergent,
  title={Emergent autonomous scientific research capabilities of large language models},
  author={Boiko, Daniil A and MacKnight, Robert and Gomes, Gabe},
  journal={arXiv preprint arXiv:2304.05332},
  year={2023}
}

@article{boyd2006randomized,
  title={Randomized gossip algorithms},
  author={Boyd, Stephen and Ghosh, Arpita and Prabhakar, Balaji and Shah, Devavrat},
  journal={IEEE transactions on information theory},
  volume={52},
  number={6},
  pages={2508--2530},
  year={2006},
  publisher={IEEE}
}

@article{bran2023chemcrow,
  title={Chemcrow: Augmenting large-language models with chemistry tools},
  author={Bran, Andres M and Cox, Sam and Schilter, Oliver and Baldassari, Carlo and White, Andrew D and Schwaller, Philippe},
  journal={arXiv preprint arXiv:2304.05376},
  year={2023}
}

@article{chen2023evoprompting,
  title={Evoprompting: Language models for code-level neural architecture search},
  author={Chen, Angelica and Dohan, David and So, David},
  journal={Advances in neural information processing systems},
  volume={36},
  pages={7787--7817},
  year={2023}
}

@article{desautels2014parallelizing,
  title={Parallelizing exploration-exploitation tradeoffs in gaussian process bandit optimization},
  author={Desautels, Thomas and Krause, Andreas and Burdick, Joel W},
  journal={The Journal of Machine Learning Research},
  volume={15},
  number={1},
  pages={3873--3923},
  year={2014},
  publisher={JMLR. org}
}

@article{eriksson2019scalable,
  title={Scalable global optimization via local Bayesian optimization},
  author={Eriksson, David and Pearce, Michael and Gardner, Jacob and Turner, Ryan D and Poloczek, Matthias},
  journal={Advances in neural information processing systems},
  volume={32},
  year={2019}
}

@incollection{ginsbourger2010kriging,
  title={Kriging is well-suited to parallelize optimization},
  author={Ginsbourger, David and Le Riche, Rodolphe and Carraro, Laurent},
  booktitle={Computational intelligence in expensive optimization problems},
  pages={131--162},
  year={2010},
  publisher={Springer}
}

@article{goetz2019active,
  title={Active federated learning},
  author={Goetz, Jack and Malik, Kshitiz and Bui, Duc and Moon, Seungwhan and Liu, Honglei and Kumar, Anuj},
  journal={arXiv preprint arXiv:1909.12641},
  year={2019}
}

@article{liu2024large,
  title={Large language models to enhance bayesian optimization},
  author={Liu, Tennison and Astorga, Nicol{\'a}s and Seedat, Nabeel and van der Schaar, Mihaela},
  journal={arXiv preprint arXiv:2402.03921},
  year={2024}
}

@inproceedings{kandasamy2017multi,
  title={Multi-fidelity bayesian optimisation with continuous approximations},
  author={Kandasamy, Kirthevasan and Dasarathy, Gautam and Schneider, Jeff and P{\'o}czos, Barnab{\'a}s},
  booktitle={International conference on machine learning},
  pages={1799--1808},
  year={2017},
  organization={PMLR}
}

@inproceedings{karimireddy2020scaffold,
  title={Scaffold: Stochastic controlled averaging for federated learning},
  author={Karimireddy, Sai Praneeth and Kale, Satyen and Mohri, Mehryar and Reddi, Sashank and Stich, Sebastian and Suresh, Ananda Theertha},
  booktitle={International conference on machine learning},
  pages={5132--5143},
  year={2020},
  organization={PMLR}
}

@article{li2020federated,
  title={Federated learning: Challenges, methods, and future directions},
  author={Li, Tian and Sahu, Anit Kumar and Talwalkar, Ameet and Smith, Virginia},
  journal={IEEE signal processing magazine},
  volume={37},
  number={3},
  pages={50--60},
  year={2020},
  publisher={IEEE}
}

@article{lowe2017multi,
  title={Multi-agent actor-critic for mixed cooperative-competitive environments},
  author={Lowe, Ryan and Wu, Yi I and Tamar, Aviv and Harb, Jean and Pieter Abbeel, OpenAI and Mordatch, Igor},
  journal={Advances in neural information processing systems},
  volume={30},
  year={2017}
}

@article{jiang2017collaborative,
  title={Collaborative deep learning in fixed topology networks},
  author={Jiang, Zhanhong and Balu, Aditya and Hegde, Chinmay and Sarkar, Soumik},
  journal={Advances in Neural Information Processing Systems},
  volume={30},
  year={2017}
}

@article{muller2021transformers,
  title={Transformers can do bayesian inference},
  author={M{\"u}ller, Samuel and Hollmann, Noah and Arango, Sebastian Pineda and Grabocka, Josif and Hutter, Frank},
  journal={arXiv preprint arXiv:2112.10510},
  year={2021}
}

@article{nedic2009distributed,
  title={Distributed subgradient methods for multi-agent optimization},
  author={Nedic, Angelia and Ozdaglar, Asuman},
  journal={IEEE Transactions on automatic control},
  volume={54},
  number={1},
  pages={48--61},
  year={2009},
  publisher={IEEE}
}

@article{romera2024mathematical,
  title={Mathematical discoveries from program search with large language models},
  author={Romera-Paredes, Bernardino and Barekatain, Mohammadamin and Novikov, Alexander and Balog, Matej and Kumar, M Pawan and Dupont, Emilien and Ruiz, Francisco JR and Ellenberg, Jordan S and Wang, Pengming and Fawzi, Omar and others},
  journal={Nature},
  volume={625},
  number={7995},
  pages={468--475},
  year={2024},
  publisher={Nature Publishing Group UK London}
}

@inproceedings{sim2021collaborative,
  title={Collaborative Bayesian optimization with fair regret},
  author={Sim, Rachael Hwee Ling and Zhang, Yehong and Low, Bryan Kian Hsiang and Jaillet, Patrick},
  booktitle={International Conference on Machine Learning},
  pages={9691--9701},
  year={2021},
  organization={PMLR}
}

@inproceedings{NIPS2012_05311655,
 author = {Snoek, Jasper and Larochelle, Hugo and Adams, Ryan P},
 booktitle = {Advances in Neural Information Processing Systems},
 editor = {F. Pereira and C.J. Burges and L. Bottou and K.Q. Weinberger},
 pages = {},
 publisher = {Curran Associates, Inc.},
 title = {Practical Bayesian Optimization of Machine Learning Algorithms},
 url = {https://proceedings.neurips.cc/paper_files/paper/2012/file/05311655a15b75fab86956663e1819cd-Paper.pdf},
 volume = {25},
 year = {2012}
}

@InProceedings{pmlr-v5-titsias09a,
  title = 	 {Variational Learning of Inducing Variables in Sparse Gaussian Processes},
  author = 	 {Titsias, Michalis},
  booktitle = 	 {Proceedings of the Twelfth International Conference on Artificial Intelligence and Statistics},
  pages = 	 {567--574},
  year = 	 {2009},
  editor = 	 {van Dyk, David and Welling, Max},
  volume = 	 {5},
  series = 	 {Proceedings of Machine Learning Research},
  address = 	 {Hilton Clearwater Beach Resort, Clearwater Beach, Florida USA},
  month = 	 {16--18 Apr},
  publisher =    {PMLR},
  url = 	 {https://proceedings.mlr.press/v5/titsias09a.html},
}

@InProceedings{pmlr-v130-vakili21a,
  title = 	 { On Information Gain and Regret Bounds in Gaussian Process Bandits },
  author =       {Vakili, Sattar and Khezeli, Kia and Picheny, Victor},
  booktitle = 	 {Proceedings of The 24th International Conference on Artificial Intelligence and Statistics},
  pages = 	 {82--90},
  year = 	 {2021},
  editor = 	 {Banerjee, Arindam and Fukumizu, Kenji},
  volume = 	 {130},
  series = 	 {Proceedings of Machine Learning Research},
  month = 	 {13--15 Apr},
  publisher =    {PMLR},
  url = 	 {https://proceedings.mlr.press/v130/vakili21a.html},
}

@inproceedings{NEURIPS2022_9c1535a0,
 author = {Yu, Chao and Velu, Akash and Vinitsky, Eugene and Gao, Jiaxuan and Wang, Yu and Bayen, Alexandre and WU, YI},
 booktitle = {Advances in Neural Information Processing Systems},
 editor = {S. Koyejo and S. Mohamed and A. Agarwal and D. Belgrave and K. Cho and A. Oh},
 pages = {24611--24624},
 publisher = {Curran Associates, Inc.},
 title = {The Surprising Effectiveness of PPO in Cooperative Multi-Agent Games},
 url = {https://proceedings.neurips.cc/paper_files/paper/2022/file/9c1535a02f0ce079433344e14d910597-Paper-Datasets_and_Benchmarks.pdf},
 volume = {35},
 year = {2022}
}

@article{wang2016bayesian,
author = {Wang, Ziyu and Hutter, Frank and Zoghi, Masrour and Matheson, David and De Freitas, Nando},
title = {Bayesian optimization in a billion dimensions via random embeddings},
year = {2016},
issue_date = {January 2016},
publisher = {AI Access Foundation},
address = {El Segundo, CA, USA},
volume = {55},
number = {1},
issn = {1076-9757},
journal = {J. Artif. Int. Res.},
month = jan,
pages = {361–387},
numpages = {27}
}

@ARTICLE{10542323,
  author={Yuan, Liangqi and Wang, Ziran and Sun, Lichao and Yu, Philip S. and Brinton, Christopher G.},
  journal={IEEE Internet of Things Journal}, 
  title={Decentralized Federated Learning: A Survey and Perspective}, 
  year={2024},
  volume={11},
  number={21},
  pages={34617-34638},
  doi={10.1109/JIOT.2024.3407584}}

@ARTICLE{10251949,
  author={Martínez Beltrán, Enrique Tomás and Pérez, Mario Quiles and Sánchez, Pedro Miguel Sánchez and Bernal, Sergio López and Bovet, Gérôme and Pérez, Manuel Gil and Pérez, Gregorio Martínez and Celdrán, Alberto Huertas},
  journal={IEEE Communications Surveys \& Tutorials}, 
  title={Decentralized Federated Learning: Fundamentals, State of the Art, Frameworks, Trends, and Challenges}, 
  year={2023},
  volume={25},
  number={4},
  pages={2983-3013},
  doi={10.1109/COMST.2023.3315746}}

@ARTICLE{10197242,
  author={Huang, Honglan and Shi, Wei and Feng, Yanghe and Niu, Chaoyue and Cheng, Guangquan and Huang, Jincai and Liu, Zhong},
  journal={IEEE Transactions on Neural Networks and Learning Systems}, 
  title={Active Client Selection for Clustered Federated Learning}, 
  year={2024},
  volume={35},
  number={11},
  pages={16424-16438},
  doi={10.1109/TNNLS.2023.3294295}}

@ARTICLE{10833800,
  author={Taghavifar, Hamid and Hu, Chuan and Wei, Chongfeng and Mohammadzadeh, Ardashir and Zhang, Chunwei},
  journal={IEEE Transactions on Automation Science and Engineering}, 
  title={Behaviorally-Aware Multi-Agent RL With Dynamic Optimization for Autonomous Driving}, 
  year={2025},
  volume={22},
  number={},
  pages={10672-10683},
  doi={10.1109/TASE.2025.3527327}}

@article{BRAHMACHARY2025129272,
title = {Large language model-based evolutionary optimizer: Reasoning with elitism},
journal = {Neurocomputing},
volume = {622},
pages = {129272},
year = {2025},
issn = {0925-2312},
doi = {https://doi.org/10.1016/j.neucom.2024.129272},
url = {https://www.sciencedirect.com/science/article/pii/S0925231224020435},
author = {Shuvayan Brahmachary and Subodh M. Joshi and Aniruddha Panda and Kaushik Koneripalli and Arun Kumar Sagotra and Harshil Patel and Ankush Sharma and Ameya D. Jagtap and Kaushic Kalyanaraman},
}

@misc{ren2026scientificintelligencesurveyllmbased,
      title={Towards Scientific Intelligence: A Survey of LLM-based Scientific Agents}, 
      author={Shuo Ren and Can Xie and Pu Jian and Zhenjiang Ren and Chunlin Leng and Jiajun Zhang},
      year={2026},
      eprint={2503.24047},
      archivePrefix={arXiv},
      primaryClass={cs.AI},
      url={https://arxiv.org/abs/2503.24047}, 
}

@article{Li_Li_Wu_Liao_HAO_Shao_Xu_2026, title={AgentSwift: Efficient LLM Agent Design via Value-Guided Hierarchical Search}, volume={40}, url={https://ojs.aaai.org/index.php/AAAI/article/view/40453}, DOI={10.1609/aaai.v40i38.40453}, number={38}, journal={Proceedings of the AAAI Conference on Artificial Intelligence}, author={Li, Yu and Li, Lehui and Wu, Zhihao and Liao, Qingmin and HAO, Jianye and Shao, Kun and Xu, Fengli}, year={2026}, month={Mar.}, pages={31843-31851} }

@misc{hu2026oscagentacceleratingdiscoveryorganic,
      title={OSCAgent: Accelerating the Discovery of Organic Solar Cells with LLM Agents}, 
      author={Zhaolin Hu and Zhiliang Wu and Hehe Fan and Yi Yang},
      year={2026},
      eprint={2602.04510},
      archivePrefix={arXiv},
      primaryClass={cs.CE},
      url={https://arxiv.org/abs/2602.04510}, 
}

@inproceedings{conklinlearning,
  title={Learning is Forgetting; LLM Training As Lossy Compression},
  author={Conklin, Henry and Hosking, Tom and Yi-Chern, Tan and Cohen, Jonathan D and Leslie, Sarah-Jane and Griffiths, Thomas L and Bartolo, Max and Goldfarb-Tarrant, Seraphina},
  booktitle={The Fourteenth International Conference on Learning Representations},
  year={2026},
}

@article{nagle2024fundamental,
  title={Fundamental limits of prompt compression: A rate-distortion framework for black-box language models},
  author={Nagle, Alliot and Girish, Adway and Bondaschi, Marco and Gastpar, Michael and Makkuva, Ashok Vardhan and Kim, Hyeji},
  journal={Advances in Neural Information Processing Systems},
  volume={37},
  pages={94934--94970},
  year={2024}
}

@article{young2025radio,
  title={Radio: rate-distortion optimization for large language model compression},
  author={Young, Sean I},
  journal={arXiv preprint arXiv:2505.03031},
  year={2025}
}

@article{ahmed2020combining,
  title={Combining Bayesian optimization and Lipschitz optimization},
  author={Ahmed, Mohamed Osama and Vaswani, Sharan and Schmidt, Mark},
  journal={Machine Learning},
  volume={109},
  number={1},
  pages={79--102},
  year={2020},
  publisher={Springer}
}

@inproceedings{jin2025gridmind,
  title={GridMind: LLMs-powered agents for power system analysis and operations},
  author={Jin, Hongwei and Kim, Kibaek and Kwon, Jonghwan},
  booktitle={Proceedings of the SC'25 Workshops of the International Conference for High Performance Computing, Networking, Storage and Analysis},
  pages={560--568},
  year={2025}
}

@article{swanson2025virtual,
  title={The Virtual Lab of AI agents designs new SARS-CoV-2 nanobodies},
  author={Swanson, Kyle and Wu, Wesley and Bulaong, Nash L and Pak, John E and Zou, James},
  journal={Nature},
  volume={646},
  number={8085},
  pages={716--723},
  year={2025},
  publisher={Nature Publishing Group UK London}
}

@inproceedings{dow2011prototyping,
  title={Prototyping dynamics: sharing multiple designs improves exploration, group rapport, and results},
  author={Dow, Steven and Fortuna, Julie and Schwartz, Dan and Altringer, Beth and Schwartz, Daniel and Klemmer, Scott},
  booktitle={Proceedings of the SIGCHI conference on human factors in computing systems},
  pages={2807--2816},
  year={2011}
}

@inproceedings{solanas2004coordinated,
  title={Coordinated multi-robot exploration through unsupervised clustering of unknown space},
  author={Solanas, Agusti and Garcia, Miguel Angel},
  booktitle={2004 IEEE/RSJ International Conference on Intelligent Robots and Systems (IROS)(IEEE Cat. No. 04CH37566)},
  volume={1},
  pages={717--721},
  year={2004},
  organization={IEEE}
}

@inproceedings{hamiti2024data,
  title={A Data Space infrastructure supporting the integration of clinical data nodes and cancer registries to improve personalized medicine},
  author={Hamiti, Florim and Breidenbach, Martin and Heiba, Naguib and Guluzade, Aynur and Mohamad, Yehya and Velasco, Carlos A},
  booktitle={Proceedings of the 11th International Conference on Software Development and Technologies for Enhancing Accessibility and Fighting Info-exclusion},
  pages={252--259},
  year={2024}
}

@inproceedings{sparks2015automating,
  title={Automating model search for large scale machine learning},
  author={Sparks, Evan R and Talwalkar, Ameet and Haas, Daniel and Franklin, Michael J and Jordan, Michael I and Kraska, Tim},
  booktitle={Proceedings of the Sixth ACM Symposium on Cloud Computing},
  pages={368--380},
  year={2015}
}

@article{yue2025collaborative,
  title={Collaborative and distributed bayesian optimization via consensus},
  author={Yue, Xubo and Liu, Yang and Berahas, Albert S and Johnson, Blake N and Al Kontar, Raed},
  journal={IEEE Transactions on Automation Science and Engineering},
  volume={22},
  pages={11343--11355},
  year={2025},
  publisher={IEEE}
}

@article{yang2023neural,
  title={Neural architecture search for resource constrained hardware devices: A survey},
  author={Yang, Yongjia and Zhan, Jinyu and Jiang, Wei and Jiang, Yucheng and Yu, Antai},
  journal={IET Cyber-Physical Systems: Theory \& Applications},
  volume={8},
  number={3},
  pages={149--159},
  year={2023},
  publisher={Wiley Online Library}
}

@inproceedings{khodak2020weight,
  title={Weight sharing for hyperparameter optimization in federated learning},
  author={Khodak, Mikhail and Li, Tian and Li, Liam and Balcan, M and Smith, Virginia and Talwalkar, Ameet},
  booktitle={Int. Workshop on Federated Learning for User Privacy and Data Confidentiality in Conjunction with ICML},
  volume={2020},
  year={2020}
}

@inproceedings{dai2020federated,
 author = {Dai, Zhongxiang and Low, Bryan Kian Hsiang and Jaillet, Patrick},
 booktitle = {Advances in Neural Information Processing Systems},
 editor = {H. Larochelle and M. Ranzato and R. Hadsell and M.F. Balcan and H. Lin},
 pages = {9687--9699},
 publisher = {Curran Associates, Inc.},
 title = {Federated Bayesian Optimization via Thompson Sampling},
 url = {https://proceedings.neurips.cc/paper_files/paper/2020/file/6dfe08eda761bd321f8a9b239f6f4ec3-Paper.pdf},
 volume = {33},
 year = {2020}
}

@ARTICLE{hinton2002training,
  author={Hinton, Geoffrey E.},
  journal={Neural Computation}, 
  title={Training Products of Experts by Minimizing Contrastive Divergence}, 
  year={2002},
  volume={14},
  number={8},
  pages={1771-1800},
  doi={10.1162/089976602760128018},
  ISSN={0899-7667},
  month={Aug},}

@misc{cao2014generalized,
Author = {Yanshuai Cao and David J. Fleet},
Title = {Generalized Product of Experts for Automatic and Principled Fusion of Gaussian Process Predictions},
Year = {2014},
Eprint = {arXiv:1410.7827},
}

@article{
hickman2023olympus,
author = {Riley Hickman  and Priyansh Parakh  and Austin Cheng  and Qianxiang Ai  and Joshua Schrier  and Matteo Aldeghi  and Alán Aspuru-Guzik },
title = {Olympus, enhanced: benchmarking mixed-parameter and multi-objective optimization in chemistry and materials science},
journal = {ChemRxiv},
volume = {2023},
number = {0518},
pages = {},
year = {2023},
doi = {10.26434/chemrxiv-2023-74w8d},
URL = {https://chemrxiv.org/doi/abs/10.26434/chemrxiv-2023-74w8d},
eprint = {https://chemrxiv.org/doi/pdf/10.26434/chemrxiv-2023-74w8d},
}

@article{balandat2020botorch,
  title={BoTorch: A framework for efficient Monte-Carlo Bayesian optimization},
  author={Balandat, Maximilian and Karrer, Brian and Jiang, Daniel and Daulton, Samuel and Letham, Ben and Wilson, Andrew G and Bakshy, Eytan},
  journal={Advances in neural information processing systems},
  volume={33},
  pages={21524--21538},
  year={2020}
}

@inproceedings{kennedy1995particle,
  title={Particle swarm optimization},
  author={Kennedy, James and Eberhart, Russell},
  booktitle={Proceedings of ICNN'95-international conference on neural networks},
  volume={4},
  pages={1942--1948},
  year={1995},
  organization={ieee}
}

\clearpage
\appendix
\section{Methodological Comparison}\label{sec:methodological_comparison}
Table~\ref{tab:comparison} highlights that ADKO is the only method that simultaneously satisfies all key design dimensions, which is the central message.
First, prior BO-based methods (Standard BO, Parallel BO, Dec. GP-UCB) provide strong regret guarantees but lack either decentralization or richer communication—none incorporate LM reasoning or handle compression explicitly. Decentralized variants enable graph communication, but still rely on full-fidelity numerical sharing, making them incompatible with strict privacy constraints.
Second, distributed optimization methods like FedAvg and DSGD support master-slave or peer-to-peer communication, but they operate on shared model updates rather than decision-level knowledge, and do not provide meaningful regret guarantees for black-box optimization. They also lack mechanisms for reasoning or handling heterogeneous objectives.
Third, OPRO introduces LM reasoning, but operates in a centralized, non-theoretical setting without regret guarantees or structured communication.
ADKO uniquely unifies all four axes: private GP modeling, decentralized graph communication, LM reasoning, and explicit compression. Importantly, it is also the only method that theoretically accounts for compression in its regret analysis. This positions ADKO not just as an incremental improvement, but as a strict generalization of multiple previously disjoint paradigms.
\begin{table}[htpb]
\centering
\caption{Comparison of ADKO with related approaches.
$\checkmark$ = supported; $\times$ = not supported.}
\label{tab:comparison}
\small
\begin{tabular}{lccccc}
\toprule
\textbf{Algorithm} & \textbf{Private GP} & \textbf{Graph comm.}
  & \textbf{LM reasoning} & \textbf{Compression} & \textbf{Regret bound} \\
\midrule
Standard BO~\cite{brochu2010tutorial}        & $\checkmark$ & $\times$   & $\times$   & $\times$ & $O(\sqrt{T\zetaT})$ \\
Parallel BO~\cite{NIPS2012_05311655}        & $\checkmark$ & Central    & $\times$   & $\times$ & $O(\sqrt{T\zetaT})$ \\
Dec.\ GP-UCB~\cite{desautels2014parallelizing}      & $\checkmark$ & $\checkmark$ & $\times$ & $\times$ & $O(\sqrt{T\zetaT})$ \\
FedAvg~\cite{mcmahan2017communication}             & $\times$     & $\checkmark$ & $\times$ & $\times$ & $O(1/\sqrt{T})$ \\
DSGD~\cite{jiang2017collaborative} & $\times$ & $\checkmark$ & $\times$ & $\times$ & $O(1/\sqrt{T})$\\
OPRO~\cite{yang2023large}               & $\times$     & $\times$   & $\checkmark$ & $\times$ & None \\
\rowcolor{blue!8}
\textbf{ADKO (ours)} & $\checkmark$ & $\checkmark$ & $\checkmark$ & $\checkmark$ & Thm.~\ref{thm:main} \\
\bottomrule
\end{tabular}
\end{table}
\section{Detailed Analysis on Reasoning Score}\label{analysis_reasoning}
We recall the reasoning score as follows.

\begin{equation}
  R_i(\theta)
  = \underbrace{U_i(\theta)}_{\substack{\text{What I expect}\\\text{from my own data}}}
  + \underbrace{\beta\,\sigma_i(\theta)}_{\substack{\text{How uncertain}\\\text{I am personally}}}
  + \underbrace{\lambda\, G_i(\theta)}_{\substack{\text{Do my neighbours}\\\text{succeed here?}}}
  - \underbrace{\gamma\,\Lambda_i(\theta)}_{\substack{\text{Do my neighbours}\\\text{fail here?}}}
\end{equation}

Each term has a distinct epistemic role, a distinct data source,
and a distinct mode of agent-to-agent influence. We now discuss each term in the reasoning score as follows.

\textbf{Term 1 --- $U_i(\theta)=\mu_i(\theta)$: Private exploitation.}
This is agent $i$'s GP posterior mean evaluated at $\theta$, derived
exclusively from $i$'s own private dataset $D_i^t$.
It answers: \emph{``based on every experiment I have personally run,
what performance do I expect here?''}
When an agent has observed consistently good outcomes near $\theta$,
$U_i$ is large there, pulling future selections toward that known-good
neighbourhood.
This is the \emph{exploitation} instinct---leveraging accumulated
private knowledge.
No other agent can access or infer $U_i$ from tokens alone: it requires
the raw numerical observations that stay private.

\textbf{Term 2 --- $\beta\,\sigma_i(\theta)$: Private exploration.}
Agent $i$'s GP posterior standard deviation quantifies how uncertain the
agent is about $\theta$ \emph{given its own data alone}.
A never-visited region has high $\sigma_i$---the GP says ``I do not know
what happens here, so this might be very good.''
Multiplied by $\beta>0$, this drives the agent to explore unvisited
parts of the design space rather than purely exploiting what it
already knows.
$\sigma_i$ also cannot come from the LM: it is the posterior covariance
of the GP conditioned on $D_i^t$.
Together, $U_i + \beta\sigma_i$ is precisely GP-UCB~\cite{srinivas2009gaussian},
representing the agent's \emph{individual} best guess of where to look
next in the complete absence of any peer information.

\textbf{Term 3 --- $\lambda G_i(\theta)$: Collaborative attraction.}
This is the first \emph{social} term.
$G_i(\theta)$ aggregates the success evidence from all neighbor tokens
in memory:
\[
  G_i(\theta)
  = \sum_{j \in \mathcal{N}_i} \pi_{ij}
    \sum_{k \in \calK_j} c_k\, S(\theta, \theta_k)\,
    \bbm[s_k = \textsc{success}],
\]
where $S(\theta,\theta_k) =
\exp\!\bigl(-\|\varphi(\theta)-\varphi(\theta_k)\|^2/\sigma_s^2\bigr)$
is a similarity kernel and $\pi_{ij}$ are graph-normalized weights, determined by the connected graph itself. 
Intuitively: \emph{``if I tried $\theta$, how consistent is that with
what my neighbors have found to work?''}
Whenever a neighbor ran an experiment near $\theta$ and reported
success with advantage score $c_k$, the kernel $S$ contributes a positive
weight proportional to how close $\theta$ is to that success.
Agent $i$ never sees the raw value---only the binary token
\textsc{success} with advantage score $c_k$---yet that is enough to construct
a useful prior directing the agent toward promising regions without
violating privacy.

The effect is \emph{soft collective attraction}: without any explicit
coordination, agents gravitationally cluster their searches around
regions where the collective has found success.
This clustering is not blind imitation.
An agent gravitates toward a neighbor's success only if its own GP
does not already predict a better region ($U_i$ is low there) and it
has not yet explored the area ($\sigma_i$ is high).
If $U_i$ is already high nearby, the agent stays put and exploits its
own knowledge, unswayed by peer evidence it does not need.

\textbf{Term 4 --- $-\gamma\,\Lambda_i(\theta)$: Collaborative avoidance.}
This is the second social term and arguably the most valuable one.
$\Lambda_i(\theta)$ mirrors $G_i$ but aggregates \emph{failure} tokens:
\[
  \Lambda_i(\theta)
  = \sum_{j \in \mathcal{N}_i} \pi_{ij}
    \sum_{k \in \calK_j} c_k\, S(\theta, \theta_k)\,
    \bbm[s_k = \textsc{fail}].
\]
It answers: \emph{``is this a region where my neighbors have repeatedly
burned their evaluation budget?''}
Whenever a neighbor reported failure near $\theta$, $\Lambda_i$ rises in
that neighborhood and agent $i$ is penalized for selecting anything
nearby.
Multiplied by $\gamma>0$, this creates \emph{collective failure memory}:
a region that is bad for one agent becomes a soft exclusion zone for
all connected agents.

This failure-avoidance mechanism enables a qualitatively different
kind of collaborative behavior compared to simply sharing successes.
In independent BO, every agent must rediscover failures for itself, wasting precious evaluation budget re-burning in regions already known
to be fruitless.
With $\Lambda_i$, each agent's failures become a public good: they save
other agents from making the same costly mistakes.
The asymmetry $\gamma > \lambda$ used in practice reflects the
asymmetry in information value: avoiding a known structural failure
region (e.g., a material family that inherently decomposes above a
certain temperature) is typically worth more than being mildly attracted
toward a known success, because failures are often
deterministic---always bad regardless of agent-specific heterogeneity.

\textbf{Parameter roles and calibration.}
The weights $\beta, \lambda, \gamma > 0$ control the ``personality''
of each agent.
$\beta$ governs \emph{individual curiosity}: how much the agent values
exploring its own private uncertainties.
$\lambda$ governs \emph{social openness}: how much weight the agent
places on peer successes.
$\gamma$ governs \emph{collective caution}: how strongly the agent
avoids peer failures.
Setting $\lambda = \gamma = 0$ recovers standard GP-UCB, making each
agent fully independent.
Setting $\beta = 0$ collapses agents into pure social followers,
ignoring individual uncertainty.
The theoretically optimal values depend on graph structure, objective
heterogeneity, and token fidelity---formal guidance on their interaction
with the regret bound is given in Section~\ref{sec:theory}.
The similarity kernel bandwidth $\sigma_s$ controls the spatial reach
of peer influence: a small $\sigma_s$ means peer evidence only reshapes
the acquisition function very close to where the peer experiment was run;
a large $\sigma_s$ allows evidence to propagate across a wider
neighborhood of $\Thetaset$.
In practice, $\sigma_s$ is set to the median pairwise distance between
observed points, or tuned via cross-validation empirically.

\textbf{A Worked Example for Reasoning Score.}
Consider agent $i$ evaluating four candidate points at round $t$:

\begin{itemize}
  \item \textbf{$\theta_A$}: agent $i$ has observed good outcomes nearby
    ($U_i$ high), but the area is well-understood ($\sigma_i$ low),
    and a neighbor recently failed here ($\Lambda_i$ large).
    $R_i(\theta_A)$ is suppressed---the peer failure overrides
    personal optimism, preventing agent $i$ from re-testing a known
    failure zone that it would otherwise have been attracted to.

  \item \textbf{$\theta_B$}: completely unexplored by agent $i$
    ($\sigma_i$ high), no neighbor has visited ($G_i = \Lambda_i = 0$).
    $R_i(\theta_B) = U_i + \beta\sigma_i$---pure GP-UCB exploration.
    The agent investigates this out of individual curiosity, not social
    influence.

  \item \textbf{$\theta_C$}: unexplored by agent $i$ ($\sigma_i$ high),
    but multiple neighbors have succeeded here ($G_i$ large).
    $R_i(\theta_C)$ is the highest of the four---the combination of
    personal uncertainty and peer success makes this maximally attractive.
    Agent $i$ inherits its neighbors' accumulated knowledge to focus
    its own exploration on an area it never would have prioritized from
    its own data alone.
    \emph{This is the canonical collaborative benefit of ADKO.}

  \item \textbf{$\theta_D$}: multiple neighbors have failed here with
    high advantage score ($\Lambda_i$ large).
    $R_i(\theta_D)$ is strongly suppressed, effectively removing this
    from consideration regardless of GP uncertainty.
    Agent $i$ avoids re-burning evaluation budget on a shared dead zone.
\end{itemize}

The agent selects $\theta_C$---an outcome that no individual component
alone would have identified, requiring the interaction of private GP
uncertainty and collaborative social evidence.
This is the key emergent property of the reasoning score: it is
strictly more than the sum of its parts.
\section{More Detail About Algorithm~\ref{alg:adko}}\label{sec:additional_algorithm}
\begin{enumerate}
  \item \textbf{Birth (Step 6 --- Token Encoding).}
    After executing experiment $\theta_i^t$ and observing $y_i^t$,
    agent $i$'s LM interprets the result in context---reading any
    accumulated token memory about the surrounding region and distilling
    the outcome into a structured token $k_i^t$.
    The binary label $s_i^t$ ($\textsc{success}/\textsc{fail}$) strips
    the raw value away, preserving privacy.
    The advantage score $c_i^t$ encodes how informative the label is: a
    strong success far above contextual baseline $\tau$ produces $c_i^t \approx 1$
    and carries nearly the full information of the raw value; an
    ambiguous outcome near $\tau$ produces $c_i^t \approx 0$ and
    contributes little. The optional natural-language field $z_i^t$
    may carry a mechanistic insight (``this failure was due to phase
    separation, not insufficient temperature'') that a neighbouring
    agent's LM can reason about even without seeing the number.

  \item \textbf{Propagation (Step 7 --- Broadcasting).}
    The token is immediately broadcast to all graph neighbours
    $j \in \mathcal{N}_i$.
    Those neighbours do not need to be active at the same time as tokens
    persist in memory across rounds.
    Over multiple rounds, tokens propagate \emph{across the graph}:
    a token born at agent~0 reaches agent~3 in two rounds if the shortest
    path is two hops.
    The Fiedler value $\lambda_2(L(\calG))$ of the communication graph
    governs how quickly information saturates the network, explaining
    why $C_4 \propto 1/\lambda_2$ in the regret bound.

  \item \textbf{Aggregation (Step 1 --- Token Aggregation).}
    At the start of each round, agent $i$ merges incoming tokens from
    neighbors with its own memory.
    This is not passive accumulation: fidelity-aware pruning actively
    maintains the quality of the memory by discarding tokens that are
    either low-fidelity (near-contextual baseline, ambiguous outcomes), stale
    (from many rounds ago), or low-advantage score, in favor of tokens that
    carry genuine directional information.
    The result is that each agent maintains a \emph{curated collective
    memory}, which is a distillation of the network's most informative
    experiences.

  \item \textbf{Influence (Steps 2--4 --- Candidate Generation and
    Selection).}
    Token memory shapes agent $i$'s behavior in two ways.
    First, the LM reads the memory when proposing candidates in Step~2:
    a well-calibrated LM will bias its suggestions away from regions
    with many failure tokens and toward regions with many success tokens,
    before any explicit scoring happens.
    Second, and more precisely, the $G_i$ and $\Lambda_i$ terms in the
    reasoning score (Steps 3--4) numerically encode the collective
    evidence into the acquisition function.
    The combined effect is that an agent that has never visited a region
    can nonetheless form an informed opinion about whether it is worth
    exploring, based solely on its neighbors' experiences.
    This is the fundamental mechanism by which ADKO \emph{transfers
    knowledge across agents without transferring data}.

  \item \textbf{Feedback (Step 8 --- GP Update).}
    After execution, the GP is updated with the new private observation.
    If the agent followed a peer success to a new region and found
    success itself, the GP learns the new peak and future candidates
    will be drawn there.
    If the agent followed a peer success but found failure (because
    objectives are heterogeneous and what works for neighbour $j$ does
    not always work for agent $i$), the GP quickly corrects: $U_i$ drops
    near that region and the agent stops being attracted there.
    This resilience to heterogeneity is what makes ADKO robust to the
    non-IID setting: peer evidence serves as an \emph{informative prior}
    that the private GP can revise based on local observations.
\end{enumerate}

\textbf{Emergent specialization and division of labor.}
An important emergent property of ADKO is that agents naturally
\emph{specialize} in different sub-regions of $\Thetaset$ over time,
even without any explicit coordination mechanism.
Consider two agents $i$ and $j$ with similar objectives.
In early rounds, both start with high $\sigma$ everywhere and no tokens.
They explore somewhat randomly.
As soon as agent $i$ discovers a promising region and broadcasts a
success token, agent $j$'s $G_j$ rises in that region.
But agent $j$ also has its own unexplored regions with high $\sigma_j$.
If $\sigma_j$ is high enough elsewhere, agent $j$ will not rush
immediately to where $i$ succeeded --- instead it will explore its
own high-uncertainty regions, effectively covering a different part of
the design space.
The net result is \emph{implicit load balancing}: the collective wastes
fewer evaluations evaluating the same points, and the global search
covers $\Thetaset$ more efficiently than $N$ independent agents each
running their own GP-UCB.

\textbf{Heterogeneous objectives: when collaboration is and is not helpful.}
When objectives are heterogeneous ($f_i \neq f_j$), peer evidence
carries a \emph{prior}, not a guarantee.
A success token from agent $j$ near $\theta^*_j$ raises $G_i(\theta^*_j)$,
attracting agent $i$ to test that region.
If $f_i(\theta^*_j)$ is also high (objectives are aligned in that
region), agent $i$ benefits directly.
If $f_i(\theta^*_j)$ is low (objectives are locally misaligned), agent
$i$'s GP observes this, $U_i$ drops there, and $G_i$'s pull is quickly
counteracted.
The net effect is a \emph{soft transfer}: collaboration helps when
objectives are correlated in the visited region and is gracefully
neutralized when they are not, without any explicit misalignment
detection.

Peer failure tokens are even more robust to heterogeneity.
If $f_j(\theta) < \tau$ for agent $j$, it is likely (though not
guaranteed) that $f_i(\theta) < \tau$ for agent $i$ as well, especially
when the design space encodes shared physics, shared computational
constraints, or shared structural properties.
In material synthesis, if one lab finds that a particular composition
decomposes under certain conditions, that failure is almost certainly
structural and therefore informative to all labs regardless of their
individual objectives.
The failure penalty $\Lambda_i$ exploits this correlation, creating
valuable savings even under heterogeneous objectives.

We next connect our proposed algorithm with existing algorithms.

\textbf{Reduction to GP-UCB.}
Set $N=1$, remove the graph, set $G_i = \Lambda_i = 0$, remove the LM.
Then $R_i(\theta) = \mu_{i,t-1}(\theta) + \beta\sigma_{i,t-1}(\theta)$,
which is exactly GP-UCB~\cite{srinivas2009gaussian}.
Theorem~\ref{thm:main} reduces to $O(\sqrt{T\zetaT})$, matching the known bound.

\textbf{Reduction to Parallel GP-UCB.}
Set $G_i = \Lambda_i = 0$, keep $N > 1$.
Each agent independently runs GP-UCB.
\cref{thm:main} gives $O(N\sqrt{T\zetaT})$ with no communication
benefit---a lower bound showing communication is necessary for improvement.

\textbf{Soft Consensus BO.}
If $f_i = f$ for all $i$ (common objective) and $\eta_k = 1$ (perfect fidelity),
then $\tilde{G}_i = G_i^*$ and the compatibility term implements a form of
soft consensus: agents are attracted to peer successes and repelled from peer
failures.
This is analogous to the averaging step in decentralised gradient
descent~\cite{nedic2009distributed} but operates on acquisition values rather than
model parameters.

\textbf{Relationship to Federated BO.}
Setting $\TV = 0$, $\eta_k = 1$, and $\Phi = \mathrm{mean}$ gives a
decentralised version of federated Bayesian optimisation.
The key difference from FedBO~\cite{sim2021collaborative} is that ADKO maintains
per-agent GP surrogates rather than a shared global GP, and coordinates via
binary tokens rather than gradient aggregation.
\subsection{Fidelity-Aware Token Pruning}\label{sec:token_pruning}
\begin{algorithm}[htbp]
\small
\caption{Fidelity-Aware Token Pruning --- Agent $i$, Budget $B$}
\label{alg:pruning}
\begin{algorithmic}[1]
  \Require Token memory $\calK_i^t$ with $|\calK_i^t| > B$,
           recency weight $\alpha_\tau > 0$
  \While{$|\calK_i^t| > B$}
    \For{each token $k \in \calK_i^t$}
      \State $\hat\eta_k \leftarrow c_k \cdot \bigl(1 - \Hb\bigl((1-c_k)/2\bigr)\bigr)$
        \hfill $\triangleright$ estimated token fidelity
      \State $\mathrm{score}(k) \leftarrow \hat\eta_k \cdot c_k \cdot
        \exp(-\alpha_\tau (t - k.\mathrm{round}))$
    \EndFor
    \State $\mathrm{drop} \leftarrow \argmin_{k \in \calK_i^t}\,\mathrm{score}(k)$
    \State $\calK_i^t \leftarrow \calK_i^t \setminus \{\mathrm{drop}\}$
  \EndWhile
  \Ensure Pruned $\calK_i^t$ with $|\calK_i^t| = B$
\end{algorithmic}
\end{algorithm}
The pruning score combines three factors:
(1) token fidelity $\hat\eta_k$, the mutual information retained after
  compression;
(2) advantage score $c_k$; and
(3) recency discount $\exp(-\alpha_\tau(t - k.\mathrm{round}))$.
We prove in Proposition~\ref{prop:pruning} that this policy maintains
$\etabar \to 1$ as $B \to \infty$. Algorithm~\ref{alg:pruning} is critical for maintaining the quality of shared information under strict memory and communication budgets. Rather than treating all tokens equally, it prioritizes tokens based on an information-theoretic proxy of their utility—combining estimated fidelity (how much mutual information survives compression), advantage score, and recency. This ensures that high-signal tokens (e.g., confident successes or failures far from the contextual baseline) are retained, while ambiguous, low-fidelity, or stale tokens are discarded. 
The key significance is twofold: practically, it prevents the memory buffer from being saturated by uninformative tokens that would otherwise degrade decision-making; theoretically, it guarantees that the average token fidelity $\bar{\eta}$ remains high, which is necessary to eliminate the linear compression term in the regret bound and achieve sublinear convergence. In this sense, pruning is not just a systems optimization, but an essential mechanism that preserves th

\section{Missing Analysis and Proof}\label{sec:missing_proof}
\subsection{Connection Between Fiedler Value and Spectral Gap}\label{sec:connection_fiedler_spectral_gap}
To see the connection, we first look at the two matrices involved for a graph $\mathcal{G}$ with $N$ nodes.
\begin{itemize}
    \item \textbf{The Laplacian Matrix $L$}: Defined as $L=D-A$, where $D$ is the degree matrix and $A$ is the adjacency matrix of graph $\mathcal{G}$. Its eigenvalues are $0=\lambda_1\leq\lambda_2\leq...\leq \lambda_N$. The \textit{Fiedler value} is the second smallest eigenvalue $\lambda_2$. It represents the algebraic connectivity of the graph.
    \item \textbf{The Mixing Matrix $\Pi$.} Typically the transition matrix for a simple random walk, $\Pi=D^{-1}A$. Its eigenvalues are $1=\mu_1\geq \mu_2\geq ...\mu_N\geq -1$. The second largest eigenvalue, $\mu_2$ (or more accurately, the second largest eigenvalue in magnitude, $\mu_*=\text{max}\{|\mu_2|,|\mu_N|\}$, determines how fast the random walk converges to its stationary distribution.
\end{itemize}
\textbf{The direct mathematical link.} For a $e$-regular graph, (where every node has the same degree $e$), the relationship is linear and direct. In this case, the Laplacian is related to the transition matrix by:
\begin{equation}
    L=e(I-\Pi).
\end{equation}
Because of this linear shift and scaling, the eigenvalues are related as follows:
\begin{equation}
    \lambda_i=e(1-\mu_i).
\end{equation}
Specifically, for the second-order terms:
\begin{equation}
    \lambda_2=e(1-\mu_2) \longleftrightarrow \mu_2=1-\frac{\lambda_2}{e}.
\end{equation}

\textbf{The Spectral Gap.} The term $1-\mu_2$ is often called the spectral gap of the mixing matrix. As shown above, this gap is directly proportional to the Fiedler value. 
\begin{itemize}
    \item If the Fiedler value is very small, the graph has a "bottleneck" (it is easily bisected).
    \item In terms of the mixing matrix, a small spectral gap means $\mu_2$ close to 1, indicating that a random walk will take a long time to "cross" that bottleneck and mix thoroughly across the graph.
\end{itemize}

\textbf{General Graphs.} For non-regular graphs, the relationship is usually expressed using the \textit{Normalized Laplacian}, $\hat{L}=D^{-1/2}LD^{-1/2}$. The eigenvalue $\hat{L}$, denoted by $\hat{\lambda}_i$, relate to the eigenvalues of the transition matrix $\Pi$ via:
\begin{equation}
    \hat{\lambda}_i=1-\mu_i.
\end{equation}
In this more general framework, the Fiedler value of the normalized Laplacian $(\hat{\lambda}_2)$ is exactly the spectral gap of the transition matrix.

In decentralized optimization or consensus protocols, the "speed" of the algorithm is governed by the spectral gap. Whether we analyze it via the Fiedler value of the Laplacian $L$ or the second eigenvalue of the mixing matrix $\Pi$, we are measuring the same physical phenomenon: how quickly information diffuses across the graph's bottleneck.

\subsection{The Degraded Reasoning Score}\label{sec:degraded_reasoning_score}
Substituting both information losses into Eq.~\ref{eq:reasoning}:
\begin{equation}
\label{eq:degraded}
  \tilde{R}_i(\theta)
  = U_i(\theta) + \beta\sigma_i(\theta)
  + \lambda\tilde{G}_i(\theta) - \gamma\tilde\Lambda_i(\theta)
  + \varepsilon_{\mathrm{bias}}(\theta) + \xi(\theta, \calK_i^t),
\end{equation}
where the \emph{degraded compatibility} term is:
\[
  \tilde{G}_i(\theta) = \sum_{j \in \mathcal{N}_i} \pi_{ij}
  \sum_{k \in \calK_j} \eta_k \cdot c_k \cdot S(\theta,\theta_k)
  \cdot \bbm[s_k = \textsc{success}].
\]
The \emph{compression gap} for compatibility is:
\[
  G_i^*(\theta) - \tilde{G}_i(\theta)
  = \sum_{j,k} \pi_{ij}(1-\eta_k)\, c_k\, S(\theta,\theta_k)\,
    \bbm[s_k = \textsc{success}] \;\geq\; 0.
\]
This is always non-negative: degraded compatibility always
\emph{underestimates} the oracle signal by $(1-\eta_k)$ per token.
Geometrically, ADKO is overly conservative near previously evaluated points
when token fidelity is low.

The combined effect on acquisition quality is:
\begin{equation}
\label{eq:acqbound}
  |\tilde{R}_i(\theta) - R_i^*(\theta)|
  \leq (\lambda+\gamma)\cdot C_S\cdot(1-\etabar_i)
    + B_R\cdot\TV + |\xi|,
\end{equation}
where $C_S = \max_{\theta,k} c_k S(\theta,\theta_k)$ and
$\etabar_i = |\calK_i|^{-1} \sum_k \eta_k$.

\subsection{Proof of Proposition~\ref{prop:fidelitybound}}\label{fidelity_bound}
\begin{proposition}[Fidelity Bound for Binary Tokens]
\label{prop:fidelitybound}
For a token $k$ with binary directional signal and classification contextual baseline
error probability
$p_k = \Prob\bigl(s_k \neq \bbm[f_j(\theta_k) \geq \tau]\bigr)$:
\[
  \eta_k \;\leq\; 1 - \Hb(p_k) \cdot H\bigl(f_j(\theta_k)\bigr)^{-1},
\]
where $\Hb(p) = -p\log p - (1-p)\log(1-p)$ is the binary entropy.
Furthermore, the practical estimator
$\hat\eta_k = c_k\bigl(1 - \Hb((1-c_k)/2)\bigr)$ satisfies
$\E[\hat\eta_k] \geq \eta_k - O(\sigma^2/\Delta_k^2)$,
where $\Delta_k = |\E[f_j(\theta_k)] - \tau|$ is the margin from the contextual baseline.
\end{proposition}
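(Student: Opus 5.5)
I would prove the two claims of Proposition~\ref{prop:fidelitybound} separately, treating the token $k$ as a noisy binary channel applied to the latent value $X = f_j(\theta_k)$. Let $Z = \bbm[X \geq \tau]$ be the ideal one-bit quantization. The token's directional field $s_k$ is then a copy of $Z$ passed through a binary symmetric channel with crossover probability $p_k$. I would treat the advantage score and the embedding as fixed side information, so that all the information about $X$ carried by $k$ passes through $s_k$. This gives the Markov chain $X \to Z \to s_k$.

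\textbf{First claim.} The data-processing inequality gives $I(X;k) \leq I(Z;s_k)$. For a binary symmetric channel, $I(Z;s_k) = H(s_k) - \Hb(p_k) \leq 1 - \Hb(p_k)$, measured in bits. Dividing by $H(X)$ yields
\[
\eta_k \leq \bigl(1 - \Hb(p_k)\bigr)/H(X).
\]
To reach the stated form $1 - \Hb(p_k)\,H(X)^{-1}$, I would use the normalization convention implicit in Definition~\ref{def:fidelity}. Under that convention the entropy is measured relative to the one-bit quantizer, so $H(X) \geq 1$ and $\eta_k \leq 1$. Then
\[
\frac{1-\Hb(p_k)}{H(X)} \leq 1 - \frac{\Hb(p_k)}{H(X)},
\]
which is equivalent to $1 \leq H(X)$. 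I expect the main obstacle to be exactly this normalization point. Differential entropy can be negative, so I would make the convention explicit: either state it as a standing assumption, or work with a discretized version of $X$ at a resolution for which $H \geq 1$.

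\textbf{Second claim.} I would relate $c_k$ to the classification error. Under the GP posterior, $X \sim \mathcal{N}(m, s^2)$ with $\Delta_k = |m - \tau|$. The Gaussian tail bound then gives
\[
p_k \leq \Prob\bigl(\text{sign flip}\bigr) \leq \exp\bigl(-\Delta_k^2/(2\sigma^2)\bigr) = O(\sigma^2/\Delta_k^2).
\]
In expectation, $c_k$ concentrates near $\Delta_k$ divided by the range, and $(1-c_k)/2$ behaves as a proxy for $p_k$. I would split into two cases.

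In the large-margin case, $\E[c_k] \geq 1 - O(\sigma^2/\Delta_k^2)$, for example via Chebyshev on $|y-\tau|$. Consequently $\hat\eta_k \geq 1 - O(\sigma^2/\Delta_k^2)$, because $c \mapsto c\,(1-\Hb((1-c)/2))$ is increasing and smooth near $1$. Since $\eta_k \leq 1$, the claim follows.

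In the small-margin case, $\sigma^2/\Delta_k^2 \gtrsim 1$. The $O(\cdot)$ slack then absorbs everything, because $\eta_k \leq 1$ and $\hat\eta_k \geq 0$.

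Throughout, I would use Jensen's inequality and a Lipschitz bound on $\hat\eta$ as a function of $c$ to move the expectation inside. This keeps the second-order Taylor remainder at order $\mathrm{Var}(c_k) = O(\sigma^2/\Delta_k^2)$.
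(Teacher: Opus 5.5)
\textbf{Gap in the first claim.} The problem is the step where you model $s_k$ as $Z=\bbm[X\ge\tau]$ sent through a binary symmetric channel. In the paper's construction $s_k=\bbm[y\ge\tau]$ with $y=X+\varepsilon$. The valid chain is therefore $X\to y\to s_k$, the one the paper invokes, not $X\to Z\to s_k$: given $Z$, the flip probability $\Phi(-|X-\tau|/\sigma)$ still depends on $X$.

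Since $Z$ is a function of $X$, $I(X;s_k)=I(Z;s_k)+I(X;s_k\mid Z)\ge I(Z;s_k)$. The data-processing inequality you need therefore runs the other way. Jensen applied to the concave $\Hb$ gives $H(s_k\mid X)=\E\bigl[\Hb\bigl(\Phi(-|X-\tau|/\sigma)\bigr)\bigr]\le\Hb(p_k)$, i.e.\ $I(X;s_k)\ge H(s_k)-\Hb(p_k)$, strictly in general.

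Concretely, take the posterior $X\sim\mathcal{N}(\tau,s^2)$ (your $s$), with $s$ chosen so that $H(X)=1$ bit. Then $H(s_k)=1$ and your normalization holds, yet $\eta_k=I(X;s_k)>1-\Hb(p_k)=1-\Hb(p_k)H(X)^{-1}$. So the route through $H(X)\ge 1$ fails on the paper's own noise model. You have two options:
\begin{itemize}
\item adopt the symmetric channel as an explicit assumption, which is not the paper's token model; or
\item use the fact that the claim is equivalent to $I(X;s_k)\le H(X)-\Hb(p_k)$. This follows from $I(X;s_k)\le H(s_k)\le 1$ whenever $H(X)\ge 1+\Hb(p_k)$ (e.g.\ $H(X)\ge 2$ bits), with no channel assumption at all.
\end{itemize}
In either case, discarding $c_k$ must be stated as a redefinition $\eta_k=I(X;s_k)/H(X)$. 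Together with $s_k$, the baseline and the normalizer, $c_k$ determines $y$, so $I(X;k)$ is not bounded by one bit.

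\textbf{Gap in the second claim.} Your large-margin case asserts $\E[c_k]\ge 1-O(\sigma^2/\Delta_k^2)$. This contradicts your own remark that $c_k\approx\Delta_k/\|y-b\|_{\max}$. A margin large compared with $\sigma$ makes the sign of $y-\tau$ reliable, but says nothing about $|y-\tau|$ relative to the normalizer. When $\sigma\ll\Delta_k\ll\|y-b\|_{\max}$, $\hat\eta_k\approx c_k^3/(2\ln 2)$ is essentially zero while the allowance $O(\sigma^2/\Delta_k^2)$ is small. Neither of your cases applies, and you would have to show that $\eta_k$ itself is small.

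However, $\eta_k\le H(s_k)/H(X)$ is governed by $\Delta_k$ relative to the predictive standard deviation $\sqrt{s^2+\sigma^2}$, not by $\sigma$. Your bound $p_k\le e^{-\Delta_k^2/(2\sigma^2)}$ treats $X$ as if it sat at its mean, whereas under the posterior $X$ falls near $\tau$ with probability governed by $s$. With $s$ fixed and $\sigma\to 0$, $\eta_k\to\Hb(\Phi(\Delta_k/s))/H(X)>0$, while $\E[\hat\eta_k]$ stays small for a large normalizer. So, reading $\sigma^2$ as the observation noise, the estimate fails and no argument can prove it. With $\sigma^2$ replaced by $s^2+\sigma^2$, it follows at once from $\E[\hat\eta_k]\ge 0$ and $H(s_k)=\Hb(q)$ with $q\le e^{-\Delta_k^2/(2(s^2+\sigma^2))}$, since then $\Hb(q)=O\bigl((s^2+\sigma^2)/\Delta_k^2\bigr)$.

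Finally, $c\mapsto c\,\bigl(1-\Hb((1-c)/2)\bigr)$ is convex on $[0,1]$, so Jensen alone gives $\E[\hat\eta_k]\ge \E[c_k]\bigl(1-\Hb((1-\E[c_k])/2)\bigr)$. But this map is neither Lipschitz nor smooth at $c=1$: its slope grows like $\log\frac{1}{1-c}$. The Taylor/Lipschitz step you invoke near $c=1$ would therefore cost a logarithmic factor.
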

\begin{proof}
The bound follows directly from the data-processing inequality applied to
the Markov chain $f_j(\theta_k) \to y \to s_k$.
Binary quantization can increase the binary entropy of the contextual baseline error
by at most $\Hb(p_k)$, giving the stated upper bound on $\eta_k$.
The estimator bound follows from a second-order Taylor expansion of
$\Hb$ around $p_k = 0$, using $c_k$ as a proxy for
$1 - 2p_k$ (i.e., the gap from maximum uncertainty).
\end{proof}
\subsection{Proof of Proposition~\ref{prop:lmerror}}\label{lmerror_decomp}
\begin{proof}
The bias bound follows from the Lipschitz assumption on $R^*$ and the
definition of total variation distance: for any two distributions $p, q$,
$|E_p[g] - E_q[g]| \leq \|g\|_\infty \cdot \TV(p,q)$.
Applying this with $g(\cdot) = R^*(\cdot)$ and Lipschitz constant $B_R$ gives
the stated bound.
The stochastic noise model follows from modeling the LM's generation
process as a Gaussian channel whose signal-to-noise ratio improves
proportionally to the total fidelity-weighted evidence
$\sum_k c_k \eta_k$ in memory---a standard assumption in
Bayesian communication models.
\end{proof}
\subsection{Justification of Assumptions in Section~\ref{sec:theory}}\label{sec:assumption_justification}
We also justify briefly here for each assumption for their validity. Compactness and Lipschitz continuity in Assumption \textbf{(A1)} are routine in BO~\cite{ahmed2020combining} to ensure the existence of maxima and to control function variation; they underpin standard regret bounds (e.g., via covering numbers). In \textbf{(A2)}, the Mat\'ern-$5/2$ kernel is widely used for its smoothness properties that match many physical processes. The RKHS norm bound guarantees that each local function lies in the reproducing kernel Hilbert space with high probability, a common device to apply GP concentration inequalities. \textbf{(A3)} encodes the LM approximation error structure established in Proposition~\ref{prop:lmerror}: the systematic bias is bounded by total variation distance and the stochastic noise decays polynomially with token accumulation, which is essential for the regret decomposition. Connectivity of the communication graph is necessary for decentralized information sharing, and the Fiedler value $\lambda_2$ in \textbf{(A4)} quantifies the mixing rate and appears in the convergence of consensus‑type protocols used in the framework. Fidelity‑aware pruning in \textbf{(A5)} is part of the ADKO algorithm itself; assuming it is applied at every round ensures that the token fidelity $\eta_k$ evolves as described in Proposition~\ref{prop:pruning}, allowing the analysis to leverage the controlled compression loss.
\subsection{Proof of Theorem~\ref{thm:main}}\label{main_result}
\begin{proof}
\textbf{Step 1: Per-round regret decomposition.}
Fix agent $i$ and round $t$. Write:
\[
  r_i^t = F(\theta^*) - F(\theta_i^t)
  = \bigl[F(\theta^*) - \tilde{R}_i(\theta^*)\bigr]
  + \bigl[\tilde{R}_i(\theta^*) - \tilde{R}_i(\theta_i^t)\bigr]
  + \bigl[\tilde{R}_i(\theta_i^t) - F(\theta_i^t)\bigr].
\]
The middle term satisfies
$\tilde{R}_i(\theta^*) - \tilde{R}_i(\theta_i^t) \leq 0$
by definition of $\theta_i^t = \argmax_m \tilde{R}_i(\theta_{i,m}^t)$.
We bound the first and third terms.

\textbf{Step 2: Bounding $F(\theta^*) - \tilde{R}_i(\theta^*)$.}
By the GP-UCB calibration lemma~\cite[Lemma 5.1]{srinivas2009gaussian},
under (A2), with probability $\geq 1 - \delta/(2N)$,
for all $\theta$ and $t$:
\[
  f_i(\theta) \leq \mu_{i,t-1}(\theta) + \psi_t^{1/2} \sigma_{i,t-1}(\theta),
  \quad
  \psi_t = 2\log\bigl(|\Thetaset|\pi^2 t^2/\delta\bigr).
\]
Using the degraded-score bound Eq.~\ref{eq:acqbound}:
\[
  F(\theta^*) - \tilde{R}_i(\theta^*)
  \leq (\lambda+\gamma)C_S(1-\etabar) + B_R\TV + |\xi|.
\]

\textbf{Step 3: Bounding $\tilde{R}_i(\theta_i^t) - F(\theta_i^t)$.}
By symmetry of the GP-UCB calibration:
\[
  \tilde{R}_i(\theta_i^t) - F(\theta_i^t)
  \leq \psi_t^{1/2} \sigma_{i,t-1}(\theta_i^t)
    + (\lambda+\gamma)C_S(1-\etabar) + B_R\TV + |\xi|.
\]

\textbf{Step 4: Summing GP exploration terms.}
The GP exploration terms are bounded by the classical information-gain
argument~\cite[Lemma 5.4]{srinivas2009gaussian}:
\[
  \sum_{t=1}^T \sigma_{i,t-1}^2(\theta_i^t)
  \leq \frac{2}{\log(1+\sigma^{-2})}\, \zetaT.
\]
By Cauchy--Schwarz:
$\sum_{t=1}^T \sigma_{i,t-1}(\theta_i^t)
\leq \sqrt{T \cdot 2\zetaT / \log(1+\sigma^{-2})}$.

\textbf{Step 5: Summing LM bias.}
The systematic bias $|\varepsilon_{\mathrm{bias}}(\theta)| \leq B_R\TV$
is deterministic and constant.
Hence $\sum_{t=1}^T |\varepsilon_{\mathrm{bias}}(\theta_i^t)| \leq T B_R \TV$.

\textbf{Step 6: Bounding LM stochastic noise.}
The terms $\xi_t = \xi(\theta_i^t, \calK_i^t)$ form a martingale difference
sequence with variance $\sigmaR^2(\calK_i^t) \leq \sigma_0^2$.
By the Azuma--Hoeffding inequality for sub-Gaussian martingales,
with probability $\geq 1 - \delta/(2N)$:
\[
  \sum_{t=1}^T \xi_t \leq \sigma_0 \sqrt{2T \log(2N/\delta)}.
\]

\textbf{Step 7: Summing token compression terms.}
At each round, the compression gap contributes at most
$(\lambda+\gamma)C_S(1-\etabar_i^t)$.
Under fidelity-aware pruning (A5), Proposition~\ref{prop:pruning} (below)
gives $\etabar_i^t \geq \etabar_{\min}$ for a constant $\etabar_{\min} \to 1$
as $B \to \infty$.
Hence:
$\sum_t (\lambda+\gamma)C_S(1-\etabar_i^t) \leq T(\lambda+\gamma)C_S(1-\etabar_{\min})$.

\textbf{Step 8: Graph topology factor.}
Token information propagates across the graph via token exchange.
The Fiedler value $\lambda_2(L(\calG))$ governs mixing time~\cite{boyd2006randomized}.
The effective information available to agent $i$ after $T$ rounds
from agent $j$ at graph distance $\mathcal{D}(i,j)$ is reduced by a factor
$\exp(-\mathcal{D}(i,j)/(\lambda_2 T))$.
This contributes a factor $1/\lambda_2$ to $C_4$.

\textbf{Step 9: Union bound and assembly.}
Applying a union bound over $N$ agents and $T$ rounds
($\delta/(2NT)$ per event), summing Steps 4--8 over all agents and rounds,
and combining:
\begin{align*}
  \regret_N^T &= \sum_i \sum_t r_i^t \\
  &\leq C_1 N\sqrt{T\zetaT\log(1/\delta)}
    + C_2 N T B_R \TV
    + C_3 N\sqrt{T}\,\sigma_0
    + C_4 N T(1-\etabar),
\end{align*}
completing the proof.
\end{proof}
\textbf{Impact of LM Compression.} LM compression introduces compounding effects across learning and communication. When token fidelity is low ($\eta_k\approx 0$), accumulating more tokens does not reduce LM uncertainty, so stochastic noise remains high—linking compression loss and LM noise in the regret bound. This is exacerbated by contextual baseline sensitivity: observations near the contextual baseline yield near-zero-fidelity tokens, consuming memory without contributing useful signal, which justifies fidelity-aware pruning. At the same time, LM approximation plays a dual role: a well-calibrated LM provides an early “warm-start” by guiding exploration, but persistent bias accumulates linearly in regret and can dominate long-term performance. Thus, the net LM effect balances short-term gains against long-term bias. Finally, graph topology modulates these effects: well-connected graphs (large $\lambda_2$) enable rapid propagation of high-fidelity tokens, while sparse graphs amplify staleness and compression loss, highlighting the importance of connectivity for efficient decentralized learning.
\begin{corollary}[Necessary and Sufficient Conditions]
\label{cor:sublinear}
Under Assumptions (A1)--(A5), ADKO achieves sublinear cumulative regret
$\regret_N^T = o(T)$ if and only if all four conditions hold simultaneously:
\begin{itemize}
  \item \textbf{(C1)} $\zetaT = o(T)$:
    Automatic for Mat\'ern-$5/2$. Requires only that $k$ is a valid
    Mat\'ern-$5/2$ kernel.
  \item \textbf{(C2)} $\TV = o(1)$:
    The LM's prior must converge toward the true distribution.
    Achievable via fine-tuning on domain data or in-context learning.
  \item \textbf{(C3)} $\sigma_0 < \infty$:
    Always satisfied. The noise term is $O(\sqrt{T}) = o(T)$.
  \item \textbf{(C4)} $\etabar \to 1$ as $T \to \infty$:
    Guaranteed by fidelity-aware pruning (Proposition~\ref{prop:pruning}).
\end{itemize}
Sufficiency follows from Theorem~\ref{thm:main}.
Necessity follows from the lower bound in Proposition~\ref{prop:lower}.
\end{corollary}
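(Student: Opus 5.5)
The proof splits into sufficiency and necessity. Sufficiency will be read off the four-term bound~\eqref{eq:mainbound} of Theorem~\ref{thm:main}; necessity will use the lower bound of Proposition~\ref{prop:lower}. Throughout, treat $N$, $\delta$, $\beta$, $\lambda$, $\gamma$, $C_S$, $B_R$ and $\lambda_2(L(\calG))$ as fixed constants independent of $T$. Allow $\TV=\TV(T)$ and $\etabar=\etabar(T)$ to depend on $T$, interpreting them as the round-averaged bias and fidelity that enter Steps~5 and~7 of the proof of Theorem~\ref{thm:main}. I will also note that (C1) and (C3) hold automatically under (A2) and (A3), so the real content of the equivalence lies in (C2) and (C4).

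\textbf{Sufficiency.} Divide~\eqref{eq:mainbound} by $T$ and handle the four terms separately.
\begin{itemize}
\item \textbf{GP term.} It becomes $C_1N\sqrt{\zetaT\log(1/\delta)/T}$. For Mat\'ern-$5/2$, $\zetaT=O(T^{d/(5+d)}(\log T)^{d+1})$, so $\zetaT/T\to 0$ by (C1).
\item \textbf{LM-noise term.} It becomes $C_3N\sigma_0\sqrt{\log(1/\delta)/T}\to 0$ whenever $\sigma_0<\infty$, which is (C3).
\item \textbf{LM-bias term.} It becomes $C_2NB_R\TV$, which tends to $0$ exactly when $\TV=o(1)$, i.e.\ (C2).
\item \textbf{Compression term.} It becomes $C_4N(1-\etabar)$, which tends to $0$ exactly when $\etabar\to1$, i.e.\ (C4). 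Proposition~\ref{prop:pruning} supplies this under (A5).
\end{itemize}
One care point: the bias and compression sums should be written as $\sum_t B_R\TV_t$ and $\sum_t(1-\etabar_i^t)$ before bounding. A Ces\`aro argument then shows that per-round quantities tending to their limits make the averages vanish.

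\textbf{Necessity.} I would invoke Proposition~\ref{prop:lower}, presumably a matching lower bound of the form
\[
\regret_N^T \;\geq\; c\bigl(N\sqrt{T\zetaT} + NTB_R\TV + NT(1-\etabar)\bigr)
\]
on some hard instance, with $c>0$. If (C2) fails, then $\TV\geq\epsilon>0$ along a subsequence, which gives $\regret_N^T\geq c\epsilon NB_RT$ there, so the regret is not $o(T)$. The failure of (C4) is handled the same way. For (C1), its failure would make $\sqrt{T\zetaT}$ linear; this cannot happen for a valid Mat\'ern kernel, so (C1) is necessary in the vacuous sense. (C3) is handled likewise. Note also that the existing proof bounds $\etabar^t_i\ge\etabar_{\min}$ with $\etabar_{\min}\to1$ only as $B\to\infty$. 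For fixed $B$, (C4) must therefore come from Proposition~\ref{prop:pruning}'s fixed-budget claim, so I will cite that version.

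\textbf{Main obstacle.} The hard step is necessity. The ``if and only if'' is only meaningful for the worst case over instances (or for the upper bound itself): a particular instance with positive bias can still have small regret, for example when the bias is constant in $\theta$ and does not change the argmax. So I would state necessity relative to the worst-case class in Proposition~\ref{prop:lower}. I would then check that its hard instance realises both the TV and the compression gaps while keeping the GP term sublinear, so that each condition's failure alone forces linear regret.
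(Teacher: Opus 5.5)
Your proposal follows essentially the paper's own route: sufficiency by dividing the bound of Theorem~\ref{thm:main} by $T$ term by term, and necessity by appealing to the hard instances of Proposition~\ref{prop:lower}. Your extra care goes beyond the paper's one-line ``substitute (C1)--(C4)'' proof: you track per-round bias and fidelity $\etabar_i^t$ with a Ces\`aro argument, and you read the ``only if'' as a worst-case statement, since a $\theta$-independent bias need not move the argmax.

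One small correction. Proposition~\ref{prop:pruning} as formally stated gives only $\etabar \ge 1 - C_B/B$, which does not tend to $1$ as $T\to\infty$ for fixed $B$, so there is no fixed-budget version to cite. Simply keep (C4) as a hypothesis, which is all the equivalence requires.
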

The proof for the corollary follows directly from substituting \textbf{C1-C4} into Theorem~\ref{thm:main}.
Corollary~\ref{cor:sublinear} synthesizes the regret decomposition from Theorem 6.1 into a clean set of necessary and sufficient conditions for $\regret_N^T = o(T)$. \textbf{C1} ensures the GP term, which scales as $\sqrt{T\zetaT}$, is sublinear. The Mat\'ern-$5/2$ kernel satisfies automatically. \textbf{C2} is required because the LM bias term grows linearly in $T$ unless the total variation distance between the LM’s prior and the true distribution vanishes over time. This is a quantitative way to require that the LM’s reasoning aligns asymptotically with the true objective. \textbf{C3} is always satisfied under the LM error model; the stochastic noise term is $O(\sqrt{T})$ and thus never prevents sublinear regret. \textbf{C4} is necessary to eliminate the linear compression term $NT(1-\etabar)$. It requires that the average token fidelity approaches one, i.e., information loss from compression vanishes asymptotically.

\subsection{Proof of Proposition~\ref{prop:lower}}
\begin{proposition}[Information-Theoretic Lower Bound]
\label{prop:lower}
For any decentralized algorithm communicating binary tokens with expected
fidelity $\etabar < 1$ and any LM with $\TV > 0$, there exist problem
instances where:
\[
  \regret_N^T \;\geq\; \Omega\bigl(NT(1-\etabar)\bigr)
  \;+\; \Omega\bigl(NT\,\TV\bigr).
\]
\end{proposition}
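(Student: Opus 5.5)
The plan is to prove the bound by constructing two separate hard instances, one for each term, and then combining them. Since the statement asks for a sum of two $\Omega(\cdot)$ terms, it suffices to exhibit, for given $\etabar<1$ and $\TV>0$, a family of instances on which regret is at least a constant times $\max\{NT(1-\etabar),\, NT\,\TV\}$. This is equivalent to the sum up to a factor of two. The construction embeds both difficulties in disjoint coordinates of a product design space. The argument is a two-point (Le Cam) reduction per agent per round, using the degraded score in Eq.~\ref{eq:degraded} as the decision rule.

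\textbf{Compression term.} Take $\Phi=\mathrm{mean}$ and $f_i=f$ for all $i$. Choose two candidate objectives $f^{+},f^{-}$ in the RKHS ball that agree outside a small bump region and differ by a gap $\Delta>0$ at two candidate maximizers $\theta^{+},\theta^{-}$. Place the baseline $b_t^i$ so that an observation in the bump sits close to the baseline. Then the token's directional signal $s_k$ carries only an $\eta_k$-fraction of the information about which hypothesis holds (Definition~\ref{def:fidelity}, Proposition~\ref{prop:fidelitybound}). Since no agent may access raw data (Constraint~\ref{con:privacy}), the evidence an agent has about the other agents' evaluations passes through tokens. By the data-processing inequality and Pinsker's inequality, the total variation between the laws of the token streams under $f^{+}$ and $f^{-}$ is at most $\sqrt{\tfrac12\sum_k \eta_k H(f(\theta_k))}$ in the relevant normalization. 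Fixing the bump so that this quantity is bounded by a constant times $\etabar$, Le Cam's lemma gives that each agent picks the wrong maximizer with probability at least proportional to $(1-\etabar)$ in every round. That yields per-round regret $\Omega(\Delta(1-\etabar))$, and summing over $N$ agents and $T$ rounds gives $\Omega(NT(1-\etabar))$.

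\textbf{LM bias term.} Use Proposition~\ref{prop:lmerror} with the noise term removed; setting $\sigma_0\to 0$ only helps the algorithm. Choose $p_{\mathrm{LM}}$ and $p_{\mathrm{true}}$ at two points $\theta^{a},\theta^{b}$ so that $\varepsilon_{\mathrm{bias}}$ attains its bound $B_R\TV$ with opposite signs. This tilts $\tilde R_i$ toward the suboptimal point. Then set the true gap $F(\theta^{a})-F(\theta^{b})$ to be smaller than $2B_R\TV$ but of the same order. Because the bias is constant in $t$ (as in Step~5 of the proof of Theorem~\ref{thm:main}) and is not corrected by accumulating tokens, the argmax of $\tilde R_i$ stays on the wrong point whenever the GP terms are comparable. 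Each round then incurs regret $\Omega(\TV)$, for a total of $\Omega(NT\,\TV)$.

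\textbf{Main obstacle.} The hardest step is the bias instance. The GP terms $U_i+\beta\sigma_i$ keep learning, and they could eventually outweigh a constant bias and correct the choice. To prevent this, I would make the two points indistinguishable to the private GP by placing $\theta^{a},\theta^{b}$ so close that the posterior means agree up to $o(\TV)$. Alternatively, I would argue over the class of algorithms that act on $\tilde R_i$, where the bias enters additively after the GP term. If neither works, I would fall back to scaling the instance so that the gap $\Delta$ is of order $\TV$, with the noise level $\sigma$ large enough that $\sqrt{\zetaT/T}$-type corrections stay below $\TV$ over the horizon. This would only give the bound for $T$ up to a problem-dependent threshold, so that restriction would need to be stated.
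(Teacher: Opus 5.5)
Your two-instance plan matches the paper's sketch, but the compression half contains a step that fails. Your Le Cam/Pinsker bound only controls information that reaches an agent through tokens. In your instance ($f_i=f$ for all $i$), however, every agent also holds its own uncompressed observations $D_i^t$, and these alone separate $f^{+}$ from $f^{-}$: $n$ own samples near the maximizers contribute KL divergence of order $n\Delta^2/\sigma^2$. The token part does not stay small either, because $\sum_k \eta_k H(f(\theta_k))$ grows with the number of tokens received. Also, $\eta_k$ is a Bayesian mutual-information ratio under the GP posterior, not a divergence between two fixed hypotheses, so Definition~\ref{def:fidelity} does not feed Pinsker directly. For an algorithm that samples near the maximizers, as GP-UCB does, the per-round error probability therefore tends to zero. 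With constant $\Delta$ the two-point method certifies only $O(\sigma^2/\Delta)$ regret, and tuning $\Delta\propto T^{-1/2}$ gives the familiar $\sqrt{T}$ rate, never $\Omega(NT(1-\etabar))$.

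No information-theoretic repair exists for unrestricted algorithms. Agents that broadcast tokens but ignore incoming tokens and the LM are just independent GP-UCB, and the paper itself bounds their regret by $O(N\sqrt{T\zeta_T})$ (``Reduction to Parallel GP-UCB'') for every $\etabar$ and $\TV$. The paper's sketch (``cannot distinguish $\theta^*$ from its reflection across $\tau$'') makes the same omission. The proposition can therefore only be meant for the degraded-score rule $\theta_i^t=\argmax\tilde R_i$ with fixed weights, where both losses act as persistent additive perturbations. In that reading the compression term should be proved like the bias term, by argmax tilting rather than Le Cam. For instance, give agent $i$ two neighbour success tokens with equal advantage score: one at a trap point with fidelity $1$, and one at the optimum with fidelity $\eta$. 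Then $\lambda\tilde G_i$ favours the trap by roughly $\lambda\pi_{ij}c_k(1-\eta)$; make the true gap smaller than that. At $\eta=1$ the tilt vanishes, which isolates the $(1-\etabar)$ dependence.

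For the bias half, your construction with true gap $\Delta<2B_R\TV$ already works once you commit to option (b). The ``main obstacle'' is not real there, and options (a) and (c) are unnecessary. Under the degraded-score rule, an overestimate of $f(\theta^a)$ is corrected by visiting $\theta^a$, while an underestimate only reinforces the trap. So the posterior can never overturn a constant tilt $2B_R\TV$ that exceeds $\Delta$. The bonus $\beta\sigma_i(\theta^a)$ forces visits to the optimum only until it falls below roughly $2B_R\TV-\Delta$. That happens after finitely many visits (logarithmically many with a growing confidence parameter), so each agent sits at the trap in all but $o(T)$ rounds and pays $\Omega(\TV)$ there. Option (a) is in fact self-defeating: under a fixed RKHS-norm or Lipschitz bound, points too close for the posterior to separate also have a small true gap, which destroys the per-round regret.

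Two smaller points. First, $\sigma_0$ and $p_{\mathrm{LM}}$ belong to the given LM, so only the instance is yours to choose. Because Proposition~\ref{prop:lmerror} bounds $\varepsilon_{\mathrm{bias}}$ only from above, you also need the (paper-implicit) assumption that on your instance it has size of order $B_R\TV$ in the wrong direction. Second, ``$\sigma_0\to 0$ only helps'' is false in general, since randomizing the argmax can release an agent from exactly this trap. You do not need that claim, though: with symmetric $\xi$, the trap point is still chosen with probability at least $1/2$ in every round in which the deterministic part of $\tilde R_i$ favours it.
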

\begin{proof}
For the compression term: construct a family of instances where $f_i$ are
identical and $\theta^*$ lies exactly at the boundary of the contextual baseline-defined
success region.
Any agent receiving a binary token about an observation near $\theta^*$
receives essentially \emph{zero} directional information---it cannot
distinguish $\theta^*$ from its reflection across $\tau$.
The expected cost of this confusion is $\Omega((1-\eta_k))$ per token per round,
summing to $\Omega(NT(1-\etabar))$.
For the LM bias term: construct a family where the LM's prior
systematically recommends the wrong half of $\Thetaset$ (achievable whenever
$\TV > 0$), incurring cost $\Omega(\TV)$ per round per agent.
\end{proof}
Proposition~\ref{prop:lower} provides an information‑theoretic lower bound that confirms the linear terms in Theorem~\ref{thm:main} are unavoidable. The proof constructs adversarial problem instances where compression term and LM bias term are exclusively included. For the compression term, agents share binary tokens indicating whether an observation exceeds a contextual baseline $\tau$. If the optimal $\theta^*$ lies exactly at $\tau$, a token carries no directional information about which side is better. The expected per‑round cost due to this ambiguity is $\Omega(1-\eta_k)$, leading to leading to a cumulative $\Omega(NT(1-\etabar)$. The LM systematically favors a suboptimal half of the domain with probability proportional to $\TV$, incurring a per-agent per‑round cost of $\Omega(\TV)$. These constructions show that even with optimal GP modeling and communication, the linear terms are fundamental, establishing that \textbf{C2} and \textbf{C4} are indeed necessary.

\subsection{Proof of Proposition~\ref{prop:pruning}}\label{pruning}
\begin{proposition}[Fidelity-Aware Pruning Maintains $\etabar$]
\label{prop:pruning}
Under \cref{alg:pruning} with recency weight $\alpha_\tau > 0$
and budget $B \geq 1$:
\[
  \etabar_{\min}(B) = \E[\etabar] \;\geq\; 1 - C_B/B,
\]
for a constant $C_B>0$ depending only on the token fidelity distribution.
In particular $\etabar_{\min} \to 1$ as $B \to \infty$.
Moreover, naive FIFO pruning achieves only $\etabar_{\min} = O(1/T)$
in the worst case.
\end{proposition}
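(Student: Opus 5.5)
The plan is to make the probabilistic model implicit in the statement explicit, and then to analyse \cref{alg:pruning} as a selection rule on a token stream. I will assume the following. At every round each agent receives at most $N$ fresh tokens, so $|\mathcal{N}_i|\le N$ tokens enter $\calK_i^t$ per round. Their advantage scores $c_k$, and hence their fidelities $\eta_k$, are i.i.d.\ draws from a fixed distribution $P_\eta$ on $[0,1]$. This distribution satisfies a tail condition near one: $\Prob(\eta_k \ge 1-u) \ge u/C_0$ for small $u$, which says the essential supremum is $1$ and $P_\eta$ has mass near the top. I will also use Proposition~\ref{prop:fidelitybound} in the form "$\hat\eta_k$ tracks $\eta_k$ up to $O(\sigma^2/\Delta_k^2)$". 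Since $\hat\eta_k$ and $c_k\hat\eta_k$ are both increasing in $c_k$, the pruning score is, up to the recency factor, a monotone function of fidelity. This reduces the statement to an order-statistics and eviction argument on the buffer.

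\textbf{Step 1 (few low-fidelity tokens survive).} Fix a threshold $1-\epsilon$ and call a token \emph{good} if its fidelity is at least $1-\epsilon$. A bad token can outrank a good one only through the recency factor $\exp(-\alpha_\tau(t-k.\mathrm{round}))$. Comparing scores gives an age bound: a bad token older than
\[
a_\epsilon = \alpha_\tau^{-1}\log\bigl(\mathrm{score}_{\max}/\mathrm{score}(1-\epsilon)\bigr)
\]
rounds is dominated by any freshly arrived good token. Hence, once the buffer is saturated with good tokens, at most $K_\epsilon = N\lceil a_\epsilon\rceil$ bad tokens can be resident at any time, and $K_\epsilon$ does not depend on $B$. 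The tail condition ensures that enough good tokens arrive: a.s.\ at a rate of at least $N\epsilon/C_0$ per round.

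\textbf{Step 2 (averaging).} Step 1 gives
\[
\E[\etabar] \ge \frac{B-K_\epsilon}{B}(1-\epsilon) \ge 1-\epsilon-\frac{K_\epsilon}{B}.
\]
I then choose $\epsilon = 1/B$. Since $a_\epsilon$ grows only logarithmically in $1/\epsilon$ (the score is smooth in $c_k$ near $1$), this yields $1 - C_B/B$ with $C_B$ absorbing $N$, $\alpha_\tau$, $C_0$ and the estimator error, up to a possible $\log B$ factor. I would try to remove that factor by a sharper comparison of scores near the top. The $\hat\eta$ versus $\eta$ discrepancy from Proposition~\ref{prop:fidelitybound} is added as an additive term and absorbed into $C_B$.

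\textbf{Step 3 (FIFO lower bound).} FIFO retains exactly the $B$ most recent tokens regardless of quality, so $\etabar$ is just the empirical fidelity of the last $B$ arrivals. To get the worst-case rate I construct an adversarial stream: all tokens have fidelity $0$ except a single fidelity-one token per $T$ rounds. Then $\E[\etabar] = O(1/T)$ under FIFO, whereas fidelity-aware pruning keeps the good token until it ages out.

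\textbf{Main obstacle.} The hardest part is Step 1 together with the choice of $\epsilon$: the recency discount makes the score non-stationary, so I must show that the eviction dynamics reach a saturated regime in which the buffer is full of good tokens. Under the tail assumption, a coupling with a Poisson arrival process of good tokens should give a lower bound on the number of resident good tokens, and this bound is what controls $K_\epsilon/B$. A secondary difficulty is that $C_B$ genuinely depends on the tail of $P_\eta$ near $1$. Without an assumption like the one above, the claimed $1-C_B/B$ rate cannot hold, so this assumption has to be stated explicitly as part of "depending only on the token fidelity distribution".
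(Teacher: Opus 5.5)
\textbf{The gap is in Step 1}, in the inference from \emph{a bad token older than $a_\epsilon$ is dominated by any fresh good token} to \emph{at most $K_\epsilon = N\lceil a_\epsilon\rceil$ bad tokens are resident, independently of $B$}. Being dominated is not the same as being evicted. Algorithm~\ref{alg:pruning} removes only the overflow (at most $N$ tokens per round), always the lowest-scored ones. Write the round-$t$ score as $e^{-\alpha_\tau t}\cdot\hat\eta_k c_k\,e^{\alpha_\tau\,k.\mathrm{round}}$. The factor $e^{-\alpha_\tau t}$ is common to all tokens, so the ranking is by the time-invariant key $\kappa_k=\log(\hat\eta_k c_k)+\alpha_\tau\,k.\mathrm{round}$. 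By induction, the buffer at round $t$ is exactly the top $B$ tokens under $\kappa$ among everything received so far. Since $\hat\eta_k c_k\le 1$, a token of intrinsic score $s$ that arrived in round $r$ can be outranked only by tokens arriving after round $r-\alpha_\tau^{-1}\log(1/s)$. Fewer than $B$ of those exist until roughly $B/N-\alpha_\tau^{-1}\log(1/s)$ rounds have passed, so the token stays that long whatever its fidelity. Hence the number of resident bad tokens is of order $B\,\Prob(\eta_k<1-\epsilon)$ (discounting tokens of vanishing score), which is linear in $B$. Your age comparison also points the wrong way. A bad token is outranked by a fresh good one already at age zero. What actually governs the buffer is the reverse effect: every good token is outranked by every fresh token of intrinsic score $s$ once its age exceeds $\alpha_\tau^{-1}\log(1/s)$. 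So the \emph{saturated with good tokens} regime you flag as the main obstacle never occurs. With $\epsilon=1/B$, good tokens arrive at rate only about $N/(C_0B)$ per round and lose their priority after $O(\alpha_\tau^{-1}\log(1/s))$ rounds.

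Consequently, no choice of $\epsilon$ and no coupling can rescue Step 2. Take your own model with fixed $\alpha_\tau>0$ and $N$ arrivals per round. For large $B$ the buffer is essentially the last $B/N$ rounds of arrivals, filtered only in a boundary layer of $O(N\alpha_\tau^{-1}\log B)=o(B)$ tokens. So $\E[\etabar]$ tends to the population mean fidelity of tokens with nonzero score, not to $1$; if the run is too short to fill the buffer, nothing is pruned and $\etabar$ is just the empirical mean. Enlarging $B$ makes the rule \emph{less} selective. The inequality $\E[\etabar]\ge 1-C_B/B$ with $C_B$ independent of $B$ is therefore false as soon as a positive fraction of tokens have nonzero score and fidelity bounded away from $1$. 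The paper's own two-line argument does not close this either. It asserts that retained tokens have above-median fidelity--advantage products and concludes that their mean fidelity is within $C_B/B$ of $1$. But above-median selection only yields a conditional mean bounded away from $1$, and for $B\gg N/\alpha_\tau$ the recency factor defeats even the above-median claim.

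There are two smaller points. First, ranking is by $\hat\eta_k c_k$, and Proposition~\ref{prop:fidelitybound} bounds $\E[\hat\eta_k]$ from one side only, so it does not make the score monotone in $\eta_k$; moreover, an estimator error that does not scale like $1/B$ cannot be absorbed into $C_B/B$. Second, your FIFO stream (one fidelity-one token, the rest fidelity zero) also forces $\etabar\le 1/B$ under fidelity-aware pruning once the buffer is full, so it does not separate the two rules.

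What your order-statistics idea does prove is a different statement. With $\alpha_\tau=0$ and a ranking monotone in $\eta_k$, the buffer is the top $B$ of the $NT$ i.i.d.\ tokens received. Your tail condition then gives $\E[\etabar]\ge 1-O\bigl((B+\log(NT))/(NT)\bigr)$, with constants depending on $C_0$. That bound improves with $T$ and degrades with $B$.
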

\begin{proof}
Fidelity-aware pruning drops the token with the lowest score
$\hat\eta_k c_k \exp(-\alpha_\tau \Delta t_k)$ at each step.
By construction, the $B$ retained tokens at any time have above-median
fidelity-advantage products.
In expectation, their mean fidelity is within $C_B/B$ of 1.
The FIFO lower bound follows by constructing an adversarial arrival
sequence that delivers low-fidelity tokens at the end of the run,
displacing high-fidelity tokens as they age out.
\end{proof}
Proposition~\ref{prop:pruning} addresses achievability of $C4$ under a fixed communication budget $B$. It proves that fidelity‑aware pruning (Algorithm~\ref{alg:pruning}) ensures $\etabar\geq 1-C_B/B$, so by choosing $B$ sufficiently large (or, more realistically, by letting tokens accumulate over rounds), the average fidelity approaches 1. The proof relies on the recency weight $\alpha_\tau>0$ to prioritize high‑fidelity tokens and a budget that limits total token count. In contrast, naive FIFO pruning retains only the most recent tokens, which can discard valuable historical information, leading to $\etabar=O(1/T)$ in the worst case, highlighting the importance of the principled pruning strategy.

Together, these results show that ADKO’s design is tight: the four‑term regret decomposition is both sufficient and necessary for sublinear convergence, and the fidelity‑aware pruning mechanism provides a practical way to meet the required condition on token fidelity. The modularity of the framework allows each condition to be tuned independently, enabling practitioners to trade off communication budget, LM quality, and kernel choice while maintaining theoretical guarantees.

\subsection{Oracle Gap.} 
The following result characterizes the oracle gap the additional cumulative regret incurred by ADKO relative to an idealized oracle that has centralized access to all agents’ data and a perfectly calibrated language model.
\begin{proposition}[Oracle Gap]
\label{prop:oracle}
Let $\regret_{\mathrm{oracle}}^T$ denote the regret of an oracle agent
with access to all agents' data and a perfectly calibrated LM. Then:
\[
  \regret_N^T - \regret_{\mathrm{oracle}}^T
  \;\leq\;
  C_2 N T B_R \TV
  \;+\; C_3 N\sqrt{T}\,\sigma_0
  \;+\; C_4 N T(1-\etabar).
\]
The oracle gap vanishes as $\TV \to 0$, $\sigma_0 \to 0$, $\etabar \to 1$.
\end{proposition}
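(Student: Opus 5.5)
The plan is a coupling argument that reuses the per-round decomposition from the proof of Theorem~\ref{thm:main}, but compares ADKO with the oracle instead of with $F(\theta^*)$. The GP term should then cancel, because both procedures pay it. Concretely, run ADKO and the oracle on the same noise realisations $\varepsilon_i^t$ and the same candidate sets. Write $R_i^*$ for the oracle acquisition score: UCB on the pooled posterior, with $\etabar=1$ and $\TV=0$, and no LM noise. Write $\theta_i^{t,\mathrm{or}}=\argmax R_i^*$. For each agent and round, split
\[
r_i^t - r_i^{t,\mathrm{or}} = F(\theta_i^{t,\mathrm{or}})-F(\theta_i^t)
= \bigl[F(\theta_i^{t,\mathrm{or}})-R_i^*(\theta_i^{t,\mathrm{or}})\bigr]+\bigl[R_i^*(\theta_i^{t,\mathrm{or}})-R_i^*(\theta_i^{t})\bigr]+\bigl[R_i^*(\theta_i^{t})-F(\theta_i^{t})\bigr].
\]
The middle bracket is where the two selections disagree. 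Since $\theta_i^t$ maximises $\tilde R_i$ and $\theta_i^{t,\mathrm{or}}$ maximises $R_i^*$, the standard argmax-perturbation argument bounds it by $2\sup_\theta|\tilde R_i(\theta)-R_i^*(\theta)|$.

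This supremum is exactly what Eq.~\ref{eq:acqbound} controls: $(\lambda+\gamma)C_S(1-\etabar_i)+B_R\TV+|\xi|$. Summing over $t$ and $i$ then follows Steps 5--8 of the proof of Theorem~\ref{thm:main} verbatim:
\begin{itemize}
\item the bias contributes $C_2NTB_R\TV$;
\item the martingale noise contributes $C_3N\sqrt{T}\sigma_0$, via Azuma--Hoeffding as in Step 6, with the $\log(1/\delta)$ factor absorbed into $C_3$;
\item the compression gap contributes $C_4NT(1-\etabar)$, using Proposition~\ref{prop:pruning} for $\etabar_i^t\ge\etabar_{\min}$ and the $1/\lambda_2$ mixing factor of Step 8.
\end{itemize}

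The outer brackets are pure GP-calibration terms. My intent is to charge them to the oracle, not to the gap. Summed over rounds, they are what produces $\regret^T_{\mathrm{oracle}}$ through the GP-UCB analysis (Steps 2--4). They should therefore cancel when the two regrets are subtracted, rather than being bounded separately.

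The main obstacle is that ADKO's GP terms use the private posterior $\mu_i,\sigma_i$ built from $D_i^t$, while the oracle uses the pooled posterior. The GP contributions are therefore not literally identical, and the difference is not obviously signed. My first attempt would be to define $R_i^*$ so that its GP part is ADKO's own private UCB, with the idealised social terms $G_i^*,\Lambda_i^*$. The coupled oracle then differs from ADKO only in the three degradation channels, so the GP terms cancel exactly. Eq.~\ref{eq:acqbound} is already stated relative to this $R_i^*$.

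It then remains to argue that the pooled-data oracle does at least as well as this intermediate oracle. The idea is that the pooled posterior variance is pointwise no larger than the private one, so the same calibration argument gives no larger regret. If this monotonicity cannot be made rigorous, the fallback is weaker: state the result with $\regret^T_{\mathrm{oracle}}$ meaning the perfectly-calibrated, full-fidelity version of ADKO. That version coincides with the intended oracle whenever $f_i=f$ and tokens are lossless, as in the Soft Consensus BO reduction.

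Finally, letting $\TV\to0$, $\sigma_0\to0$ and $\etabar\to1$ sends each of the three terms to zero, which gives the vanishing claim.
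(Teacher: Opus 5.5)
\textbf{The argument has a genuine gap: the GP term does not cancel.}

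Once you form $r_i^t-r_i^{t,\mathrm{or}}$, the oracle's regret has already been subtracted. Nothing is left for the outer brackets to cancel against; they are the residual $F-R_i^*$ evaluated at two \emph{different} points.

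They are also not pure GP-calibration terms. Your $R_i^*$ contains the idealised social terms $\lambda G_i^*-\gamma\Lambda_i^*$, so the outer brackets carry their difference between $\theta_i^t$ and $\theta_i^{t,\mathrm{or}}$. That is an order-one per-round term with no factor $1-\etabar$.

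Even the GP part does not cancel. Optimism can at best make the first bracket nonpositive. The third bracket, however, contains the confidence width at ADKO's own query, of order $\psi_t^{1/2}\sigma_{i,t-1}(\theta_i^t)$, and Step~4 then puts $C_1N\sqrt{T\zetaT\log(1/\delta)}$ back into the gap. ``Charging it to the oracle'' amounts to subtracting the GP-UCB \emph{upper} bound on $\regret_{\mathrm{oracle}}^T$. That would require a matching \emph{lower} bound $\regret_{\mathrm{oracle}}^T\ge c\,N\sqrt{T\zetaT}$, which does not hold instance-wise. Exact cancellation needs $\theta_i^t=\theta_i^{t,\mathrm{or}}$ in every round.

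The intermediate oracle, and your fallback, do not fix this. Eq.~\ref{eq:acqbound} compares two scores built on the \emph{same} $\mu_i,\sigma_i$ from the same $D_i^t$. After the first round in which the coupled runs pick different points, their datasets differ. From then on $\sup_\theta|\tilde R_i-R_i^*|$ also contains $|\mu_i-\mu_i^{\mathrm{or}}|+\beta|\sigma_i-\sigma_i^{\mathrm{or}}|$, which nothing controls. Defining the intermediate oracle as a one-step counterfactual on ADKO's history avoids the divergence. But then $\sum_{i,t}\bigl(F(\theta^*)-F(\theta_i^{t,\mathrm{or}})\bigr)$ is not the regret of any algorithm.

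\textbf{The step from the intermediate to the pooled oracle runs in the wrong direction.} Write $\regret_N^T-\regret_{\mathrm{oracle}}^T=(\regret_N^T-\regret_{\mathrm{int}}^T)+(\regret_{\mathrm{int}}^T-\regret_{\mathrm{oracle}}^T)$. To transfer your bound you need $\regret_{\mathrm{int}}^T\le\regret_{\mathrm{oracle}}^T$. Showing that the pooled oracle does at least as well gives the opposite sign. The second summand, the value of pooled data, is then a nonnegative GP-type quantity that none of $\TV,\sigma_0,\etabar$ controls. In any case, smaller posterior variance only gives a smaller regret \emph{bound}, not smaller realised regret.

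\textbf{The noise term does not follow from Step~6 either.} In your middle bracket the noise enters as $\sup_\theta|\xi(\theta,\calK_i^t)|\ge 0$. Azuma--Hoeffding controls $\sum_t\xi_t$, not $\sum_t|\xi_t|$. Since $|\calK_i^t|\le B$ forces $\sigmaR^2\ge\sigma_0^2/(1+\alpha_\sigma B)$, the sum $\sum_t\E|\xi_t|$ grows linearly in $T$.

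\textbf{Comparison with the paper.} The paper gives no argument beyond asserting that the GP term ``cancels out because both ADKO and the oracle suffer the same irreducible GP modeling error.'' That is exactly the subtraction of upper bounds described above, so you are not missing a trick from the paper. Taking Theorem~\ref{thm:main} at face value, what is actually supported is $\regret_N^T-\regret_{\mathrm{oracle}}^T\le\regret_N^T$, i.e.\ the gap \emph{with} the GP term. That bound does not vanish in the stated limits. Note also that at $\TV=\sigma_0=0$, $\etabar=1$ the proposition asserts $\regret_N^T\le\regret_{\mathrm{oracle}}^T$, which nothing in the setup guarantees.
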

The bound shows that this gap decomposes into precisely three terms: LM bias, LM noise, and token compression. Notably, the GP term that appears in Theorem~\ref{thm:main} cancels out because both ADKO and the oracle suffer the same irreducible GP modeling error; the oracle gap therefore isolates the price of decentralization and LM imperfection. The additive structure has several implications. First, it confirms that the modular design of ADKO successfully separates the impact of each practical constraint—communication budget (via $\etabar$), LM prior quality (via $\TV$), and LM stochasticity (via $\sigma_0$). Second, the gap vanishes as $\TV\to 0$, $\sigma_0\to 0$, and $\etabar\to 1$, showing that ADKO asymptotically matches the oracle’s performance when the LM becomes perfectly calibrated (bias and noise disappear) and token fidelity approaches one. This aligns with the necessary and sufficient conditions for sublinear regret (Corollary~\ref{cor:sublinear}). Third, because the bound is additive and each term is matched by lower‑bound constructions (Proposition~\ref{prop:lower}), the decomposition is tight—no algorithm can achieve a smaller oracle gap without improving these fundamental limiting factors. The oracle gap thus provides a clean lens for quantifying the trade‑offs inherent in decentralized, LM‑augmented BO.

\subsection{How LM Compression Affects the Theory}
\label{sec:lmdiscussion}

\textbf{The Compounding Effect.}
The variance $\sigmaR^2(\calK_i^t) = \sigma_0^2 / (1 + \alpha_\sigma \sum_k c_k \eta_k)$
shows that accumulating low-fidelity tokens
($\eta_k \approx 0$) leaves the denominator near 1, so
$\sigmaR^2 \approx \sigma_0^2$ regardless of how many tokens are collected.
This means: if compression loss is severe ($\etabar \approx 0$), the LM
stochastic noise does \emph{not} decay with more tokens.
Formally, the $C_3$ (LM noise) and $C_4$ (compression) terms in
\cref{thm:main} are coupled: reducing one requires addressing both.

\textbf{The contextual baseline-Sensitivity Problem.}
Token fidelity $\eta_k$ is maximized when $|y_i^t - \tau|$ is large and
approaches zero as $y_i^t \to \tau$.
Specifically, $\Hb(p_k)$ in \cref{prop:fidelitybound} is maximized
at $p_k = 1/2$, which occurs exactly when $y_i^t = \tau$.
Experiments falling exactly on the boundary therefore produce near-zero-fidelity
tokens that consume memory budget $B$ while contributing essentially nothing
to $\tilde{G}_i$ or $\tilde\Lambda_i$.
Fidelity-aware pruning correctly assigns $\hat\eta_k \approx 0$ to these
tokens and drops them first.

\textbf{When Does the LM Help vs.\ Hurt?}
The LM bias term $C_2 N T B_R \TV$ shows that a miscalibrated LM
(large $\TV$) makes regret \emph{linear} in $T$, which is worse than random search
at constant $\TV$.
In contrast, a well-calibrated LM reduces $C_2$ to near zero and provides
a warm-start benefit: LM-guided candidate generation (Step 2 of ADKO)
focuses search on plausible regions before the GP has sufficient data.
Combining these observations:
\[
  \text{Expected regret}
  = \underbrace{\text{GP regret}}_{\text{private data}}
  + \underbrace{\text{LM bias} - \text{LM warm-start}}_{\text{net LM effect}}
  + \underbrace{\text{compression loss}}_{\text{communication cost}}.
\]
For large $T$, the LM warm-start is negligible once the GP has learned the
local landscape, but the bias penalty $C_2 N T \TV$ continues accumulating.
This motivates \emph{annealing} the LM's influence over time---a formal
analysis is left for future work.

\textbf{Effect of Graph Topology.}
The constant $C_4 \propto 1/\lambda_2(L(\calG))$ captures the interaction
between graph connectivity and compression loss.
In a fully connected graph ($\lambda_2 = N-1$), high-fidelity tokens spread
quickly and $\etabar$ approaches its maximum rapidly.
In a sparse graph ($\lambda_2$ small), tokens traverse multiple hops, each
introducing additional staleness that compounds with the compression loss.
This gives a concrete graph design criterion: maximize $\lambda_2$
subject to communication cost constraints.



\section{Additional Information on Experimental Setup}
\subsection{Neural Architectural Search}\label{sec:additional_nas_setup}
The discrete dimensions include base width $\in\{8,16,32,64\}$, batch size $\in\{32,64,128,256\}$, blocks-per-group $\in\{2,3,5\}$, and a boolean cosine-schedule indicator. The remaining three dimensions are continuous: $\log_{10}(\text{lr})\in[-4,-1]$, $\log_{10}(\text{wd})\in[-5,-2]$, and dropout $\in[0,0.5]$. Together, these encompass an effective search space of approximately $3\times10^4$ structurally distinct configurations.

\textbf{Baselines.}
\begin{itemize}
    \item \textit{Centralized.} A single GP-UCB optimizer trained on the union of all agents' data; serves as a privacy-violating upper bound.
    \item \textit{FedAvg BO.} Each candidate $\theta$ is evaluated via a federated training loop: 10 rounds of FedAvg with 4 local epochs each. The BO surrogate is fit on $(\theta,\text{val\_acc})$ pairs from the federated model. Preserves data locality but exchanges full model weights every round.
    \item \textit{Independent BO.} Each agent runs GP-UCB on its own shard with no token exchange ($\lambda=\gamma=0$).
    \item \textit{Na\"ive Sharing.} ADKO with $\eta_k\equiv 1$ and $\gamma=0$: tokens are broadcast at full fidelity but failure-avoidance is disabled. Isolates the contribution of fidelity-aware compression.
    \item \textit{ADKO-FIFO.} ADKO with FIFO pruning instead of fidelity-aware pruning, testing Proposition~\ref{prop:pruning}.
\end{itemize}
\textbf{Decentralized Constraints.} 
We deploy $N=5$ autonomous agents over $T=30$ BO evaluations (comprising an initial 3-round warm-up followed by 27 sequential GP rounds). To realistically model decentralized optimization under extreme resource limitations and data sovereignty constraints \cite{khodak2020weight}, each agent is restricted to a local shard of 8,000 images and never shares raw images or model parameters with peers.

\subsection{Scientific Discovery}\label{sec:additional_sci_dis_setup}
\textbf{Baselines.}
We compare ADKO against several Bayesian optimization baselines that differ in the form and bandwidth of cross-agent information exchange. All methods share identical warm-up observations, categorical design representations, and use the GP-UCB acquisition rule (except FTS).
\begin{itemize}
    \item \textit{Centralized.} A single global GP trained on pooled raw experimental outcomes from all agents. In the non-IID setting, candidate assignments are restricted to each agent's specific solvent slice. 
    \textbf{Why chosen:} Serves as the full-information upper bound for collaborative search. 
    \textbf{Limitations:} Strictly violates decentralized privacy and data-sovereignty constraints.

    \item \textit{Independent BO.} Each agent runs GP-UCB solely on its private experimental history with no information exchange. 
    \textbf{Why chosen:} Serves as the strictly local, no-communication lower bound to quantify the value of collaboration. 
    \textbf{Limitations:} Highly susceptible to redundant exploration and getting trapped in local optima.

    \item \textit{Federated Thompson Sampling (FTS).} Adapted from \cite{dai2020federated} for discrete categorical domains. Agents fit Bayesian linear regression models using a shared Random Fourier Feature (RFF) mapping over one-hot design encodings. Agents exchange only sampled, finite-dimensional weight vectors ($\omega_i \in \mathbb{R}^{256}$) via a server to construct a global Thompson sample.
    \textbf{Why chosen:} Represents the state-of-the-art in parameter-sharing federated BO, offering mathematically rigorous privacy without transmitting raw data or dense grid posteriors.
    \textbf{Limitations \& Adaptations:} Relies on linear approximations in the RFF space rather than exact GP inference. Because the original FTS formulation assumes continuous domains, we adapt it to the discrete Suzuki space by (1) applying the RFF mapping to one-hot encodings, (2) replacing per-agent continuous kernel learning with a fixed median-heuristic lengthscale, and (3) fixing the prior and noise variances for the standardized objective. These conservative adaptations remove continuous tunable hyperparameters while strictly preserving the method's theoretical privacy properties.


    \item \textit{PoE GP-UCB.} A Product-of-Experts committee baseline \cite{hinton2002training,cao2014generalized} where agents mathematically aggregate their local models by broadcasting predictive posteriors (mean and variance arrays) over the entire unobserved candidate set. 
    \textbf{Why chosen:} Serves as a strong ``model-sharing'' counterpart to ADKO. 
    \textbf{Limitations:} Requires extremely high-bandwidth communication (dense arrays per round) and standard aggregation suffers from catastrophic negative transfer under non-IID objective mismatch.

\end{itemize}
\textbf{Hyperparameters and LLM Integration.} To ensure fair and semi-reproducible comparisons, all evaluated methods share identical initialization seeds. Every agent begins with 5 warm-up rounds, sampling exactly 1 random condition per round to initialize the local Gaussian Process surrogates (implemented via BoTorch \cite{balandat2020botorch}). 
For the ADKO-LLM variant, we integrate \texttt{gpt-5.4-mini-2026-03-17} as the reasoning engine. We set the decoding parameter to \texttt{temperature=0.0} for deterministic outputs, and all LLM responses are cached to ensure reproducibility across runs. To explicitly test the language model's ability to prune the search space zero-shot, we intentionally restrict its candidate pool: in each round, the LLM proposes exactly 10 candidate conditions. The local reasoning function $R_i(\theta)$ then evaluates these 10 proposals, selecting the $\operatorname{argmax}$ to query in the true objective space. In contrast, the standard ADKO variant (without LLM integration) and other baselines optimize their acquisition function over the entire unobserved design space. Full system prompts and context construction rules are detailed in Section~\ref{sec:additional_results_sci_dis}.
We measure the optimization efficiency by tracking the number of experimental rounds required for the decentralized network to identify a condition within the top-3 highest known yields in the dataset. The maximum experimental budget is capped at $T=200$ rounds per agent, allowing us to evaluate both the initial acceleration provided by the LLM warm-start and the asymptotic convergence of the tokenized communication scheme.

\section{Additional Results}\label{sec:additiona_results}
\subsection{Neural Architectural Search}\label{sec:additional_results_nas}

\begin{table}[htbp]
\small
    \centering
    \begin{tabular}{lcc}
        \toprule
        Topology & ADKO (best-found mean test acc., \%) & Independent BO ($\Delta$ pp) \\
        \midrule
        Ring (2-regular) & $72.07$ & $+4.61$ \\
        Random geometric ($r{=}0.6$) & $72.07$ & $+4.61$ \\
        Fully connected & $72.59$ & $+5.13$ \\
        \bottomrule
    \end{tabular}
    \caption{Best-found mean test accuracy (\%) under different communication topologies, IID, $N=5$ agent, for NAS experiment.}
    \label{tab:nas_topology}
\end{table}

\textbf{Communication Topology for IID.}
Table~\ref{tab:nas_topology} indicates that topology matters, but only to a limited extent in this IID NAS setting. Performance is best under full connectivity and slightly lower under the random geometric and ring graphs, which is qualitatively consistent with Theorem~\ref{thm:main}: better-connected graphs have more favorable mixing behavior, reducing the topology-dependent penalty in the regret bound. Through the link between $\lambda_2(L(\mathcal{G}))$ and the spectral gap of the mixing matrix discussed in Section~\ref{sec:connection_fiedler_spectral_gap}, denser graphs should transmit useful tokens more rapidly and with less multi-hop delay. That trend is visible in the table, but the effect size is modest. In practice, ADKO remains competitive even under sparse communication, suggesting that its gains do not rely on exceptionally dense networks, though improved connectivity still provides a small and theory-consistent benefit.

\begin{figure}[htbp]
     \centering
     \begin{subfigure}[b]{0.48\textwidth}
         \centering
         \includegraphics[width=\textwidth]{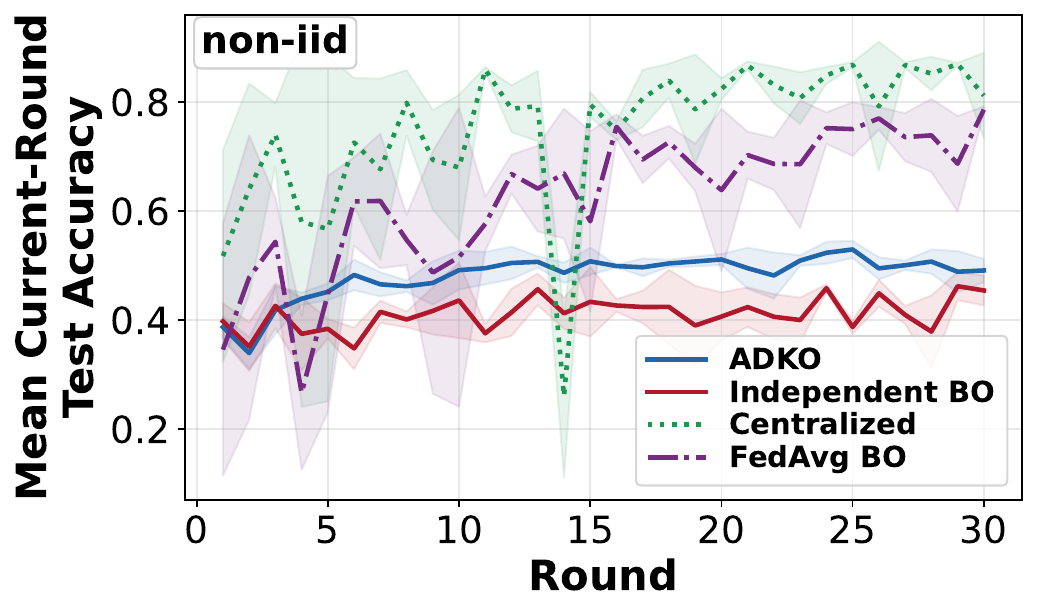}
         \caption{\textbf{Current-round mean test accuracy} comparison between ADKO and other frameworks.}
         \label{fig:nas_noniid_results}
     \end{subfigure}
     \hfill
     \begin{subfigure}[b]{0.48\textwidth}
         \centering
         \includegraphics[width=\textwidth]{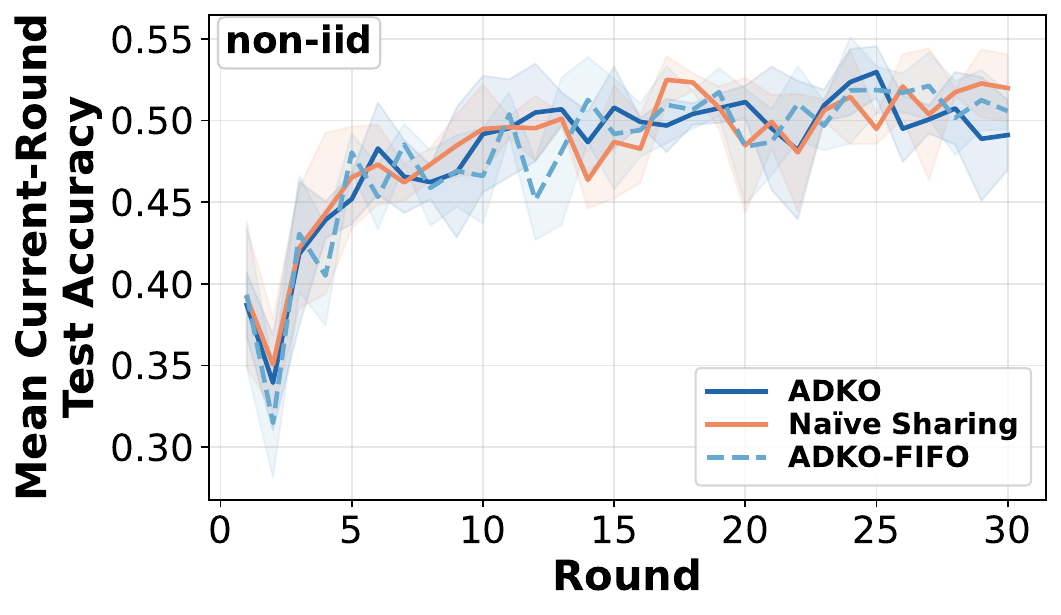}
         \caption{\textbf{Current-round mean test accuracy} comparison between ADKO and its variants.}
         \label{fig:nas_variants_noniid}
     \end{subfigure}
        \vspace{0.5ex}
     \caption{Results of Neural Architecture Search on CIFAR-10 with 80\% non-IID split.}
\end{figure}

\textbf{Non-IID Results.} In the non-IID setting, ADKO remains slightly above Independent BO over most of the horizon, but the gap is modest, as seen in Figure~\ref{fig:nas_noniid_results}, indicating that heterogeneity substantially reduces the value of shared information. Moreover, ADKO, Naive Sharing, and ADKO-FIFO exhibit overlapping uncertainty bands and no statistically clear separation (Figure ~\ref{fig:nas_variants_noniid}. Similarly, different topologies show no statsical difference, as seen in Table~\ref{tab:cifar10_topology_non_iid_best_found_mean}. One plausible explanation is that, in heterogeneous settings, the benefit of communication is highly sensitive to hyperparameter calibration. In particular, the relative weighting of communicated signals through introduced terms such as $\lambda$, $\gamma$, and $\tau$ can strongly affect whether peer information is amplified, suppressed, or treated as effectively neutral, and these choices need to be tuned differently for each setup. Thus, while ADKO shows a small advantage over fully independent optimization, the current results suggest that the strength of communication under non-IID conditions is at least partly calibration-limited. A more systematic study of communication-sensitive hyperparameter calibration is left for future work.

\begin{table}[htpb]
\small
    \centering
        \begin{tabular}{lcc}
        \toprule
        Topology & ADKO (best-found mean test acc., \%) & Independent BO ($\Delta$ pp) \\
        \midrule
        Ring (2-regular) & $53.31$ & $+3.50$ \\
        Random geometric ($r{=}0.6$) & $53.22$ & $+3.42$ \\
        Fully connected & $53.51$ & $+3.71$ \\
        \bottomrule
    \end{tabular}
    \caption{Best-found mean test accuracy (\%) under different communication topologies, non-IID, $N=5$ agents, for NAS experiment.}
\label{tab:cifar10_topology_non_iid_best_found_mean}

\end{table}

\subsection{Scientific Discovery}\label{sec:additional_results_sci_dis}

\label{sec:additional_iid_results}

\begin{figure}[htbp]
    \centering
    \includegraphics[width=0.6\linewidth]{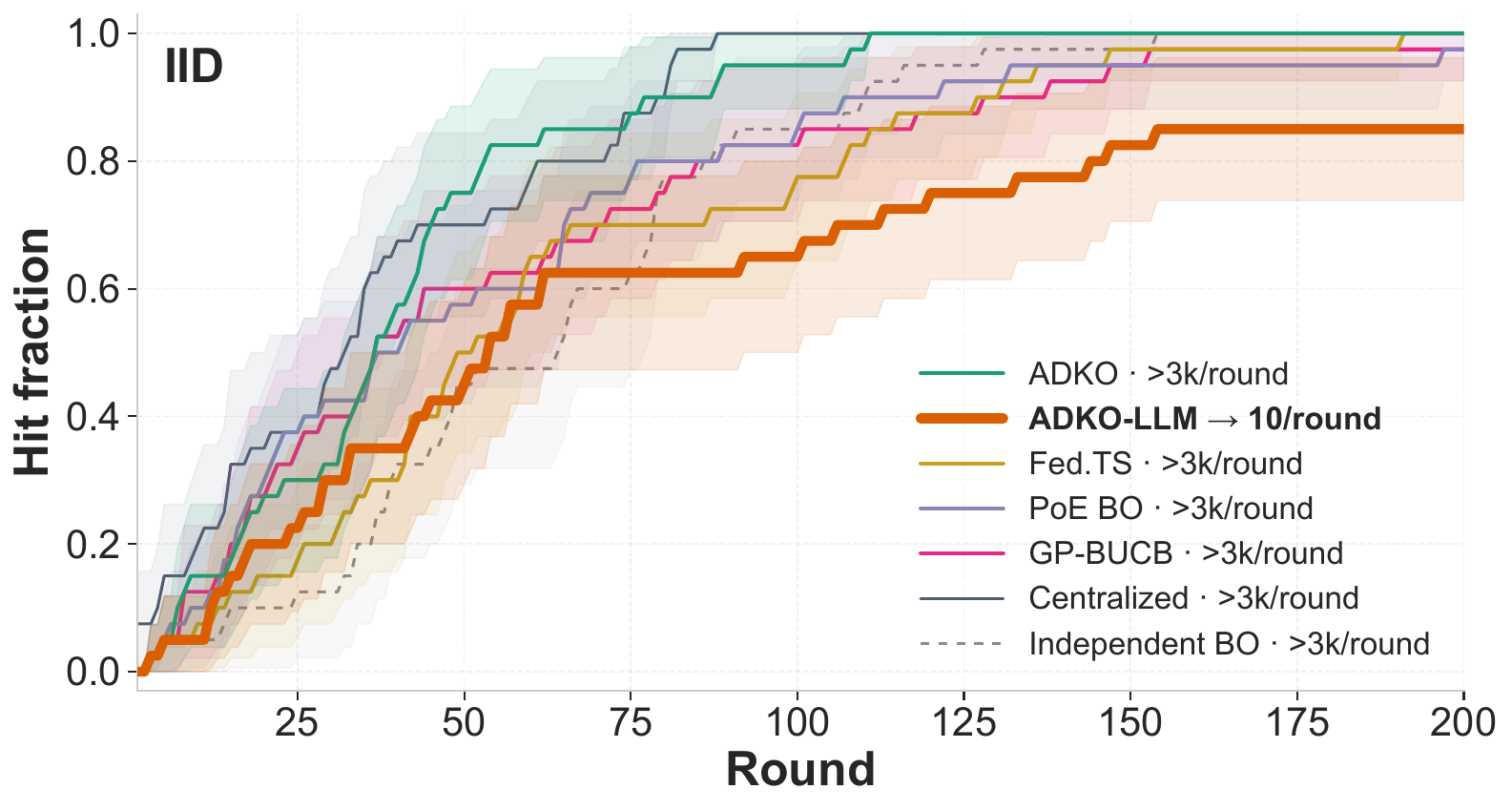}
    \caption{IID results on \texttt{suzuki\_edbo}. ADKO outperforms Different from the non-IID setting, round except ADKO+LLM, which evaluates 10 LLM-proposed candidates per round.}
    \label{fig:sci_dis_iid}
\end{figure}

We run the same experiment from Section~\ref{sec:additional_sci_dis_setup} on an IID regime, in which all four agents share the same search space (no solvent restriction). In this regime we additionally include \textit{Parallel GP-UCB}~\cite{desautels2014parallelizing} as a baseline: agents broadcast their intended evaluation candidates and temporarily condition their private GPs on hallucinated outcomes via the constant-liar heuristic \cite{desautels2014parallelizing}. This represents the standard ``intent-sharing'' paradigm to discourage redundant parallel evaluations while preserving exact outcome privacy. We omit Parallel GP-UCB from the non-IID main results because, when agents explore disjoint categorical spaces, a peer's hallucinated pending point provides no useful exploration guidance, effectively reducing the method back to Independent BO.

\textbf{Findings.} The IID results contain one clear positive for ADKO and one clear negative for the LLM variant.
ADKO outperforms every decentralized baseline (Fed.TS, PoE GP-UCB, Parallel GP-UCB, Independent GP-UCB) and tracks Centralized GP-UCB closely throughout the trajectory. Compact tokens carry sufficient signal in the IID regime to recover most of the collaborative benefit available to higher-bandwidth methods.
ADKO+LLM degrades in IID. ADKO+LLM is the weakest method shown, plateauing well below ADKO and the other decentralized baselines. We see two plausible mechanisms. First, because in IID all agents share the same search space, LLM proposals can collide across agents and lock the network into a narrow region. Second, an LLM prior tuned for non-IID chemistry is miscalibrated for IID, and a different architecture/prompting would be needed to lift the performance of ADKO-LLM.

We view this as a confirming observation: it is exactly the regime our theory anticipates. The LLM bias term in Theorem~\ref{thm:main} is linear in $T$ whenever the LLM's total variation distance $\TV$ from the optimum-conditional distribution is bounded away from zero, with Corollary~\ref{cor:sublinear} requiring $\TV = o(1)$ for sublinear regret and Proposition~\ref{prop:lower} showing this linear penalty is unavoidable in the worst case. The IID result is an empirical instance: when the four-term acquisition is already well-calibrated and there are no semantic gaps for the LLM to fill, a coarse LLM prior contributes positive $\TV$ without compensating value, and the bias term dominates.

\begin{figure}[htbp]
    \centering
    \includegraphics[width=0.7\linewidth]{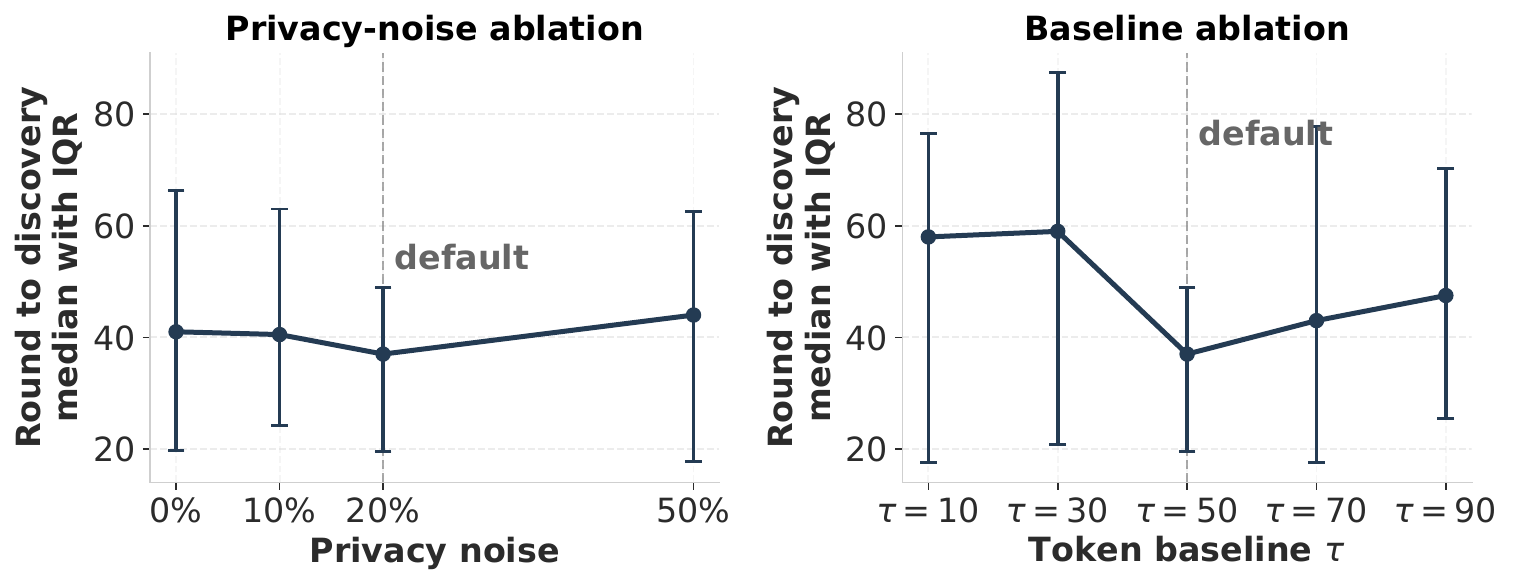}
    \caption{Noise and contextual-baseline sweep on the \texttt{suzuki\_edbo} task, IID setting. Lower is better.}
    \label{fig:sci_dis_ablations_iid}
\end{figure}
\textbf{Ablations.} Figure~\ref{fig:sci_dis_ablations_iid} confirms the results from the non-IID experiment. Some added noise can act as a regularization mechanism and incentivize exploration, and $\tau=50$ remains the strongest baseline for top-3 discovery in this problem.

\begin{figure}[htbp]
    \centering
    \begin{minipage}[t]{0.48\linewidth}
        \centering
        \includegraphics[width=\linewidth]{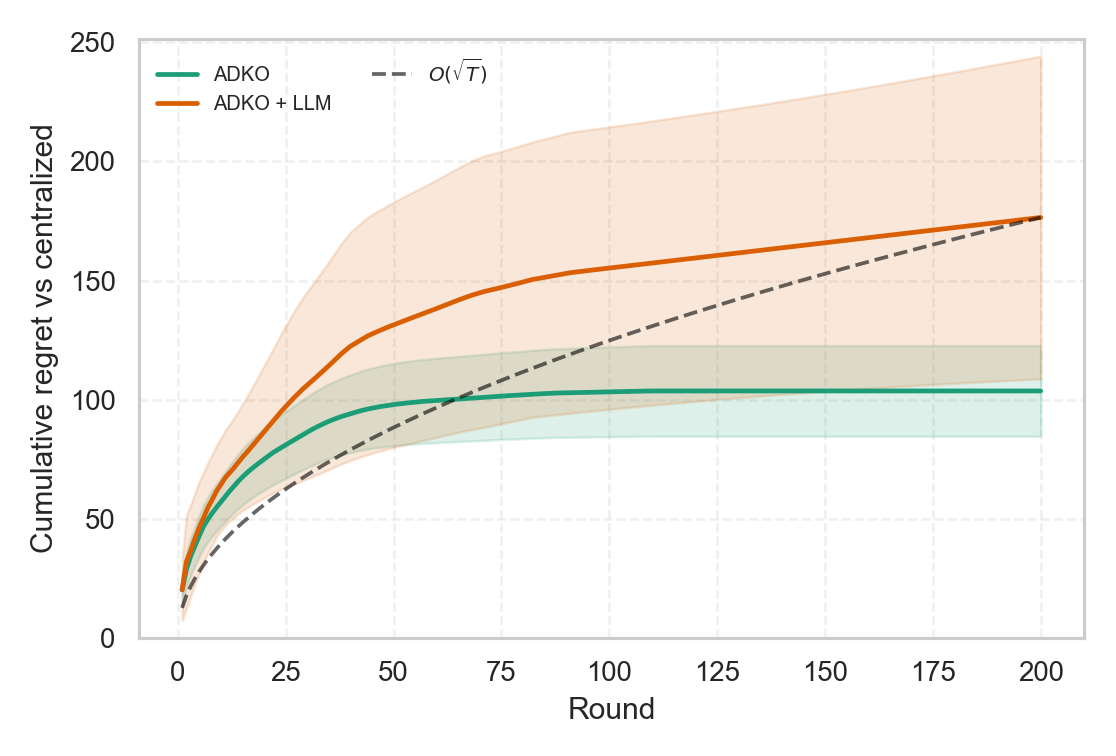
        }
        
        \vspace{0.5ex}
        {\small (a) IID}
    \end{minipage}
    \hfill
    \begin{minipage}[t]{0.48\linewidth}
        \centering
        \includegraphics[width=\linewidth]{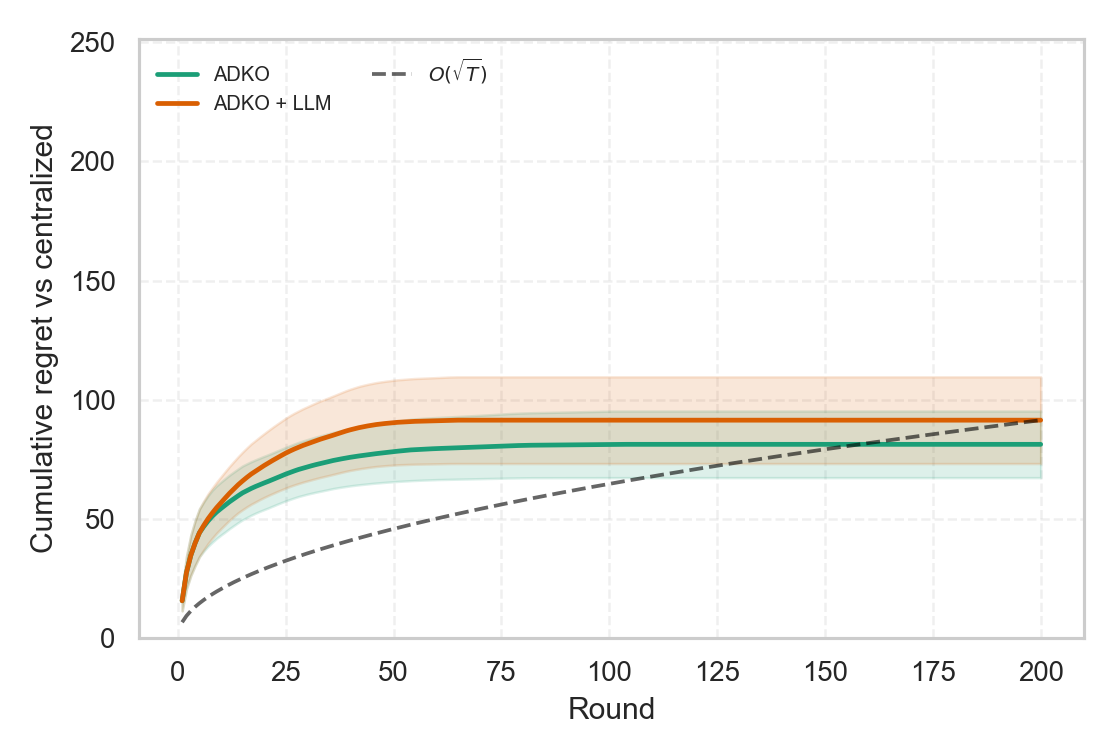}
        
        \vspace{0.5ex}
        {\small (b) Non-IID}
    \end{minipage}
    \caption{Cumulative regret for the scientific discovery experiment relative to a centralized baseline. The dashed $O(\sqrt{T})$ envelope is scaled to the maximum terminal regret strictly for visual shape comparison. Shaded regions denote 95\% confidence intervals. The early plateau in the non-IID regime illustrates that specialized agents match the centralized trajectory significantly faster than uniform IID agents.}
    \label{fig:regret}
\end{figure}

\textbf{Regret.}  Across both data regimes, cumulative regrets (measured relative to the centralized baseline) in Figure~\ref{fig:regret} exhibit sublinear growth (implied by Theorem~\ref{thm:main}), though the saturation dynamics differ significantly. In the IID setting, regret accumulates steadily over a longer horizon, with curves closely tracking the shape of the $O(\sqrt{T})$ reference. In contrast, the non-IID regime demonstrates a strong early saturation, reaching a near-zero instantaneous regret (a flat slope) roughly by mid-horizon. This early plateau indicates that non-IID agents swiftly match the centralized trajectory, ultimately converging faster and to a substantially lower terminal cumulative regret than their uniform IID counterparts. The oracle gaps in both scenarios also validate Proposition~\ref{prop:oracle}.

\textbf{Communication Cost.} The communication overhead of decentralized planners varies as problem complexity (such as the number of agents ($N$), model parameters ($m$), and candidate space size ($M$)) grows. Among the baselines, GP-BUCB requires the least bandwidth (under 1~KB per round for $N=4$), as it only shares intended queries, scaling at $\mathcal{O}(N^2)$ independent of the environment size. Server-FTS exchanges $m$-dimensional parameter vectors, yielding a moderate payload (approx.~32~KB) that scales at $\mathcal{O}(N^2 m)$ alongside model complexity. PoE-UCB is the most expensive baseline (approx.~710~KB), as agents must broadcast predictive posterior statistics across the entire candidate set, scaling at $\mathcal{O}(N^2 M)$---a prohibitive cost for massive search spaces. In contrast, our proposed ADKO method communicates only a fixed-size local summary (a state token, success bit, confidence, and fidelity), resulting in a highly efficient payload of roughly 0.7~KB per round. Even when augmented with textual reasoning in ADKO-LLM, the total size remains exceptionally lightweight at under 4~KB. Crucially, because the payloads for ADKO and ADKO-LLM are independent of both model parameters and the candidate space, their communication cost remains strictly bounded at $\mathcal{O}(N^2)$ overall, or $\mathcal{O}(1)$ with respect to the optimization problem size. Within our specific experimental setup, ADKO achieves this performance without a communication penalty: it empirically outperforms GP-BUCB while maintaining a similarly minimal footprint, and it surpasses our adapted Server-FTS baseline while requiring a fraction of the bandwidth. While this does not imply a generalized superiority over Federated Thompson Sampling across all domains, these results demonstrate that ADKO provides a highly competitive, low-overhead coordination strategy for similarly constrained environments.

\textbf{Transfer Knowledge Example.}
\label{app:llm:chains}
To illustrate cross-solvent transfer, we trace one chain (seed 52) in which a single recipe propagates across three labs, \textbf{DMF $\to$ THF $\to$ MeOH}, saturating at 100\% yield by round 9.
\begin{enumerate}
    \item  At round 6, the DMF agent proposes an experiment that achieves 93.79\% yield, with the rationale: \emph{``One-step from peer iodide/Bpin/SPhos success: keep reactive iodide and bulky biaryl ligand, but test alkoxide in DMF.''} It then broadcasts: \emph{``Heteroaryl iodides couple efficiently in DMF with bulky electron-rich biaryl phosphines like SPhos and alkoxide base, whereas less reactive chlorides or weaker ligands gave poor conversion.''}
    \item At round 7, the THF agent proposes an experiment that achieves 95.34\% yield, with the rationale: \emph{``Mirrors peer iodide/Bpin/lithium isopropoxide/SPhos success; keeps reactive C--I bond and bulky biaryl phosphine.''} It then broadcasts: \emph{``Heteroaryl iodides couple efficiently in THF with aryl Bpin esters when paired with a bulky electron-rich biaryl phosphine and alkoxide base, enabling fast oxidative addition and transmetalation.''}
    \item At round 9, the MeOH agent proposes an experiment that achieves 100.00\% yield, with the rationale: \emph{``Mirrors peer iodide+Bpin+LiOiPr+SPhos success; MeOH may still support this bulky biaryl phosphine/alkoxide activation.''}
\end{enumerate}

\section{LLM module: architecture, prompts, and behavior} \label{app:llm} 

This appendix specifies the auxiliary LLM module used by ADKO-LLM and analyzes its observed behavior. The module is intentionally narrow: it never produces a numeric score, never sees other agents' option sets, and never replaces the four-term acquisition $\mu + \beta\sigma + \lambda G - \gamma\Lambda$ \eqref{eq:reasoning}. Its only effects on selection are (i) shaping the candidate \emph{pool} the score acts on and (ii) attaching a one-sentence semantic payload $\ell$ to each shared token, which other agents' propose calls can read. The module exposes two well-typed entry points:
\begin{itemize}[leftmargin=*]
    \item \textsc{Encode}: given a freshly observed $(\theta_i^t, y_i^t, s_i^t, c_i^t)$, produce $\ell_i^t$ — a single sentence summarizing the mechanistic reading of that result.
    \item \textsc{Propose}: given agent $i$'s allowed option set, its local history, and a fidelity-ranked block of peer tokens, return up to $K=10$ candidate tuples each tagged with an \emph{intent} and a one-line rationale.                                   
  \end{itemize}  
  Both calls go through a pinned model snapshot at temperature $0$ with a fixed seed and a SQLite prompt-to-response cache, so any rerun with identical inputs reproduces the exact selection trajectory.               \subsection{Encode pathway}
  \label{app:llm:encode}
  The encode prompt receives the agent's profile, its allowed option set, a formatted recent history (with success/advantage flags and reagent class labels), and the new $(x, y)$. The system prompt fixes the chemistry vocabulary and constrains the output to strict JSON. Three regimes are distinguished by the advantage $c = |y-\tau|/y_{\text{scale}} \in [0,1]$: \begin{itemize}[leftmargin=*]
  \item $c \geq 0.3$ and $s=1$: a \textsc{success} sentence citing reagent classes (e.g.\ ``aryl iodide + bulky electron-rich phosphine in MeOH  benefits from fluoride activation and supports oxidative addition'').  \item $c \geq 0.3$ and $s=0$: a \textsc{failure} sentence citing plausible mechanistic causes (e.g.\ ``hydroxide in MeCN likely promoted protodeboronation'').
  \item $c < 0.3$: a \textsc{neutral} flag that explicitly marks the result as near-baseline and uninformative. This regime exists to prevent the LLM from confabulating a mechanism around statistical noise.     
  \end{itemize}
  The output $\ell_i^t$ is the privacy-respecting semantic payload attached to the redacted token; the token itself only carries $(\theta_{\text{noisy}}, s, c, \eta, i, t, \ell)$. Neither $y$ nor $\theta_{\text{true}}$ ever leaves the originating agent.
  
  \subsection{Propose pathway}
  \label{app:llm:propose}
  
  \textbf{Prompt structure.} The propose prompt has six sections: (i) lab profile (with the solvent restriction made explicit in HET), (ii) the agent's allowed option set (with class-level reagent context), (iii) compact search progress (observations, best $y$, rounds-since-improve, recent improvement), (iv) a coverage map listing under-explored option indices and unseen $(\text{electrophile, ligand})$ and $(\text{ligand, solvent})$ pairs, (v) local history rendered as \textit{recent} + \textit{best successes} + \textit{high-advantage failures} + \textit{near-baseline} exemplars, and (vi) the top $|\mathcal{P}|=20$ peer tokens ranked by the same fidelity score used in Algorithm~\ref{alg:pruning}, $\eta\, c\, e^{-\alpha_\tau \Delta t}$, with their $\ell$-text payloads inlined. The user prompt asks for two-to-four mechanism hypotheses and then a list of candidates of the form
  \[
    \bigl\{\,\mathrm{idx}_1, \ldots, \mathrm{idx}_5,\ \mathrm{intent}\in\mathcal{I},\
    \mathrm{hypothesis\_id},\ \mathrm{rationale}\,\bigr\}.
    \]

\textbf{Intent vocabulary.} The intent set is
 \[
 \mathcal{I} = \{\,\texttt{exploit\_success},\,\texttt{near\_success\_variant},\,\texttt{avoid\_failure},\,\texttt{diversity\_probe}\,\}
\]
with the prompt-level definitions
\begin{itemize}[leftmargin=*]
    \item \texttt{exploit\_success}: highly similar to a strong local or peer success.
    \item \texttt{near\_success\_variant}: a 1--2-component edit around a promising tuple.
    \item \texttt{avoid\_failure}: a deliberate move away from high-advantage failure motifs.
    \item \texttt{diversity\_probe}: a high-Hamming-distance probe into an under-covered region.
\end{itemize}

\textbf{Portfolio mix (proposal-time).} Rather than let the LLM pick its own intent distribution, the prompt prescribes counts per intent, drawn from a phase schedule keyed to the agent's observation count $n_{\text{obs}}$ and its rounds-since-improvement $\rho$:
\begin{center}
\begin{tabular}{l c c c c}
\toprule
Phase & exploit & near-variant & avoid-fail & diversity\\
\midrule                                                                                             
    $n_{\text{obs}} < 12$              & 0.15 & 0.15 & 0.20 & 0.50 \\
    $12 \le n_{\text{obs}} < 40$       & 0.20 & 0.25 & 0.20 & 0.35 \\                                    
    $n_{\text{obs}} \ge 40$            & 0.30 & 0.25 & 0.20 & 0.25 \\                                    
    \midrule                                                                                             
    Stagnation pulse $\rho \ge 8$  & $-$0.05 & $-$0.05 & $\cdot$ & $+$0.10 \\                            
    Stagnation pulse $\rho \ge 16$ & $-$0.10 & $-$0.05 & $+$0.05 & $+$0.10 \\                            
    \bottomrule                                                                                          
  \end{tabular}                                                                                          
  \end{center}                                             After applying the stagnation deltas the ratios are floored at $1/20$ and renormalized; the resulting integer counts (summing to $K=10$) are written verbatim into the prompt (``\textit{use this exact target mix: 4 exploit\_success, 5 near\_success\_variant, 4 avoid\_failure, 7 diversity\_probe}''). The LLM therefore has no degrees of freedom over the intent marginal — its job is to choose \emph{which} tuple goes under each label.
  
  \textbf{Within-batch structural constraints.} After parsing the JSON response, candidates are filtered, in order:
  \begin{enumerate}[leftmargin=*,label=(R\arabic*)]
  \item Duplicate tuples are dropped.
  \item Any \texttt{diversity\_probe} pick must differ in $\geq 3$ positions from every previously accepted pick in the same batch.
  \item Any pair of accepted picks must differ in $\geq 2$ positions, except when both are tagged \texttt{exploit\_success} or \texttt{near\_success\_variant}; these two intents are jointly licensed to cluster. Once a high-$\mu$ peak is identified, GP-UCB benefits from a tight cluster of local refinements rather than forced spread across all exploit-like proposals.
  \end{enumerate}
  These rules force the per-agent batch to span the space at multiple scales while still permitting tight refinement near a recognized peak.
  
  \textbf{Feasibility validation.} Indices outside the agent's allowed set are snapped to the nearest allowed value when a singleton remains; otherwise the candidate is dropped. In non-IID this strictly enforces the solvent lock $x_4 = i$ for agent $i$.
  
  \subsection{Integration with the four-term acquisition}  \label{app:llm:integration}
  With $K_{\text{tot}}$ set to 10 (the value used throughout this paper), the per-agent candidate pool $\mathcal{C}_i^t$ at round $t$ is built as: 
  \begin{enumerate}[leftmargin=*]
  \item Sanitize the LLM batch: drop tuples that are infeasible, already observed by agent $i$, or already in the pool.
  \item Take the surviving LLM picks, up to $K_{\text{LLM}}\!=\!\min(10,|\text{batch}|)$.
  \item Uniformly random-fill from $\mathcal{X}_i\setminus \mathcal{C}_i^t$  until $|\mathcal{C}_i^t| = K_{\text{tot}}$.
  \end{enumerate}
  The pool is then scored by $\mu + \beta\sigma + \lambda G - \gamma\Lambda$ exactly as in non-LLM ADKO; the four-term acquisition is unchanged. The LLM's action is therefore purely \emph{compositional}: it can include points the score would not have surfaced (although in this experiment non-LLM ADKO covers the full unobserved space) and it can omit points the score would have ranked highly. The $\ell$-text payload then propagates each agent's qualitative reading to peers, so the next round's propose calls condition on neighbors' insights, not just their numeric tokens. This is the only channel through which an LLM call at one agent influences selection at another.

  \begin{table}[htpb]
  \small
  \centering
  \begin{tabular}{l r r c}
    \toprule
    \textbf{Intent} & \textbf{Total LLM Proposals} & \textbf{Total LLM Selections} & \textbf{Median Yield [Q1, Q3]} \\
    \midrule
    Exploit Success      & 56,676 (20.0\%) & 3,320 (20.5\%) & 90.07 [81.74, 93.79] \\
    Near Success Variant & 28,338 (10.0\%) & 5,243 (32.4\%) & 85.21 [73.77, 92.03] \\
    Avoid Failure        & 56,677 (20.0\%) & 2,493 (15.4\%) & 75.94 [53.54, 88.59] \\
    Diversity Probe      & 141,690 (50.0\%) & 5,150 (31.8\%) & 72.69 [49.08, 87.49] \\
    \bottomrule
  \end{tabular}
  \caption{Summary of proposed and selected intents, along with the median and interquartile range [Q1, Q3] of their scores, across all seeds of ADKO-LLM from the main experiment.}
  \label{tab:intent_stats}
\end{table}
  
  \subsection{Behavior analysis: proposal vs.\ selection across intents}                                 
  \label{app:llm:table} 
  Table~\ref{tab:intent_stats} aggregates LLM proposal and selection counts across the 40-seed NON-IID-ADKO-LLM sweep, together with the median and interquartile range of the realized yield $y$ for the candidates the score function actually selected from each intent class. Three patterns are worth flagging.
  
  \textbf{Intent labels carry yield signal.}
  Selected medians are monotone in the intent's nominal exploitation level:
  $\texttt{exploit\_success}\,(90.07) \succ
  \texttt{near\_success\_variant}\,(85.21) \succ
  \texttt{avoid\_failure}\,(75.94) \succ
  \texttt{diversity\_probe}\,(72.69)$.
  The score function never sees the intent label — it ranks candidates only through $\mu, \sigma, G, \Lambda$ — yet the points it ends up selecting from each class have realized yields consistent with the LLM's own taxonomy. The labels are not vacuous post-hoc tags; they are predictive of the underlying chemistry the score then confirms.
  
  \textbf{Selection rates differ from proposal rates by a large factor.} The portfolio mix is enforced at proposal time, not at selection time. The score function is free to upweight high-$\mu$ exploit picks when the GP supports them and to ignore diversity probes when no candidate in the pool beats them on $\mu + \beta\sigma + \lambda G - \gamma\Lambda$. Empirically it does so: \texttt{exploit\_success} and \texttt{near\_success\_variant}  together contribute 30\% of proposals but 52.9\% of selections, while \texttt{diversity\_probe} drops from 50\% of proposals to 31.8\% of selections. This is exactly the asymmetry the design intends: the LLM provides \emph{breadth} per round; the four-term score chooses \emph{depth}.
  
  \textbf{Avoid-failure has the lowest selection rate despite a non-trivial median.} Although \texttt{avoid\_failure} candidates have a higher median yield than \texttt{diversity\_probe}, they are selected less often (15.4\% vs.\ 31.8\% of selections). This is consistent with the score function's structure: \texttt{avoid\_failure} picks are by construction far from peer-attractive points (so $\lambda G$ is small for them) and they are not high-$\mu$ exploits either, which leaves $\beta\sigma$ as the only term that could carry them — and in the non-IID pool, $\beta\sigma$ is typically dominated by \texttt{diversity\_probe}-tagged points with larger distance from observed history. \texttt{Avoid\_failure}'s value is therefore indirect: it shapes the LLM's own reasoning and the $\ell$-text channel on subsequent rounds, rather than driving immediate selection.
  
  \textbf{Coupling to the main result.} Table~\ref{tab:intent_stats} supports the interpretation that the LLM acts as a \emph{semantic prior} over the candidate pool: it admits cross-solvent transfers and one-edit refinements that the four privacy-preserving statistics cannot prefer in advance, while pruning candidates that random sampling would have introduced without chemical justification. The four-term score then chooses the actual pick over the resulting LLM-shaped pool, with the proposal-vs-selection asymmetry above as the empirical signature.

\end{document}